\newtheorem{theorem}{Theorem}
\newtheorem{proposition}{Proposition}
\newtheorem{corollary}{Corollary}
\newtheorem{lemma}{Lemma}
\newtheorem{defn}{Definition}
\newcommand\freefootnote[1]{%
\let\thefootnote\relax%
\footnotetext{#1}%
\let\thefootnote\svthefootnote}
\tikzset{%
  do path picture/.style={%
    path picture={%
      \pgfpointdiff{\pgfpointanchor{path picture bounding box}{south west}}%
        {\pgfpointanchor{path picture bounding box}{north east}}%
      \pgfgetlastxy\x\y%
      \tikzset{x=\x/2,y=\y/2}%
      #1
    }
  },
sin wave/.style={do path picture={
  \draw [line cap=round] (-3/4,0)
    sin (-3/8,-1/2) cos (0,0) sin (3/8,1/2) cos (3/4,0);
}},
cos wave/.style={do path picture={
  \draw [line cap=round] (-3/4,-1/2)
    cos (-3/8,0) sin (0,1/2) cos (3/8,0) sin (3/4,-1/2);
}},
  cross/.style={do path picture={
    \draw [line cap=round] (-1,-1) -- (1,1) (-1,1) -- (1,-1);
  }},
  plus/.style={do path picture={
    \draw [line cap=round] (-3/4,0) -- (3/4,0) (0,-3/4) -- (0,3/4);
  }}
}
\title{Plastic Learning with Deep Fourier Features}
\author{%
    Alex Lewandowski$^{\dag}$ \hspace{5mm}
  Dale Schuurmans$^{\dag \ddag \star}$\hspace{5mm}
  Marlos C. Machado$^{\dag \star}$
  \vspace{2mm}\\
  $^{\dag}$Department of Computing Science, University of Alberta,
   $^{\ddag}$Google DeepMind,\\
   $^{\star}$Canada CIFAR AI Chair
}
\begin{document}

\maketitle

\begin{abstract}
  \freefootnote{Correspondence to: Alex Lewandowski <lewandowski@ualberta.ca>.}
  Deep neural networks can struggle to learn continually in the face of non-stationarity.
  This phenomenon is known as loss of plasticity.
  In this paper, we identify underlying principles that lead to plastic algorithms.
  In particular, we provide theoretical results showing that linear function approximation, as well as a special case of deep linear networks, do not suffer from loss of plasticity.
  We then propose \emph{deep Fourier features}, which are the concatenation of a sine and cosine in every layer, and we show that this combination provides a dynamic balance between the trainability obtained through linearity and the effectiveness obtained through the nonlinearity of neural networks.
  Deep networks composed entirely of deep Fourier features are highly trainable and sustain their trainability over the course of learning.
  Our empirical results show that continual learning performance can be drastically improved by replacing \texttt{ReLU} activations with deep Fourier features.
  These results hold for different continual learning scenarios (e.g., label noise, class incremental learning, pixel permutations)
  on all major supervised learning datasets used for continual learning research, such as CIFAR10, CIFAR100, and tiny-ImageNet.
  \looseness=-1
\end{abstract}

\vspace{-5mm}
\section{Introduction}
\vspace{-2mm}

Continual learning is a problem setting that moves
past some of the rigid assumptions found in supervised, semi-supervised, and
unsupervised learning \citep{ring94_contin,thrun98_lifel}.
In particular, the continual learning setting involves learning from data sampled from a changing, non-stationary distribution rather than from a fixed distribution.
A performant continual learning algorithm faces a trade-off due to its limited capacity: it should avoid forgetting what was previously learned while also being able to adapt to new incoming data, an ability known as plasticity \citep{parisi19_contin}.
Current approaches that use neural networks for continual learning are not yet capable of making this trade-off due to catastrophic forgetting \citep{kirkpatrick2017overcoming} and
loss of plasticity \citep{dohare21_contin_backp,lyle23_under_plast_neural_networ,dohare2024loss}.
The training of neural networks is in fact an active research area in the theory literature for supervised learning \citep{jacot18_neural,yang23_spect_condit_featur_learn,kunin24_get}, which suggests there is much left to be understood in training neural networks continually.
Compared to the relatively well-understood problem setting of supervised learning, even the formalization of the continual learning problem
is an active research area \citep{kumar23_contin_learn_comput_const_reinf_learn,abel24,liu23_defin_non_station_bandit}.
With these uncertainties surrounding current practice, we take a step back to better understand the inductive biases used to build algorithms for continual learning.
\looseness=-1

One fundamental capability expected from a continual learning algorithm is its sustained ability to update its predictions on new data.
Recent work has identified the phenomenon of \emph{loss of plasticity} in neural networks in which stochastic gradient-based training becomes less effective when faced with data from a changing, non-stationary distribution \citep{dohare2024loss}.
Several methods have been proposed to address the loss of plasticity in neural networks, with their success demonstrated empirically across both supervised and reinforcement learning
\citep{ash20_warm_start_neural_networ_train,lyle22_under_preven_capac_loss_reinf_learn, lyle23_under_plast_neural_networ, lee24_slow_stead_wins_race}.
Empirically, works have identified that the
plasticity of neural networks is sensitive to different components of the training process,
such as the activation function~\citep{abbas23_loss_plast_contin_deep_reinf_learn}.
However, little is known about what is required for learning with sustained plasticity. \looseness=-1

The goal of this paper is to identify a basic continual learning algorithm that does not lose plasticity in both theory and practice rather than mitigating the loss of plasticity in existing neural network architectures.
In particular, we investigate the effect of the nonlinearity of neural networks on the loss of plasticity.
While loss of plasticity is a well-documented phenomenon in neural networks, previous empirical observations suggest that linear function approximation is capable of learning continually without suffering from loss of plasticity \citep{dohare21_contin_backp,dohare2024loss}.
In this paper, we prove that linear function approximation does not suffer from loss of plasticity and can sustain their learning ability on a sequence of tasks.
We then extend our analysis to a special case of
deep linear networks, which provide an interesting intermediate case between deep nonlinear networks and linear function approximation.
This is because deep linear networks are linear in representation but nonlinear in gradient dynamics~\citep{saxe14_exact}.
We provide theoretical and empirical evidence that general deep linear networks also do not suffer from loss of plasticity.
The plasticity of deep linear networks is surprising because it suggests that, for sustaining plasticity, the nonlinear dynamics of deep linear networks are more similar to the linear dynamics of linear function approximation than they are to the nonlinear dynamics of deep nonlinear networks.

\begin{figure}[t]
  \centering
   \begin{minipage}[b]{0.20\textwidth}
     \centering
\begin{tikzpicture}[scale=0.4]

\tikzstyle{neuron}=[circle, draw, minimum size=0.6cm, inner sep=0pt, outer sep=0pt]
  \tikzstyle{labelneuron}=[neuron, minimum size=0.6cm, text width=0.3cm, align=center]
    \tikzstyle{signal}=[->, thick]

    \foreach \y in {1,...,2}
        \node[labelneuron, fill={rgb, 255: red, 122; green,122;blue,255}] (input-\y) at (0,{4-\y*2 -3}) {$x_{\y}$};

    \foreach \y in {3,...,4}
        \node[labelneuron, fill={rgb, 255: red,216; green,84;blue,67}] (input-\y) at (0,{4-\y*2 -3}) {$x_{\y}$};

    \foreach \y in {1,...,2}
        \node[labelneuron, fill={rgb,255:red,216;green,236;blue,219}] (hidden-\y) at (2.5,{-1-\y*2}) {$z_{\y}$};

    \foreach \y in {1,...,2}
        \node[neuron, fill={rgb, 255: red, 122; green,122;blue,255}, sin wave] (sin-\y) at (5,{-1-\y*2 + 2}) {};

    \foreach \y in {1,...,2}
        \node[neuron, fill={rgb, 255: red,216; green,84;blue,67}, cos wave] (cos-\y) at(5, {-1-\y*2 - 2}) {};

    \foreach \i in {1,...,4}
        \foreach \j in {1,...,2}
            \draw[signal] (input-\i) -- (hidden-\j);

    \foreach \j in {1,...,2}
    {
        \draw[signal] (hidden-\j) -- (sin-\j);
        \draw[signal] (hidden-\j) -- (cos-\j);
    }
\end{tikzpicture}
\vspace{5mm}
\hfill
   \end{minipage}
\includegraphics[width=0.39\linewidth]{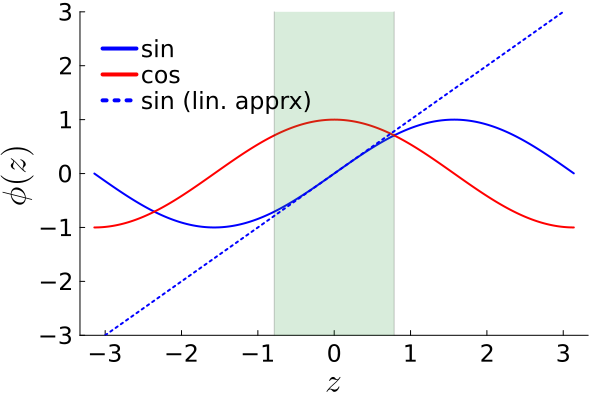}
\includegraphics[width=0.39\linewidth]{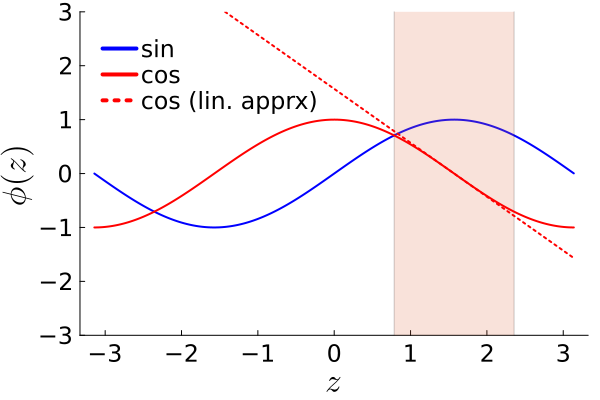}
\caption{
  \textbf{A neural network with deep Fourier features in every layer approximately embeds a deep linear network.} A single layer using deep Fourier features  linearly combines the inputs, $x$, to compute the pre-activations, $z$, and each pre-activation is mapped to both a \texttt{cos} unit and a \texttt{sin} unit (\emph{Left}). For each pre-activation, either the \texttt{sin} unit (\emph{Middle}) or the \texttt{cos} unit (\emph{Right}) is well-approximated by a linear function.
}
  \label{fig:deep_fourier}
\end{figure}

Given this seemingly natural advantage of linearity for continual learning, as well as its inherent limitation to learning only linear representations, we explore how nonlinear networks can better emulate the dynamics of deep linear networks to sustain plasticity.
We hypothesize that, to effectively learn continually, the neural network must balance between introducing too much linearity and suffering from loss of deep representations and introducing too much nonlinearity and suffering from loss of plasticity.
In fact, we show that previous work partially satisfies this hypothesis, such as the concatenated \texttt{ReLU}
\citep{shang16_under}, \texttt{leaky-ReLU} activations
\citep{xu15_empir_evaluat_rectif_activ_convol_networ}, and residual connections \citep{he16_deep_resid_learn_image_recog}, but they fail at striking this balance.
Our results build on previous work that identified issues of unit saturation \citep{abbas23_loss_plast_contin_deep_reinf_learn} and unit linearization \citep{lyle24_disen_causes_plast_loss_neural_networ} as issues in continually training neural networks with common activation functions.
In particular, we generalize these phenomena to \emph{unit sign entropy}.
We show that linear networks have high unit sign entropy, meaning that the sign of a hidden unit on different inputs is positive on approximately half the inputs.
In contrast, deep nonlinear networks with most activation functions tend to have low unit sign entropy, which indicates saturation or linearization.

 Periodic activation functions \citep{parascandolo17_tamin}, like the sinusoid function ($\texttt{sin}$), are 
 a notable exception for having high unit sign entropy despite still suffering from loss of plasticity. Thus, in addition to unit sign entropy, we demonstrate that the network's activation function should be well-approximated by a linear function.
We propose \emph{deep Fourier features} as a means of approximating linearity dynamically, with every pre-activation being connected to two units, one of which will always be well-approximated by a linear function.
In particular, deep Fourier features concatenate a sine and a cosine activation in each hidden layer.
The resulting network is nonlinear while also approximately embedding a deep linear network using all of its parameters.
Deep Fourier features differ from previous approaches that use Fourier features only in the input layer \citep{tancik20_fourier,li21_funct_regul_reinf_learn_learn_fourier_featur,yang22_overc_spect_bias_neural_value_approx} or that use fixed Fourier feature basis \citep{rahimi07_random,konidaris11_value_fourier}.
We demonstrate that networks using these shallow Fourier features still exhibit a loss of plasticity.
Only by using deep Fourier features in every layer is the network capable of sustaining and improving trainability in a continual learning setting.
Using tiny-ImageNet \citep{letiny}, CIFAR10, and CIFAR100 \citep{krizhevsky09_learn}, we show that deep Fourier features can be used as a drop-in replacement for improving trainability in commonly used neural network architectures.
Furthermore, deep Fourier features achieve superior generalization performance when combined with regularization because their trainability allows for much higher regularization strengths. \looseness=-1


\vspace{-2mm}
\section{Problem Setting}
\label{sec:problem}
\vspace{-2mm}

We define a deep network, $f_\theta$ with a a parameter set, $\theta = \{\mathbf{W}_l, \mathbf{b}_l\}_{l=1}^L$, as a sequence of layers, in which each layer applies a linear transformation followed by an element-wise activation function, $\phi$ in each hidden layer. 
The output of the network, $f_\theta(x) := h_L(x)$, is defined recursively by $h_l = [h_{l,1}, \dotso, h_{l,w}] = [\phi(z_{l,1}), \dotso, \phi(z_{l,w})] = \phi(z_l)$, and, $z_l = \mathbf{W}_l h_{l-1} + \mathbf{b}_l$ where $w$ is the width of the network, and $h_0 = x$.
We refer to a particular element of the hidden layer's output $h_{l,i}$ as a unit.
The deep network is a deep linear network when the activation function is the identity, $\phi(z) = z$.
Linear function approximation is equivalent to a linear network with $L = 1$. 

The problem setting that we consider is continual supervised learning without task boundaries.
At each iteration, a minibatch of observation-target
pairs of size $M$, $\{x_{i},y_{i}\}_{i=1}^{M}$, is used to update the parameters $\theta$ of a  neural network $f_{\theta}$ using a variant of stochastic gradient descent.
The learning problem is continual because the distribution from which the data is sampled, $p(x,y)$, is changing.
For simplicity, we assume this non-stationarity changes the distribution over the input-target pairs every $T$ iterations.
The data is sampled from a single distribution for $T$ steps, and we refer to this particular temporary stationary problem as a task, $\tau$. The distribution over observations and targets that defines a task $\tau$ is denoted by
$p_{\tau}$.

We focus our theoretical analysis on the problem of loss of trainability, in which we evaluate the neural network at the end of each task using samples from the most recent task distribution, $p_{\tau}$, as is commonly done in previous work \citep{lyle23_under_plast_neural_networ}.
Loss of trainability refers to the problem where the neural network is unable to sustain its initial performance on the first task to later tasks.
Specifically, we denote the optimisation objective by $J_{\tau}(\theta) = \mathbb{E}_{(x, y) \sim p_{\tau}}\big[\ell(f_{\theta}(x), y)\big],$
 for some loss function $\ell$, and task-specific data distribution $p_{\tau}$.
 We use $t$ to denote the iteration count of the learning algorithm, and thus the current task number can be written as $\tau(t) = \lfloor t / T \rfloor$.


\vspace{-2mm}
\section{Trainability and Linearity}
\vspace{-2mm}

In this section, we show that, unlike nonlinear networks, linear networks do not suffer from loss of trainability.
That is, if the number of iterations in each task is sufficiently large, a linear network sustain
trainability on every task in the sequence.
We then show theoretically that a special case of deep linear networks also does not suffer from loss of trainability, and we empirically validate the theoretical findings in more general settings.
These results provide a theoretical basis for previous work that uses a linear baseline in loss of plasticity experiments.

\subsection{Trainability of Linear Function Approximation}
We first prove that loss of trainability does
not occur with linear function approximation, $f_\theta(x) = \mathbf{W}_l x + \mathbf{b}_l$.
We prove this by showing that any sequence of tasks can be learned with a large enough number of iterations per task.
In particular, the suboptimality gap on the $\tau$-th task can be upper bounded on a quantity that is independent of the solution found on the first $\tau-1$ tasks.
Linear function approximation avoids loss of trainability because the optimisation problem on each task is convex \citep{agrawal21_learn,boyd04_convex}, with a unique global optimum, $\theta_{\tau}^{\star}$.
 We now state the theorem, which we prove in Appendix \ref{app:proofs} \looseness=-1

\begin{theorem}
  \label{thm:linear}
  Let $\theta^{(\tau T)}$ denote the linear weights learned at the end of the $\tau$-th task, with the corresponding unique global minimum for task $\tau$ being denoted by $\theta_{\tau}^{\star}$. Assuming the objective function is $\mu$-strongly convex, the suboptimality gap for gradient descent on the $\tau$-th task is
$$ J_{\tau}(\theta^{(\tau T)}) - J_{\tau}(\theta_{\tau}^{\star}) < \frac{ 2D
(1-\alpha \mu)^T}{\alpha T ( 1 - (1- \alpha \mu)^{T})},$$
where each task lasts for $T$ iteration, $D$ is the assumed bound on the parameters at the global minimum for every task, and $\alpha$ is the step-size. 
\end{theorem}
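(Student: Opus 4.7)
The plan is to exploit the $\mu$-strong convexity of each per-task objective to show that after $T$ gradient steps on task $\tau$, the suboptimality gap is controlled by a quantity depending only on $\alpha$, $\mu$, $T$, and the uniform parameter bound $D$, and in particular is independent of the task index $\tau$ and of the solutions found on earlier tasks. The independence of $\tau$ is what rules out loss of trainability.

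I would proceed in three steps. First, from the standard strong-convexity analysis of gradient descent applied to $J_\tau$, I would derive the per-iteration contraction $\|\theta^{(k+1)} - \theta_\tau^*\|^2 \leq (1-\alpha\mu)\|\theta^{(k)} - \theta_\tau^*\|^2$ and iterate it over the $T$ steps of task $\tau$, obtaining a geometric decay of iterate distance to $\theta_\tau^*$. Second, to recover the geometric-series denominator $1-(1-\alpha\mu)^T$ and the $1/T$ factor in the stated bound, I would sum the per-step strong-convexity inequalities across the task and use the monotone decrease of $J_\tau(\theta^{(k)}) - J_\tau(\theta_\tau^*)$ in $k$ to relate this telescoped sum to the final-iterate suboptimality. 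Third, using the hypothesis $\|\theta_\tau^*\| \leq D$ for every task, I would bound the starting ``distance-gap'' of task $\tau$ uniformly in $\tau$: inductively, the iterate at the end of task $\tau-1$ is close to $\theta_{\tau-1}^*$ (whose norm is at most $D$), and $\|\theta_\tau^*\| \leq D$ as well, so by the triangle inequality $\|\theta^{((\tau-1)T)} - \theta_\tau^*\|$ is bounded by a constant that does not depend on $\tau$. Plugging this uniform bound into the within-task contraction yields an estimate on $J_\tau(\theta^{(\tau T)}) - J_\tau(\theta_\tau^*)$ that does not reference the previous $\tau-1$ tasks at all, which is the content of the theorem.

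The main obstacle is the algebraic bookkeeping needed to land on the precise form of the bound. A naive application of smoothness would give a $D^2$ dependence via $J_\tau(\theta) - J_\tau(\theta_\tau^*) \leq \tfrac{L}{2}\|\theta - \theta_\tau^*\|^2$, whereas the stated bound is linear in $D$; getting this right requires using a Lipschitz-type control of $J_\tau$ over the bounded region that the iterates remain in, rather than quadratic smoothness. The $1/T$ factor, similarly, does not come from the last-iterate contraction alone; it requires amortizing the per-iteration estimates over the whole task and exploiting monotonicity. The conceptual backbone --- strong convexity within a task plus uniform boundedness of optima across tasks yielding a $\tau$-independent bound --- is clean; converting it into the exact stated inequality is where the care is spent.
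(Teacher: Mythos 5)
Your overall skeleton---strong-convexity contraction of the iterates within each task, uniform boundedness of the per-task optima, and an induction across tasks yielding a $\tau$-independent bound---is the paper's strategy, and your steps 1 and 3 correspond to what the proof actually does. The genuine problem is your step 2: you attribute both the $1/T$ factor and the denominator $1-(1-\alpha\mu)^T$ to a within-task amortization of the per-step inequalities, and that route would not produce the stated bound. In the paper, the $\frac{1}{\alpha T}$ comes from a single application of the standard final-iterate bound for gradient descent on the \emph{current} task, $J_\tau(\theta^{(\tau T)}) - J_\tau(\theta_\tau^\star) < \frac{\|\theta_\tau^\star - \theta^{((\tau-1)T)}\|^2}{\alpha T}$. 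The geometric factor then arises entirely from the \emph{cross-task} recursion: the initialization distance $\|\theta_\tau^\star - \theta^{((\tau-1)T)}\|^2$ is split through the previous optima by the triangle inequality, each intervening task contributes a contraction $(1-\alpha\mu)^T$ via the strongly convex iterate recursion, and the accumulated drift $\sum_{k=1}^{\tau}(1-\alpha\mu)^{T(\tau-k)}\|\theta_k^\star-\theta_{k-1}^\star\|^2$ is bounded by a geometric series \emph{over tasks}, which is where $\frac{(1-\alpha\mu)^T}{1-(1-\alpha\mu)^T}$ comes from. A weighted within-task telescoping of the kind you describe gives a bound of the form $\frac{\mu\|\theta^{(0)}-\theta_\tau^\star\|^2(1-\alpha\mu)^T}{2(1-(1-\alpha\mu)^T)}$, with no $1/T$; the unweighted version gives $\frac{\|\theta^{(0)}-\theta_\tau^\star\|^2}{\alpha T}$ with no geometric factor. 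You cannot get both from inside one task; the two factors live at different levels of the argument.

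Your worry about the linear-versus-quadratic dependence on $D$, and the proposed Lipschitz detour, is also misplaced: the paper closes the argument by assuming the \emph{squared} distance between consecutive optima is bounded, $\|\theta_k^\star - \theta_{k-1}^\star\|^2 < 2D$, so $2D$ enters linearly by fiat and no Lipschitz control of $J_\tau$ is invoked anywhere. (If one instead reads $D$ as a bound on $\|\theta_\tau^\star\|_2$, as the theorem statement suggests, the honest constant from the triangle inequality would be $4D^2$; the discrepancy is a convention in the paper, not something your analysis needs to engineer around.) Fixing step 2 to use the one-shot $\frac{1}{\alpha T}$ bound on the final task and moving the geometric summation into your step 3's induction recovers the paper's proof.
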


In addition to convexity, we assume that the objective function is $\mu$-strongly convex, $\nabla_{\theta}^{2}J_{\tau}(\theta) \succ \mu \mathbf{I}$, where $\nabla_{\theta}^{2}J_{\tau}(\theta)$ denotes the Hessian.
This assumption is often satisfied in the continual learning problem outlined in Section \ref{sec:problem} (see Appendix \ref{app:assumption} for more discussion).
Lastly, we assume that the parameters at the global optimum for every task are bounded: $\|\theta_\tau\|_2 < D$. This is true for regression problems if the observations and targets are bounded. In classification tasks, the global optimum can be at infinity because activation functions such as the sigmoid and the softmax are maximized at infinity. In this case, we constrain the parameter set, $\{\theta : \|\theta\|_2 < D\}$, and project the optimum onto this set.
Intuitively, this theorem states that if the problem is bounded and effectively strongly convex due to a finite number of iterations, then the optimisation dynamics are well-behaved for every task in the bounded set. In particular, this means that the error on each task can be upper bounded by a quantity independent of the initialization found on previous tasks. Thus, given enough iterations, linear function approximation can learn continually without loss of trainability.

\subsection{Trainability of Deep Linear Networks}
We now provide evidence that, similar to
linear function approximation, deep linear networks also do not suffer from loss of trainability.
Deep linear networks differ from deep nonlinear
networks by not using nonlinear activation functions in their hidden layers \citep{bernacchia18_exact,ziyin22_exact}.
This means that a deep linear network can only represent linear functions. At the same time, its
gradient update dynamics are nonlinear and non-convex, similar to deep nonlinear neural networks
\citep{saxe14_exact}.
Our central claim here is that \emph{deep linear networks under gradient descent dynamics avoid parameter
  configurations that would lead to loss of trainability}.

To simplify notation, without loss of generality, we combine the weights and biases into a single parameter for each layer in the deep linear network , $\theta = \{\theta_{1},\dotso,\theta_{L}\}$, and $f_{\theta}(x) = \theta_{L}\theta_{L-1}\cdots \theta_{1}x$.  We denote the product of weight matrices, or simply product matrix, as
$\bar \theta = \theta_{L}\theta_{L-1}\cdots \theta_{1}$, which allows us to write the deep linear network in terms of the product matrix:
$f_{\theta}(x) = \bar \theta x$.
The problem setup we use for the deep linear analysis follows previous work \citep{huh20_curvat}, and we provide additional technical details for optimisation dynamics of deep linear networks in Appendix \ref{app:deeplin_details}.

The gradient of the loss function with respect to the parameters of a deep linear network can be written in terms of the gradient with respect to the product matrix $\bar \theta$ \citep{bah22_learn}:
$$\nabla_{\theta_{j}}J(\theta) =  \theta_{j+1}^{\top}
\theta_{j+2}^{\top}\cdots \theta_{L}^{\top} \nabla_{\bar \theta} J(\bar \theta) \theta_{1}^{\top}
\theta_{2}^{\top}
\cdots
\theta_{j-1}^{\top},
$$
where the term $\nabla_{\bar \theta} J(\bar \theta)$ is the gradient of the loss with respect to the product matrix, treating it as if it was linear function approximation. The gradient is nonlinear because of the coupling between the gradient of the parameter at one layer and the value of the parameters of the other layers.
Nevertheless, the gradient dynamics of the individual parameters can be combined to yield the dynamics of the product matrix \citep{arora18},
$$\bar\nabla_{\theta}J(\theta) = P_{\bar \theta} \nabla_{\bar \theta} J(\bar \theta).$$
The dynamics involve a preconditioner, $P_{\bar \theta}$, that accelerates optimisation \citep{arora18}, which we empirically demonstrate in Section \ref{sec:lin_sep}.
On the left-hand side of the equation, we use $\bar\nabla_{\theta}J(\theta)$ to denote the combined dynamics of the gradients for each layer on the dynamics of the product matrix.\footnote{Note we use $\bar\nabla$ because $\bar \nabla J(\theta)$ is not a gradient for any function of $\bar \theta$; see discussion by \cite{arora18}.}
This means that the effective gradient dynamics of the deep network is related to the dynamics of linear function approximation with a precondition.
While the dynamics are nonlinear and non-convex, the overall dynamics are remarkably similar to that of linear function approximation, which is convex.
 \looseness=-1

 We now provide evidence to suggest that, despite deep linear networks being nonlinear in their gradient dynamics, they do not suffer from loss of trainability. We prove this for a special case of deep diagonal linear networks, and provide empirical evidence to support this claim in general deep linear networks. \looseness=-1

\begin{theorem}
  \label{thm:deeplin}
  Let $f_{\theta}(x) = \theta_{L}\theta_{L-1}\cdots \theta_{1}x$ be a deep diagonal linear network where $\theta_{l} = \text{Diag}(\theta_{l,1},\dotso, \theta_{l,d})$.
  Then, a deep diagonal linear network converges on a sequence of tasks under the same conditions for convergence in a single task \citep[i.e., the conditions in][]{arora19_conver_analy_gradien_descen_deep}.
\end{theorem}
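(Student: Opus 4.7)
The plan is to exploit the diagonal structure to decouple the $d$-dimensional problem into $d$ scalar deep linear network problems, and then apply the single-task convergence result of Arora et al.\ inductively across the task sequence. First I would observe that if every $\theta_{l}$ is diagonal, then so is the product matrix $\bar\theta$, with entries $\bar\theta_{i} = \prod_{l=1}^{L} \theta_{l,i}$. For a separable loss such as the squared loss against a target, the gradient with respect to the parameters $\{\theta_{l,i}\}_{l}$ of coordinate $i$ depends only on $\bar\theta_{i}$ and on the $i$-th component of the task-specific target, so the dynamics across coordinates are completely decoupled. The $d$-dimensional problem therefore reduces to $d$ independent scalar deep linear network convergence problems, each of which is precisely the setting covered by Arora et al.

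Next I would show that the structural condition required by Arora et al.\ --- approximate balancedness of consecutive layer weights --- is preserved across tasks. The quantity $\theta_{l+1}^{\top}\theta_{l+1} - \theta_{l}\theta_{l}^{\top}$ is a well-known conserved quantity of gradient flow on deep linear networks, and in the diagonal case it reduces to the per-coordinate invariant $\theta_{l+1,i}^{2} - \theta_{l,i}^{2}$. Consequently, if the initialization at task $1$ is balanced, the parameters at the end of every task continue to satisfy balance, so balance is automatically a valid initialization condition for every subsequent task. Combined with the decoupling, convergence follows by induction on $\tau$: if the deficiency-margin condition of Arora et al.\ holds at the start of task $\tau$, their theorem gives $\bar\theta^{(\tau T)} \to \bar\theta^{\star}_{\tau}$, and this configuration then serves as a balanced, non-degenerate initialization for task $\tau+1$ under the very same assumption invoked in the single-task statement.

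The main obstacle will be handling sign changes in the scalar factor $\bar\theta_{i}$ between consecutive tasks. If $\bar\theta^{\star}_{\tau,i}$ and $\bar\theta^{\star}_{\tau+1,i}$ have opposite signs, then $\bar\theta_{i}$ must pass through zero, at which point the balance relation forces each $\theta_{l,i}=0$ and the scalar dynamics in that coordinate stall --- a genuine failure of trainability for diagonal networks. Avoiding this requires leveraging the conserved gap: by choosing initializations with $\theta_{l+1,i}^{2} - \theta_{l,i}^{2} \neq 0$, the trajectory is confined to a hyperbolic branch that never reaches the singular submanifold, so signs of the $\theta_{l,i}$ are frozen in time and the dynamics are smooth for every task. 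This is exactly the content of the qualifier ``under the same conditions for convergence in a single task'': Arora et al.'s deficiency margin already excludes initializations that are pathologically close to the singular set, and the conserved quantity ensures this property is inherited from one task to the next. Packaging these three observations --- decoupling, preservation of balance, and preservation of the deficiency-margin condition through the conserved gap --- then yields the theorem by induction on $\tau$.
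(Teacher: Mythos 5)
Your overall strategy matches the paper's: exploit diagonality to reduce to $d$ decoupled scalar products $\bar\theta_{i} = \prod_{l}\theta_{l,i}$, argue that the product matrix never becomes degenerate so that $\sigma_{\min}(\bar\theta) > 0$ is maintained, and invoke the single-task convergence conditions of Arora et al.\ task by task. You also correctly identify the crux: a coordinate whose optimum changes sign between tasks forces $\bar\theta_{i}$ toward zero, and a fully balanced network stalls there because all layers vanish simultaneously. The paper handles this with two discrete-time lemmas: (i) two layer parameters $\theta_{l,i}$ and $\theta_{l',i}$ remain distinct for all time iff they are initialized distinctly, and (ii) the product $\bar\theta_{i}$ can remain at zero across consecutive gradient steps only if two or more layers of that coordinate are simultaneously zero. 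Together these show that, from a generic initialization, $\bar\theta_{i}$ may touch zero but cannot get stuck there.

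Your proposed mechanism for excluding degeneracy does not work as stated. A nonzero conserved gap $\theta_{l+1,i}^{2} - \theta_{l,i}^{2} = c_{l} \neq 0$ confines $(\theta_{l,i}, \theta_{l+1,i})$ to a hyperbola avoiding the origin of that two-dimensional slice, but it does not keep the trajectory off the singular set $\{\bar\theta_{i} = 0\}$: on either branch of the hyperbola the smaller coordinate still ranges over all of $\mathbb{R}$, so a single layer can and does cross zero, at which point $\bar\theta_{i} = 0$ and that layer's sign flips. Hence neither ``the trajectory never reaches the singular submanifold'' nor ``the signs of the $\theta_{l,i}$ are frozen'' is true. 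What you actually need is that no two layers of a coordinate vanish simultaneously, so that $\bar\theta_{i}$ can only hit zero when exactly one layer is zero --- and there that layer's gradient, $-\alpha\,\nabla_{f_\theta}\ell \cdot \prod_{j\neq l}\theta_{j,i}\,x$, is nonzero, so the product passes through zero rather than stalling. That is precisely the content of the paper's Lemma \ref{lem:unique}, which your write-up assumes implicitly but never proves. Two smaller issues: you first require balanced initialization so that balance is preserved across tasks, then require the gap to be nonzero to avoid degeneracy; these are in tension and you need ``small but nonzero.'' And exact conservation of the gap is a gradient-flow fact, whereas the theorem concerns discrete gradient descent, where it is only approximately conserved --- the paper's equality-preservation Lemma \ref{lem:equality} is the discrete-time substitute your argument would need.
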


Theorem \ref{thm:deeplin} states that a deep diagonal linear network, a special case of general deep linear networks, can converge to a solution on each task within a sequence of tasks. The proof, provided in Appendix
\ref{app:proofs}, shows that the minimum singular value of the product matrix stays greater than zero, $\sigma_{min}(\bar \theta) > 0$.
Hence, deep diagonal linear networks do not suffer from loss of trainability.
This result provides further evidence suggesting that linearity might be an effective inductive bias for learning continually.

While the analysis considers a special case of deep linear networks, namely deep diagonal networks, we note that this is a common setting for the analysis of deep linear networks more generally \citep{nacson22_implic_bias_step_size_linear,even23_s_gd_diagon_linear_networ}.
In particular, the analysis is motivated by the fact that, under certain conditions, the evolution of the deep linear network parameters can be analyzed through the independent singular mode dynamics \citep{braun2022exact}, which simplifies the analysis of deep linear networks to deep diagonal linear networks.

\subsection{Empirical Evidence For Trainability of General Deep Linear Networks}
\label{sec:lin_sep}

\begin{wrapfigure}{r}{0.5\textwidth}
  \centering
    \vspace{-0.6cm} \includegraphics[width=0.99\linewidth]{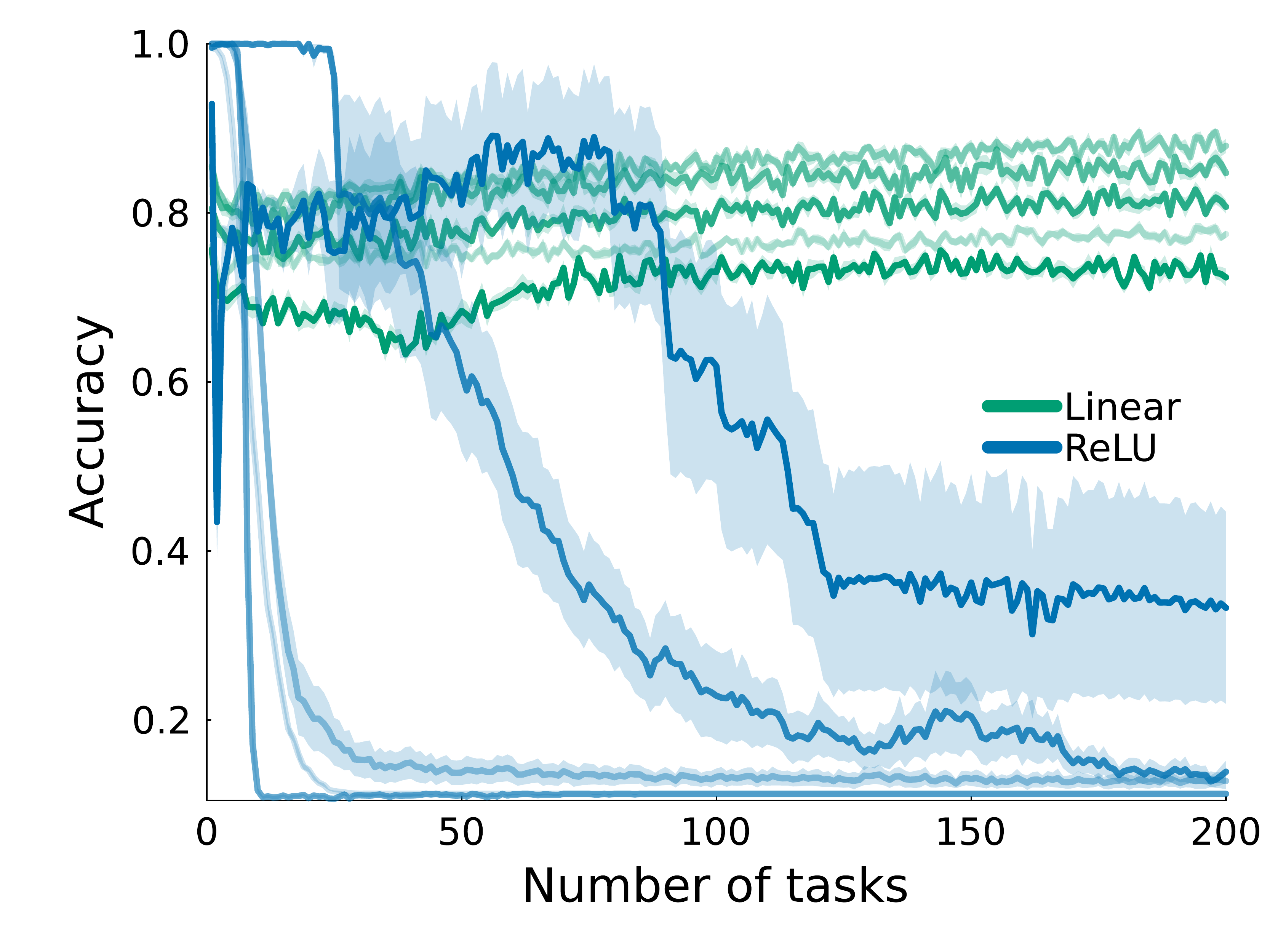}
    \vspace{-0.7cm}
    \caption{
     \textbf{Trainability on a linearly separable task.}
       The higher opacity corresponds to deeper networks, ranging from \{1, 2, 4, 8, 16\}.
       Deep linear networks sustain trainability on new tasks,
       with some additional depth improving trainability.
       Nonlinear networks, using \texttt{ReLU}, suffer from loss of trainability at any depth even on this simple sequence of linearly separable  problems.
    }
  \label{fig:linearly_sep}
\end{wrapfigure}

In the previous section, we proved that a special case of deep linear networks do not suffer from loss of trainability.
We now provide additional empirical evidence that general deep linear networks do not suffer from loss of trainability.
To do so, we use a linearly separable subset of the MNIST dataset
\citep{lecun98_mnist}, in which the labels of each image are randomized every 100 epochs.
For this experiment, the data is linearly separable so that even a linear baseline can fit the data if given enough iterations.
While MNIST is a simple classification problem, memorizing random
labels highlights the difficulties associated with maintaining trainability
\citep[see][]{lyle23_under_plast_neural_networ,kumar23_maint_plast_regen_regul}.
We emphasize that
the goal here is merely to validate that linear networks remain trainable in continual learning.
We also provide results with traditional nonlinear neural networks on the same problem, showing that they suffer from loss of trainability in this simple problem.
Later in Section~\ref{sec:main_exp}, we extend our investigation of loss of
trainability to larger-scale benchmarks.
\looseness=-1

In Figure \ref{fig:linearly_sep}, we see that deep linear networks ranging from a depth of $1$ to $16$ can sustain trainability. 
Using a multi-layer perceptron with \texttt{ReLU} activations, deep nonlinear networks quickly reach a much higher accuracy on the first few tasks.
However, due to loss of trainability,  deep nonlinear networks of any depth eventually perform worse than the corresponding deep linear network.
With additional epochs, the linear networks could achieve perfect accuracy on this task because it is linear separable.
The number of epochs is comparatively low to showcase that, with some additional layers, a deep linear network is able to improve its trainability as new tasks are encountered.
\looseness=-1


\vspace{-3mm}
\section{Combining Linearity and Nonlinearity}
\vspace{-3mm}

In the previous section, we provided empirical and theoretical evidence that linearity provides an effective inductive bias for learning continually by avoiding loss of trainability.
However, linear methods are generally not as performant as deep nonlinear networks, meaning that their sustained performance can be inadequate on complex tasks.
Even deep linear networks have only linear representational power, despite the fact that the gradient dynamics are nonlinear and can lead to accelerated learning.
We now seek to answer the following question:
\begin{center}
  \emph{
    How can the sustained trainability of linear methods be combined with\\ the expressive power of learned nonlinear representations?
    }
\end{center}
To answer this question, we first seek to better understand the effects of replacing linear activation functions with nonlinear ones in deep networks for continual learning.
We observe that deep linear networks have diversity in their hidden units, which can be induced in nonlinear activation functions by adding linearity through a weighted linear component, an idea we refer to as \emph{$\alpha$-linearization}.
To dynamically balance linearity and nonlinearity, we propose to use \emph{deep Fourier features} for every layer in a network.
We prove that such a network approximately embeds a deep linear network, a property we refer to as \emph{adaptive linearity}.
We demonstrate that this adaptively-linear network is plastic, maintaining trainability even on non-linearly-separable problems.
\looseness=-1

\subsection{Adding Linearity to Nonlinear Activation Functions}

\label{sec:alpha_linear}

Deep nonlinear networks can learn expressive representations because of their nonlinear activation function, but these nonlinearities can also lead to issues with trainability.
Although several components of common network architectures incorporate linearity, the way in which linearity is used does not avoid loss of trainability.
One example is the piecewise linearity of the ReLU activation function \citep{shang16_under}, $\texttt{ReLU}(x) = \max(0, x)$, that can become \emph{saturated} and prevent gradient propagation if $\texttt{ReLU}(x) = 0$ for most inputs $x$.
While saturation is generally not a problem for learning on a single distribution, it has been noted as problematic in learning from changing distributions, for example, in reinforcement learning \citep{abbas23_loss_plast_contin_deep_reinf_learn}.
\looseness=-1

A potential solution to saturation is to use a non-saturating activation function.
Two noteworthy examples of non-saturating activation functions include a periodic activation like $\texttt{sin}(x)$ \citep{parascandolo17_tamin} and
$\texttt{leaky-ReLU}_{\alpha}(x) = \alpha x + (1-\alpha)\texttt{ReLU}(x)$ \citep{xu15_empir_evaluat_rectif_activ_convol_networ}, both of which are zero on a set of measure zero.
Surprisingly, using \texttt{leaky-ReLU} leads to a related issue, ``unit linearization'' \citep{lyle24_disen_causes_plast_loss_neural_networ}, in which the activation is only positive (or negative) 
Unlike saturated units, linearized units can provide non-zero gradients but render that unit effectively linear, limiting the expressive power of the learned representation.
While unit linearization seems to suggest that loss of trainability can occur due to linearity, it is important to note that a ``linearized unit'' is not the same as a linear unit.
This is because a linearized unit provides mostly positive (or negative) outputs, whereas a linear unit can output both positive and negative values.

We generalize the idea behind unit saturation and unit linearization to \emph{unit sign entropy}, which is a metric applicable to activation functions beyond saturating and piecewise linear functions, such as periodic activation functions.
Intuitively, it measures the diversity of the activations of a hidden layer.
\looseness=-1
\begin{defn}[Unit Sign Entropy]
  The entropy, $\mathbb{H}$, of the unit's sign, $\text{sgn}(h(x))$, on a distribution of inputs to the network, $p(x)$, is given by $\mathbb{H}\left(\text{sgn}(h(x))\right) = \mathbb{E}_{p(x)}\left[\text{sgn}(h(x))\right]$.
\end{defn}
The maximum value of unit sign entropy is 1, which occurs when the unit is positive on half the inputs.
Conversely, a low sign entropy is associated with the aforementioned issues of saturation and linearization.
For example, a low sign entropy for a deep network using $\texttt{ReLU}$ activations means that the unit is almost always positive ($P\left(\text{sgn}(h(x)) = 1\right) = 1$, meaning it is linearized) or negative ($P\left(\text{sgn}(h(x)) = 1 \right) = 0$, meaning it is saturated).

With unit sign entropy, we investigate how the leak parameter for the \texttt{leaky-ReLU} activation function influences training as pure linearity $(\alpha = 1)$ is traded-off for pure nonlinearity $(\alpha = 0)$.
The idea of mixing a linearity and nonlinearity can also be generalized to an arbitrary activation function, which we refer to as the $\alpha$-linearization of an activation function.

\begin{wrapfigure}{r}{0.4\textwidth}
  \vspace{-10mm}
  \centering
\includegraphics[width=0.99\linewidth]{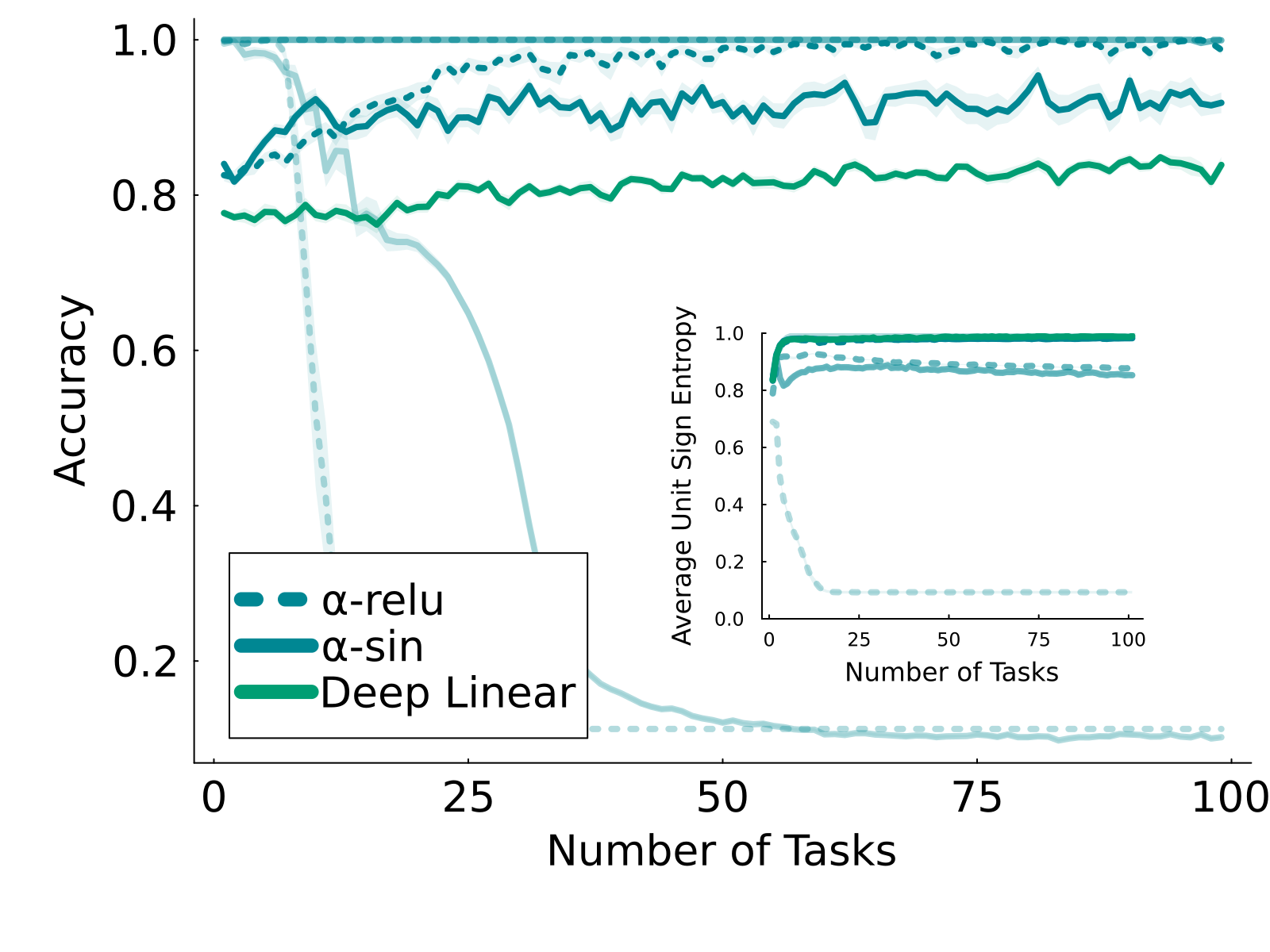}
\caption{
  \textbf{Trainability on a linearly separable task with $\alpha$-linearization}
  Darker opacity lines correspond to higher values of $\alpha$.
  Unit sign entropy increases as $\alpha$ increases (\emph{inset}), leading to sustained trainability for $\alpha$-relu.
    }
  \label{fig:alpha_linear}
  \vspace{-8mm}
\end{wrapfigure}
\begin{defn}[$\alpha$-linearization]
 The $\alpha$-linearization of an activation function $\phi$, is denoted by $\phi_{\alpha}(x) = \alpha x + (1-\alpha)\phi(x)$.
\end{defn}

A natural hypothesis is that, as $\alpha$ increases from $0$ to $1$, and the network becomes more linear, loss of trainability is mitigated.
We emphasize that the $\alpha$-linearization is primarily to gain insights from empirical investigation and it is not a solution to loss of trainability.
This is because any benefits of $\alpha$-linearization depend on tuning $\alpha$, and even optimal tuning can lead to overly linear representations and slow training compared to nonlinear networks.

\paragraph{Empirical Evidence for $\alpha$-linear Plasticity}
To understand the trainability issues introduced by nonlinearity, we present a case-study using \texttt{sin} and \texttt{ReLU} with different values of the linearization parameter, $\alpha$. The same experiment setup is used from Section \ref{sec:lin_sep}.
Referring to the results in Figure \ref{fig:alpha_linear}, we see that both \texttt{ReLU} and \texttt{sin} activation functions are able to sustain trainability for larger values of $\alpha$. This verifies the hypothesis: a larger $\alpha$ provides more linearity to the network, allowing it to sustain trainability.
For $\alpha$-\texttt{ReLU}, we also verify the hypothesis that the unit sign entropy increases for larger values of $\alpha$ (inset plot).
The fact that the periodic \texttt{sin} activation function has a high unit sign entropy despite losing trainability is particularly interesting, and we will return to this in Section \ref{sec:deep_fourier}.
Note that, while trainability can be sustained, it is generally lower than the nonlinear networks for a large values of $\alpha$.

\subsection{Adaptive-linearity by Concatenating Sinusoid Activation Functions}
\label{sec:deep_fourier}

Using the insight that linearity promotes unit sign entropy, we explore an alternative approach to sustain trainability.
In particular, we found that linearity can sustain trainability but requires tuning $\alpha$, and even optimal tuning can lead to slow learning from overly linear representations.
Our
approach is motivated by concatenated \texttt{ReLU} activations
\citep{shang16_under,abbas23_loss_plast_contin_deep_reinf_learn}, $\texttt{CReLU}(z)  = [\texttt{ReLU}(z), \texttt{ReLU}(-z)]$,
 which avoids the problems from saturated units, but does not avoid the problem of low unit sign entropy.
In particular, we propose using a pair of activations
functions such that one activation function is always approximately linear, with a bounded error.

One way to dynamically balance the linearities and nonlinearities of a
network is using periodic activation functions.
This is because, due to their periodicity, the properties of the activation function can re-occur as the magnitude of the preactivations grows rather than staying constant, linear, or saturating.
But, as we saw in Figure \ref{fig:alpha_linear}, a single periodic activation function like \texttt{sin} is not enough.
Instead, we propose to use deep Fourier features, meaning that every layer in the
network uses Fourier features.
This is a notable departure from previous work which considers only shallow Fourier features in the first layer \citep{rahimi07_random, tancik20_fourier}.
In particular, each unit is a concatenation of a sinusoid basis of two elements on the same pre-activation, ${\texttt{Fourier}(z) = \left[ \sin(z), \cos(z)\right]}$.
The advantage of this approach is that a network with deep Fourier features maintains approximate linearity in some of its units.\looseness=-1
\begin{proposition}
  \label{prop:adalin}
  For any $z$, there exists a linear function, $\texttt{L}_{z}(x) = a(z)x + b(z)$, such that either:
$|\sin(x) - \texttt{L}_{z}(x)| \leq c$, or $|\cos(x) - \texttt{L}_{z}(x)| \leq c$, for $c = \nicefrac{\sqrt{2}\pi^{2}}{2^{8}}$ and all $x \in \left[z - \nicefrac{\pi}{4}, z + \nicefrac{\pi}{4}\right]$.
\end{proposition}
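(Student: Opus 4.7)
The plan is to take $L_z$ to be the tangent line at $z$ of whichever of $\sin$ or $\cos$ has smaller magnitude there, and then to bound the resulting approximation error via the angle-addition identity. The key structural fact is the Pythagorean identity $\sin^{2}(z) + \cos^{2}(z) = 1$, which forces $\min(|\sin(z)|,|\cos(z)|) \leq 1/\sqrt{2}$. Without loss of generality I assume $|\sin(z)| \leq 1/\sqrt{2}$; the symmetric case follows from $\cos(x) = \sin(x + \pi/2)$, which merely shifts $z$ by $\pi/2$ while leaving the window width invariant.

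For the assumed case, take $L_{z}(x) := \sin(z) + \cos(z)(x-z)$, the first-order Taylor polynomial of $\sin$ at $z$. Writing $h = x - z$ with $|h| \leq \pi/4$, the angle-addition formula $\sin(z+h) = \sin(z)\cos(h) + \cos(z)\sin(h)$ gives the clean residual
$$\sin(x) - L_{z}(x) \;=\; \sin(z)\bigl[\cos(h) - 1\bigr] \;+\; \cos(z)\bigl[\sin(h) - h\bigr].$$
I would then apply the elementary inequalities $|1 - \cos(h)| \leq h^{2}/2$ (using the exact identity $1 - \cos(h) = 2\sin^{2}(h/2)$) and $|\sin(h) - h| \leq |h|^{3}/6$, together with $|\sin(z)| \leq 1/\sqrt{2}$ and $|\cos(z)| \leq 1$. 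The dominant quadratic contribution $\tfrac{|\sin(z)|}{2}\,h^{2}$ is maximized at $|h| = \pi/4$ with value at most $\tfrac{1}{\sqrt{2}}\cdot\tfrac{\pi^{2}}{32}$, and the cubic term contributes a lower-order correction of order $\pi^{3}/384$.

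The main obstacle I anticipate is matching the stated constant $c = \sqrt{2}\pi^{2}/2^{8}$ exactly, since the naive tangent-line bound above produces $\sqrt{2}\pi^{2}/2^{6}$, which is a factor of four larger. I would close this either by replacing the tangent fit with a midpoint-centred or Chebyshev-equioscillating linear approximation, which typically saves a factor of two on symmetric intervals, or by a finer worst-case analysis that exploits sign cancellation between the $\sin(z)[\cos h - 1]$ and $\cos(z)[\sin h - h]$ terms precisely where $|\sin(z)|$ attains its extremal value $1/\sqrt{2}$. Everything beyond this constant-tracking step reduces to routine monotonicity and bound checks for elementary trigonometric functions on $[-\pi/4,\pi/4]$, so the only nontrivial content of the proof is the Pythagorean reduction, the angle-addition rearrangement, and the bookkeeping that lands on the precise power of two in $c$.
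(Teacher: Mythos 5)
Your setup is sound and, structurally, more careful than the paper's own argument: you control the error over the whole window $[z-\pi/4,\,z+\pi/4]$ via the angle-addition residual $\sin(z)[\cos h-1]+\cos(z)[\sin h-h]$, whereas the paper Taylor-expands about the nearby critical point of the \emph{other} sinusoid and in effect only bounds the remainder at the window's centre. The Pythagorean reduction to $|\sin(z)|\le 1/\sqrt{2}$ and the shift $\cos(x)=\sin(x+\pi/2)$ are exactly the right moves. One accounting issue before the main point: your ``naive bound'' $\sqrt{2}\pi^{2}/2^{6}\approx 0.218$ drops the cubic term $|\cos(z)|\,|h|^{3}/6\le \pi^{3}/384\approx 0.081$, which is not lower-order at $|h|=\pi/4$; the tangent-line argument honestly yields about $0.30$, so the factor you must recover is closer to $5.5$ than to $4$.

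The genuine gap is the deferred constant-tracking step: it cannot be closed, by any choice of linear function, because the stated constant is unattainable. Take $z=\pi/4$, so the window is $[0,\pi/2]$, on which $\sin$ and $\cos$ are mirror images and hence equally hard to approximate. The Chebyshev best linear approximation of $\sin$ on $[0,\pi/2]$ has slope $2/\pi$ and uniform error $\tfrac{1}{2}\bigl(\sqrt{1-4/\pi^{2}}-\tfrac{2}{\pi}\arccos\tfrac{2}{\pi}\bigr)\approx 0.1053$, which exceeds the claimed $c=\sqrt{2}\pi^{2}/2^{8}\approx 0.0545$. So neither an equioscillating fit nor sign cancellation will land on the claimed power of two; the statement is only true with a larger constant (any $c\ge 0.1053$ suffices, and $c=\sqrt{2}\pi^{2}/2^{7}\approx 0.109$ is the natural corrected value). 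You should therefore prove the proposition with a corrected constant rather than promise bookkeeping that cannot succeed. For what it is worth, the paper's own proof reaches $\sqrt{2}\pi^{2}/2^{8}$ only through an arithmetic slip (writing $\tfrac{1}{8\sqrt{2}}$ where its own remainder formula gives $\tfrac{1}{2\sqrt{2}}$) and by evaluating the Taylor remainder only at the centre of the window rather than over all $x\in[z-\pi/4,\,z+\pi/4]$, so your decomposition is the one to keep --- it is the version that exposes where the constant actually comes from.
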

An intuitive description of this is provided in Figure \ref{fig:deep_fourier}.
The advantage of using two sinusoids over just a single sinusoid is that whenever $\cos(z)$ is near a critical point, $\nicefrac{d}{dz} \cos(z) \approx 0$, we have that $\sin(z) \approx z$, meaning that $\nicefrac{d}{dz} \sin(z) \approx 1$ (and vice-versa).
The argument follows from an analysis of the Taylor series remainder, showing that the Taylor series of half the units in a deep Fourier layer can be approximated by a linear function, with a small error of $c = \nicefrac{\sqrt{2}\pi^{2}}{2^{8}} \approx 0.05$.
While we found that two sinusoids is sufficient, the approximation error can be further improved by concatenating additional sinusoids, at the expense of reducing the effective width of the layer.

Because each pre-activation is connected to a unit that is approximately linear, we can conclude that a deep network comprised of deep Fourier features approximately embeds a deep linear network. \looseness=-1

\begin{corollary}
  \label{cor:adalin}
 A network parameterized by $\theta$, with deep Fourier features, approximately embeds a deep linear network parameterized by $\theta$ with a bounded error.
 \end{corollary}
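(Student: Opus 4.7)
The plan is to iterate Proposition \ref{prop:adalin} across every unit of every layer and then propagate the resulting per-unit error through the depth of the network.

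First, I would fix an input $x$ and trace the forward pass through the deep Fourier network to record each pre-activation $z_{l,i}$. Proposition \ref{prop:adalin} furnishes, at each $z_{l,i}$, a linear surrogate $\texttt{L}_{z_{l,i}}(u) = a_{l,i} u + b_{l,i}$ that matches either $\sin(z_{l,i})$ or $\cos(z_{l,i})$ to within $c = \nicefrac{\sqrt{2}\pi^2}{2^8}$. The embedded deep linear network $\tilde f_\theta$ is defined by keeping the same base weights and biases $\{\mathbf{W}_l, \mathbf{b}_l\}$ as the Fourier network and replacing, in each pair, the approximately-linear sinusoid by its surrogate $\texttt{L}_{z_{l,i}}$. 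Since the per-unit scalars $(a_{l,i}, b_{l,i})$ can be folded into the effective weights and biases of the subsequent layer, the resulting $\tilde f_\theta$ is a genuine deep linear function of $x$ whose base parameters $\theta$ coincide with those of the Fourier network.

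Second, I would propagate the approximation error through the depth. Let $\Delta_l := \|h_l - \tilde h_l\|_2$ denote the layer-$l$ deviation between the true hidden representation and the embedded one. Each sinusoid is $1$-Lipschitz, each linear surrogate has slope $|a_{l,i}| \leq 1$ (since $a_{l,i}$ is a value of $\pm\sin$ or $\pm\cos$), and each approximated unit contributes an additive error of at most $c$. Combining these gives a linear recursion of the form $\Delta_{l+1} \leq \|\mathbf{W}_{l+1}\|_2 \Delta_l + c\sqrt{w}$. Unrolling across the $L$ layers yields $\Delta_L \leq c\sqrt{w}\sum_{l=1}^{L}\prod_{k=l+1}^{L}\|\mathbf{W}_k\|_2$, a finite bound depending only on $\theta$, the depth $L$, the width $w$, and the universal constant $c$.

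The main obstacle is conceptual rather than calculational: making precise what it means for the embedded network to be ``parameterized by $\theta$,'' since the linear surrogates $(a_{l,i}, b_{l,i})$ depend on the local pre-activations and hence on the input. I would resolve this by interpreting the embedding as input-adaptive, in line with the authors' own notion of \emph{adaptive linearity}: the base affine maps $\{\mathbf{W}_l, \mathbf{b}_l\}$ are shared between the two networks, while the per-unit linearization scalars move with the current pre-activations. A secondary technicality concerns the companion sinusoid in each Fourier pair that is not approximately linear; this can be handled by absorbing its bounded output (in $[-1,1]$) into the effective bias of the next layer, which only inflates the constant in the error bound above without changing its form.
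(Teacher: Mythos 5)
Your proof is correct, but it takes a different route from the paper's. The paper argues by induction on depth: the base case invokes Proposition \ref{prop:adalin} to find one approximately-linear unit per pre-activation, and the inductive step splits the units of a newly added layer into the approximately-linear ones (which compose with the embedded linear network, linearity being closed under composition) and the remaining ones (which are treated as extra inputs whose associated parameters are already accounted for). You instead give a direct construction --- replace each approximately-linear sinusoid by its surrogate $\texttt{L}_{z_{l,i}}$ and fold the scalars $(a_{l,i},b_{l,i})$ into the next layer's affine map --- followed by an explicit error-propagation recursion $\Delta_{l+1} \leq \|\mathbf{W}_{l+1}\|_2\,\Delta_l + c\sqrt{w}$, unrolled to $\Delta_L \leq c\sqrt{w}\sum_{l=1}^{L}\prod_{k=l+1}^{L}\|\mathbf{W}_k\|_2$. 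Your version buys something the paper's proof leaves implicit: an actual quantitative form of the ``bounded error,'' showing how the per-unit constant $c$ from Proposition \ref{prop:adalin} compounds through depth via the operator norms of the weights (and your decomposition $|\phi(z)-\texttt{L}(\tilde z)| \leq |\phi(z)-\texttt{L}(z)| + |a|\,|z-\tilde z|$ correctly sidesteps the worry that the perturbed pre-activations might leave the $\pm\pi/4$ window, since the surrogate is a global linear function used with its Lipschitz constant). The paper's induction, for its part, more directly addresses the claim that the embedded network is ``parameterized by $\theta$'' in the sense of involving \emph{all} of the parameters, which is the point of its Case 2; your handling of the non-linear companion units --- absorbing their bounded outputs into an effective bias --- yields an input-dependent bias and so a slightly weaker notion of ``embedded linear network,'' but this is consistent with the authors' own input-adaptive reading and does not affect the boundedness conclusion.
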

 \begin{wrapfigure}[16]{r}{0.4\textwidth}
  \vspace{-7mm}
  \centering
\includegraphics[width=0.99\linewidth]{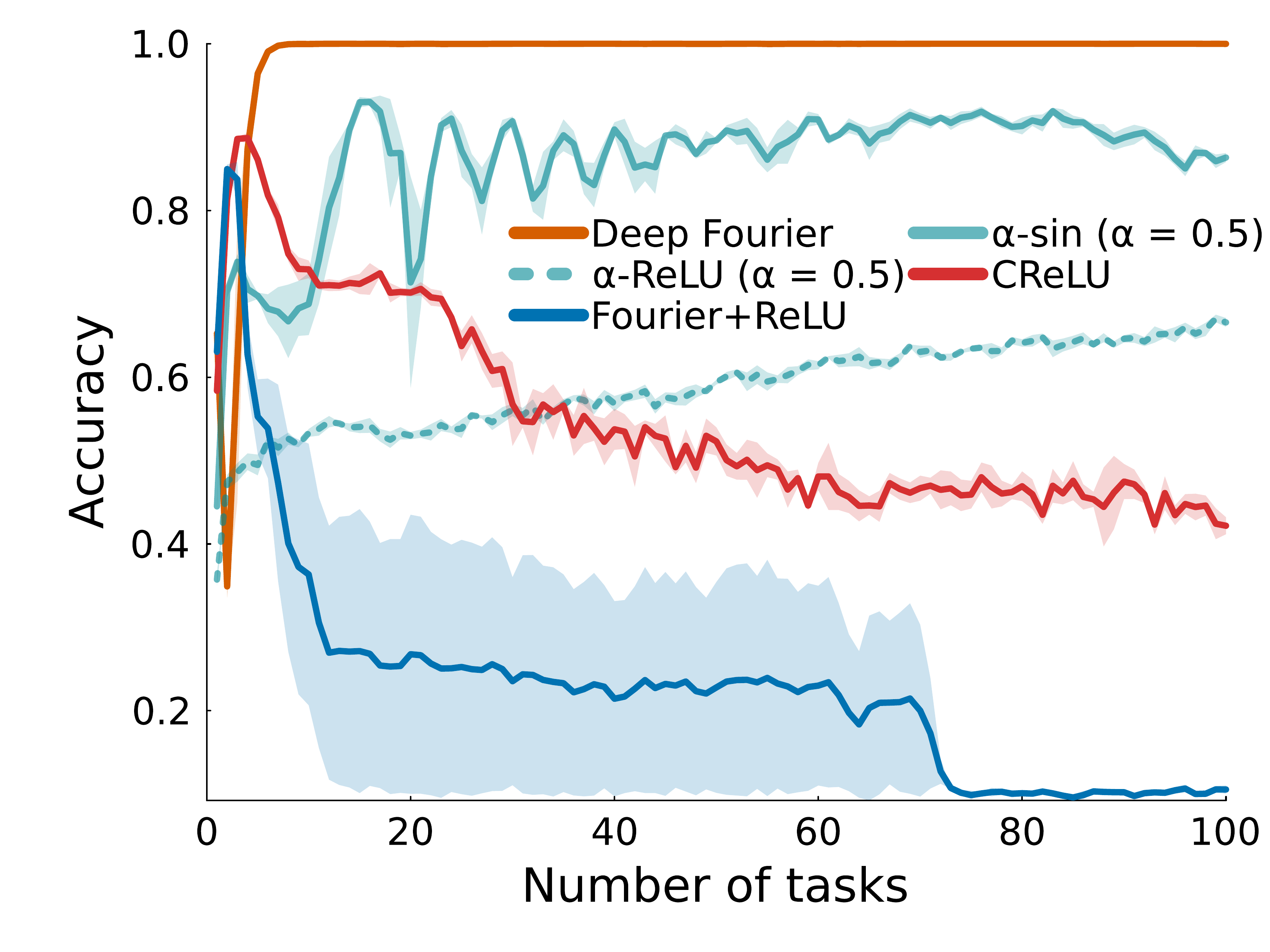}
\caption{
  \textbf{Trainability on a non linearly-separable task.}
  Deep Fourier features improve and sustain their trainability when other networks cannot.}
  \label{fig:alpha_nonlinear}
  \vspace{-5mm}
\end{wrapfigure}

Notice that piecewise linear activations also embed a deep linear network, but these embedded deep linear networks do not use the same parameter set.
For example, the deep linear network embedded by a $\texttt{ReLU}$ network does not depend on any of the parameters used to compute a $\texttt{ReLU}$ unit that is zero.
Although the $\texttt{leaky-ReLU}$ function involves every parameter, the deep linear network vanishes because the leak parameter is small, $\alpha < 1$, and hence the embedded deep linear network is multiplied by a small constant, $\alpha^{-L}$, where $L$ is the depth of the network.

\paragraph{Empirical Evidence for Nonlinear Plasticity}
We now consider a similar experimental setup from Sections \ref{sec:lin_sep} and \ref{sec:alpha_linear}, except we make the problem non linearly-separable by considering random label assignments on the entire dataset.
Each task is more difficult because it involves memorizing more labels, and the effect of the non-stationarity is also stronger due to randomization of more datapoints.
As a result, the deep linear network can no longer fit a single task well.
Referring to Figure \ref{fig:alpha_nonlinear}, the $\alpha$-linear activation functions can sustain and even improve their trainability, albeit very slowly.
In contrast, using deep Fourier features within the network enables the network to easily memorize all the labels for 100 tasks.
Deep Fourier features surpass the trainability of the other nonlinear baselines at initialization, \texttt{CReLU} and shallow Fourier features followed by \texttt{ReLU}.
This is surprising, because deep nonlinear networks at initialization are often a gold-standard for trainability.


\section{Experiments}
\label{sec:main_exp}

Our experiments demonstrate the benefits of the adaptive linearity provided by deep Fourier features.
While trainability was the primary focus behind our theoretical results and empirical case studies, we show that these findings generalize to other problems in continual learning.
In particular, we demonstrate that networks composed of deep Fourier features are capable of learning from diminishing levels of label noise,
and in class-incremental learning,
in addition to sustaining trainability on random labels.
The main results we present are on all of the major continual supervised learning settings considered in the plasticity literature. They build on the standard ResNet-18 architecture, widely used in practice \citep{he16_deep_resid_learn_image_recog}.
\looseness=-1

\paragraph{Datasets and Non-stationarities}
Our experiments use the common image classification datasets for continual learning, namely tiny-ImageNet \citep{letiny}, CIFAR10, and CIFAR100 \citep{krizhevsky09_learn}.
We augment these datasets with commonly used non-stationarities to create continual learning problems, with the non-stationarity creating a sequence of tasks from the dataset.
Specifically, following recent work on continual learning \citep{lee24_slow_stead_wins_race}, we consider diminishing levels of label noise on each dataset:
We start with half the data being corrupted by label noise and reduce the noise to clean labels over 10 tasks.
Additionally, for the datasets with a larger number of classes, tiny-ImageNet and CIFAR100, we also consider the class-incremental setting: the first task involves only five classes, and five new classes are added to the existing pool of classes at the beginning of each task \citep{van2022three}.
Other results and more details on datasets and non-stationarities considered can be found in Appendix \ref{appendix:empirical_details}.

\begin{figure}[t]
  \centering
\includegraphics[width=0.32\linewidth]{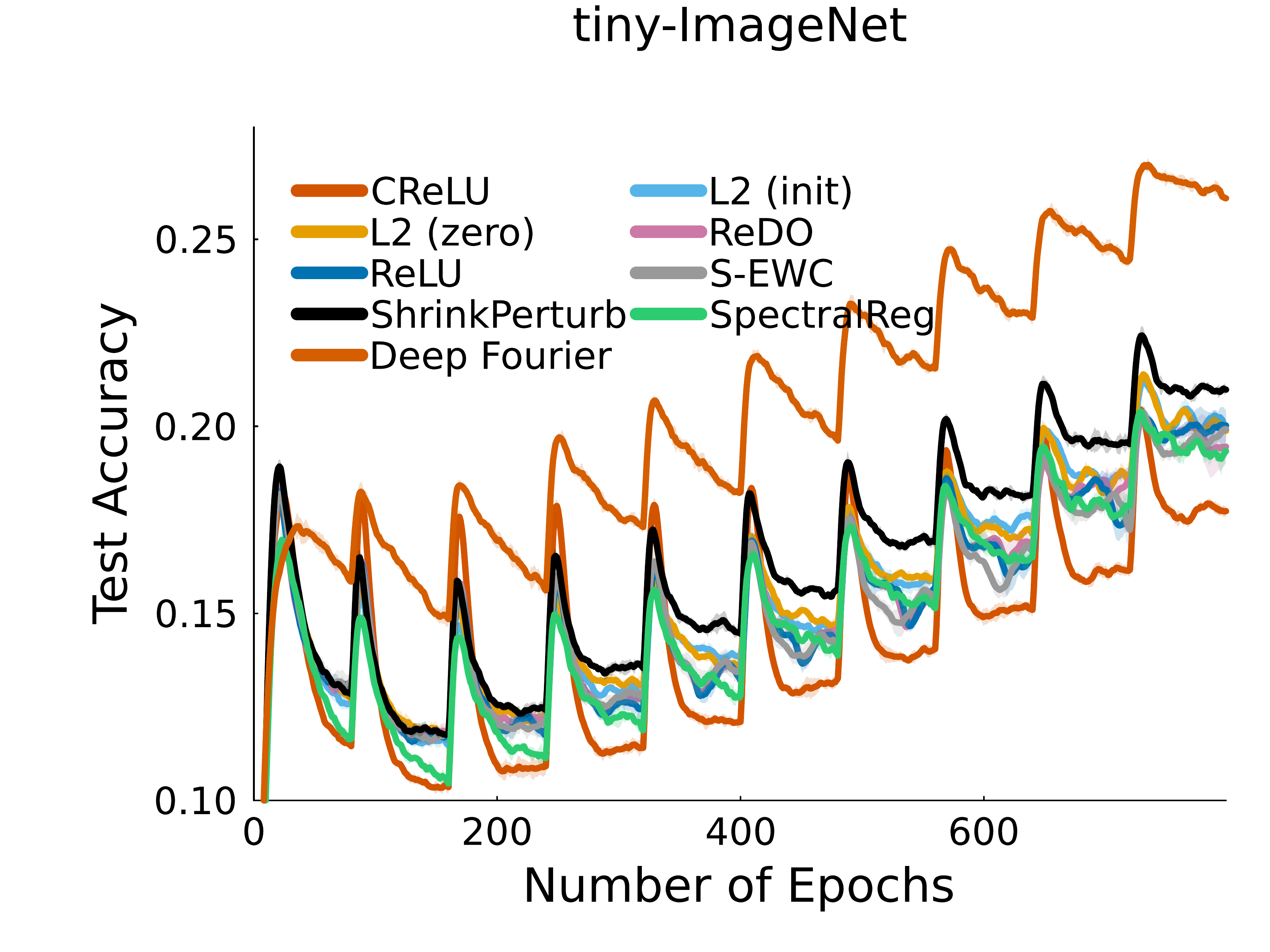}
\includegraphics[width=0.32\linewidth]{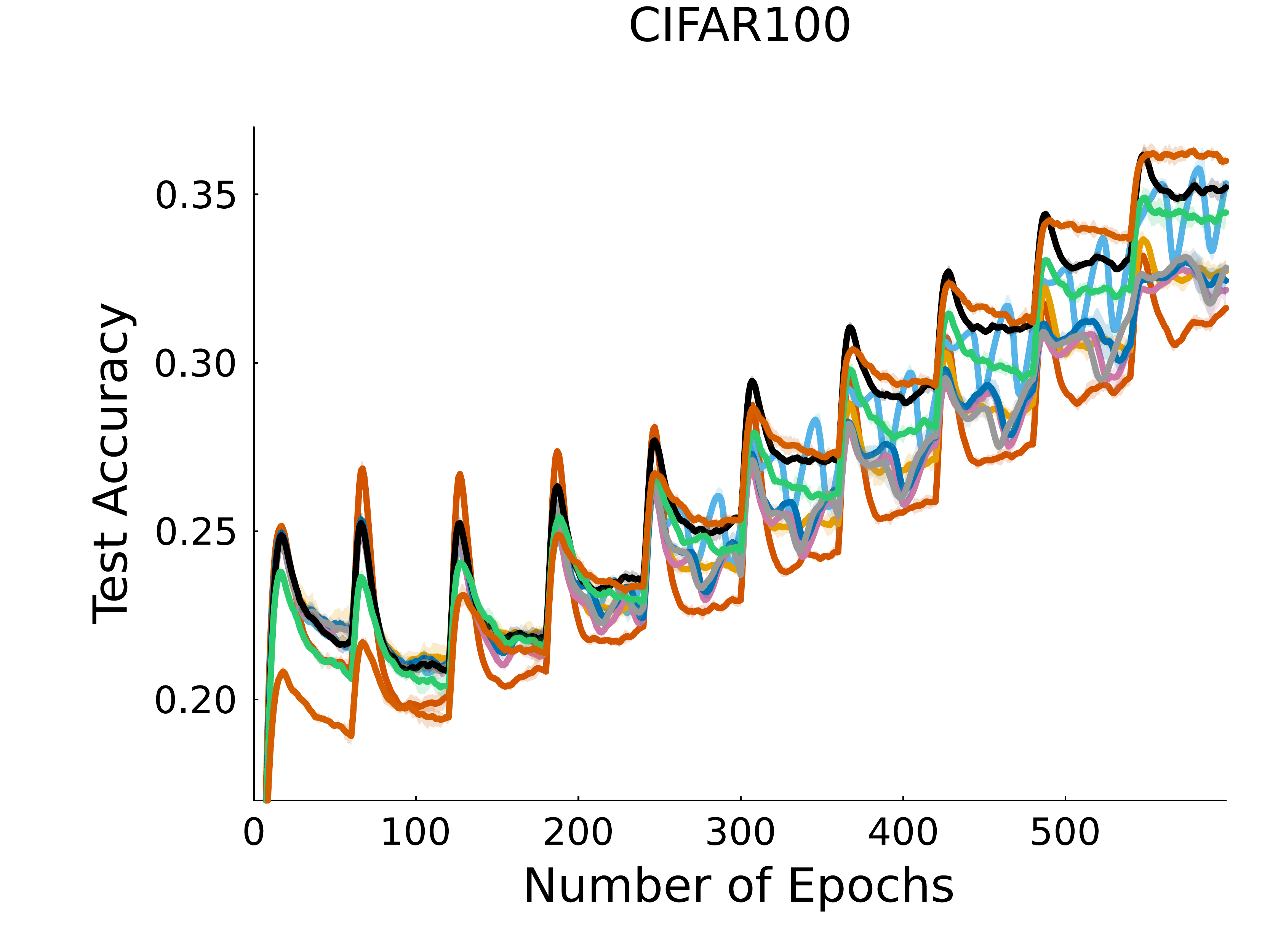}
\includegraphics[width=0.32\linewidth]{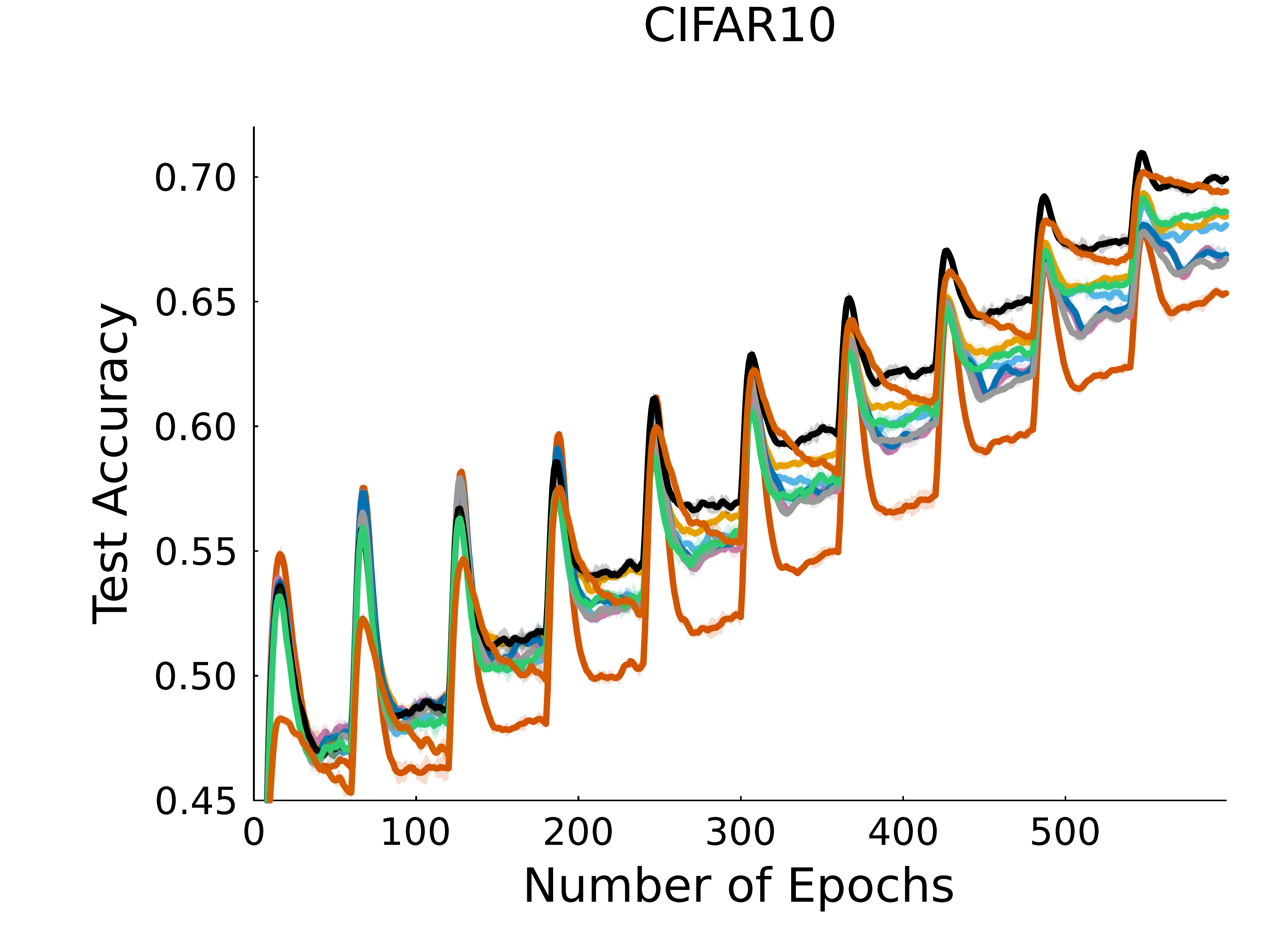}
\caption{
  \textbf{Training a ResNet-18 continually with diminishing label noise.} Deep Fourier features are particularly performant on complex tasks like tiny-ImageNet.
  Despite networks with deep Fourier features having approximately half the number of parameters, they surpass the baselines in CIFAR100 and are on-par with spectral regularization on CIFAR10.
}
  \label{fig:main_labelnoise}
\end{figure}

\paragraph{Architecture and Baselines}
We compare a ResNet-18 using only deep Fourier features against a standard ResNet-18 with \texttt{ReLU} activations.
The network with deep Fourier features has fewer parameters because it uses a concatenation of two different activation functions, halving the effective width compared to the network with \texttt{ReLU} activations.
This provides an advantage to the nonlinear baseline.
We also include \emph{all} prominent  baselines that have previously been proposed to mitigate loss of plasticity in the field:
L2 regularization towards zero,
L2 regularization towards the initialization \citep{kumar23_maint_plast_regen_regul},
spectral regularization \citep{lewandowski24_learn_contin_spect_regul},
Concatenated ReLU \citep{shang16_under,abbas23_loss_plast_contin_deep_reinf_learn}, 
Dormant Neuron Recycling \citep[ReDO,][]{sokar23_dorman_neuron_phenom_deep_reinf_learn},
Shrink and Perturb \citep{ash20_warm_start_neural_networ_train},
and
Streaming Elastic Weight Consolidation \citep[S-EWC,][]{kirkpatrick2017overcoming, elsayed24_addres_loss_plast_catas_forget_contin_learn}.

\subsection{Main results}

Our main result demonstrates that adaptive-linearity is an effective inductive bias for continual learning. In these set of experiments, we consider the problem of sustaining test accuracy on a sequence of tasks. In addition to requiring trainability, methods must also sustain their generalization.

\paragraph{Diminishing Label Noise}
In Figure \ref{fig:main_labelnoise}, we can clearly see the benefits of deep Fourier features in the diminishing label noise setting.
At the end of training on ten tasks with diminishing levels of label noise, the network with deep Fourier features was always among the methods with the highest test accuracy on the the uncorrupted test set.
On the first of ten tasks, deep Fourier features could occasionally overfit to the corrupted labels leading to initially low test accuracy.
However, as the label noise diminished on future tasks, the network with deep Fourier features was able to continue to learn to correct its previous poorly-generalizing predictions.
In contrast, the improvements achieved by the other methods that we considered was oftentimes marginal compared to the baseline \texttt{ReLU} network.
Two exceptions are: (i) networks with  \texttt{CReLU} activations, which underperformed relative to the baseline network, and (ii) Shrink and Perturb, which was the best-performing baseline method for diminishing label noise. Interestingly, the performance benefit of deep Fourier features is most prominent on more complex datasets, like tiny-ImageNet.

\begin{wrapfigure}[15]{r}{0.65\textwidth}
  \centering
\includegraphics[width=0.49\linewidth]{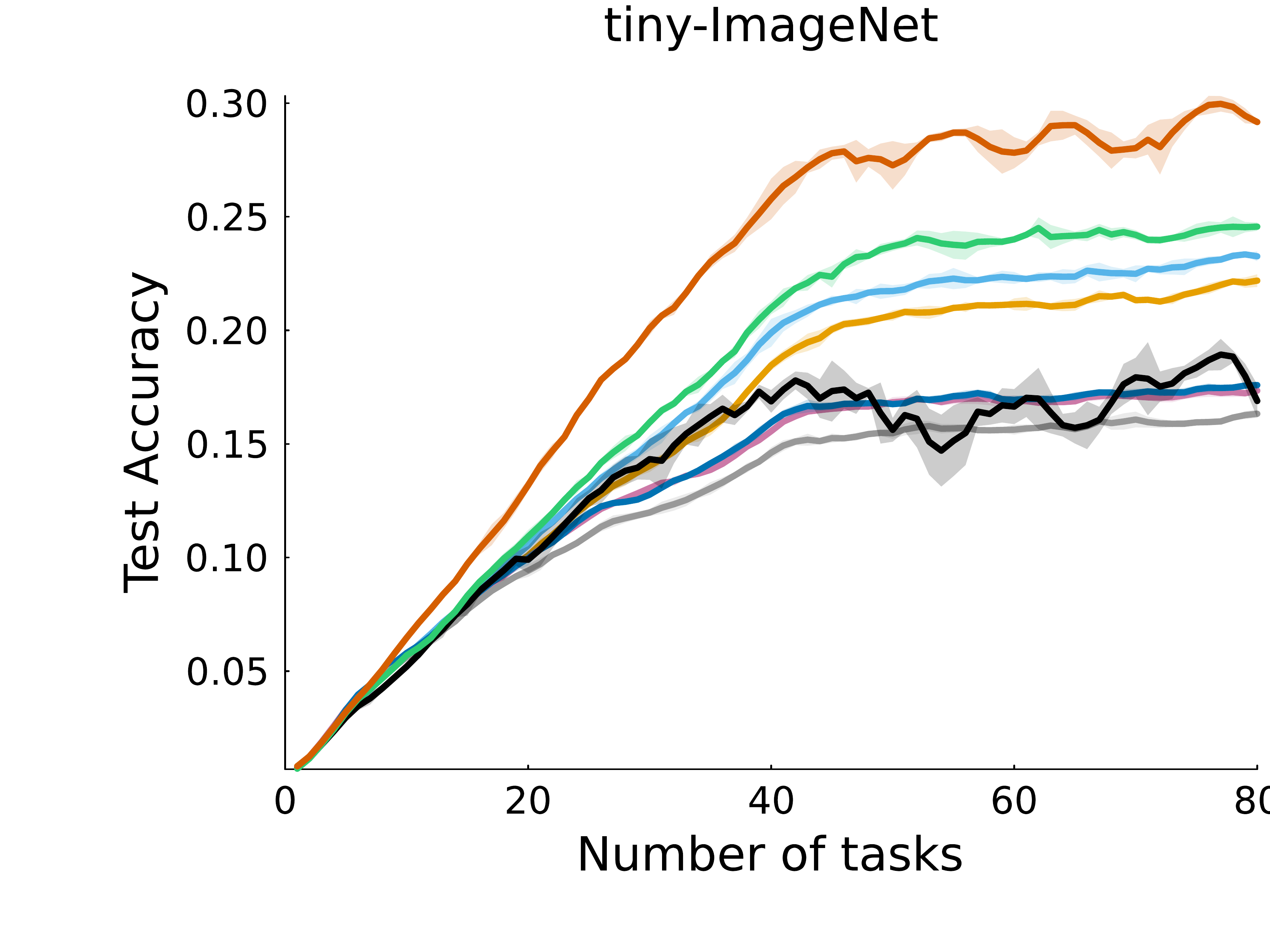}
\includegraphics[width=0.49\linewidth]{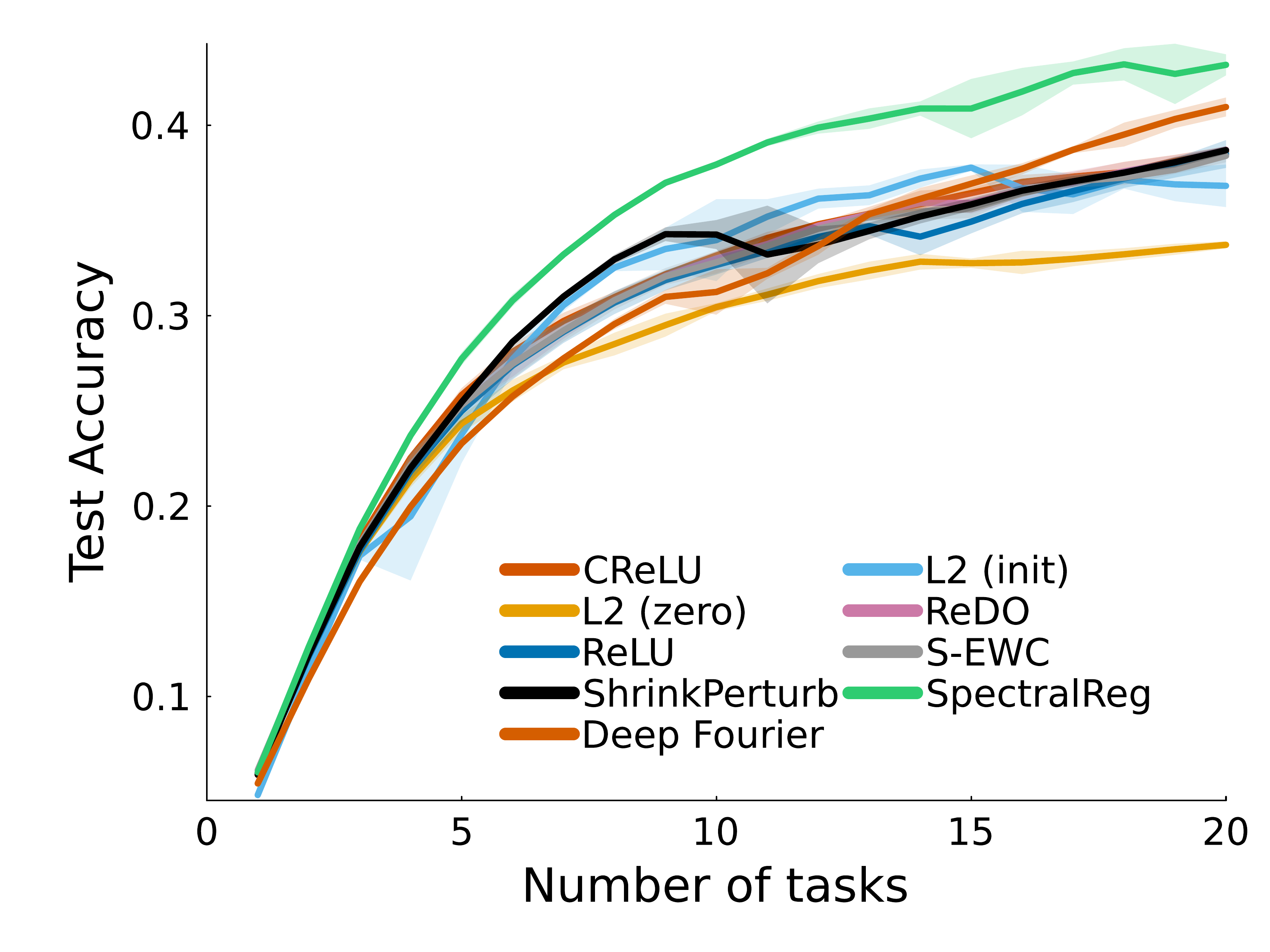}
\caption{
  \textbf{Class incremental learning results on tiny-Imagenet (\emph{Left}) and CIFAR-100 (\emph{Right}).} On both datasets, deep Fourier features substantially improve over most baselines.}
  \label{fig:classinc}
  \vspace{-10mm}
\end{wrapfigure}

\paragraph{Class-Incremental Learning}
Deep Fourier features are also effective in the class-incremental setting, where later tasks involve training on a larger subset of the classes. The network is evaluated at the end of each task on the entire test set.
As the network is trained on later tasks, its test set performance increases because it has access to a larger subset of the training data.
In Figure \ref{fig:classinc}, we see that Deep Fourier features largely outperform the baselines in this setting, particularly on tiny-ImageNet in which the first forty tasks involve training on a growing subset of the dataset and the last forty ``tasks'' involve training to convergence on the full dataset. \footnote{We use quotation marks to characterize the last forty tasks because they are, in fact, a single task, as the data distribution stops changing after the first forty tasks. We call them ``tasks'' because of the number of iterations in which they are trained.}
Not only are deep Fourier features quicker to learn on earlier continual learning tasks, but they are also able to improve their generalization performance by subsequently training on the full dataset.
On CIFAR100, the difference between methods is not as prominent, but we can see that deep Fourier features are still among the top-performing methods.

\subsection{Sensitivity Analysis}
\label{sec:sensitivity}

In the previous sections, we used deep Fourier features in combination with spectral regularization to achieve high generalization.
However, the theoretical analysis and case-studies that we presented earlier concerned trainability.
We now present a sensitivity result to understand the relationship between trainability and generalization.
Using a ResNet-18 with different activation functions, we varied the regularization strength between no regularization and high degrees of regularization.
In Figure \ref{fig:sensitivity}, we can see that deep Fourier features indeed have a high degree of trainability, sustaining trainability at different levels of regularization.
However, without any regularization, deep Fourier features have a tendency to overfit.
Over-fitting is a known issue for shallow Fourier features \citep[e.g., when using Fourier features only for the input layer,][]{mavor-parker24_frequen_gener_period_activ_funct_reinf_learn}.
However, deep Fourier features are able to use their high trainability to learn effectively even when highly regularized.
Thus, while trainability does not always lead to learning, the trainability provided by adaptive-learning still provides a useful inductive bias for continual learning.

\begin{figure}[t]
  \centering
\includegraphics[width=0.32\linewidth]{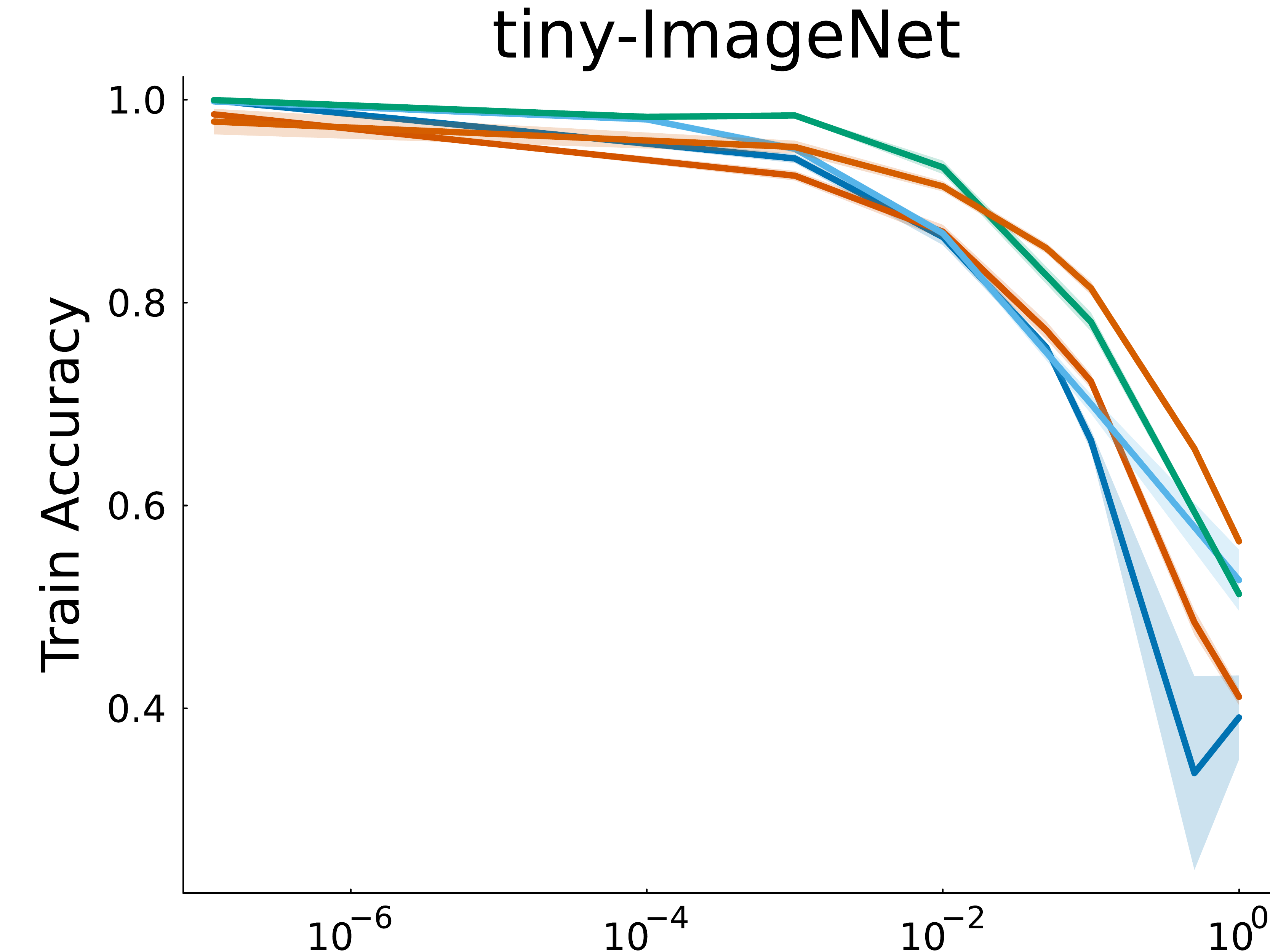}
\includegraphics[width=0.32\linewidth]{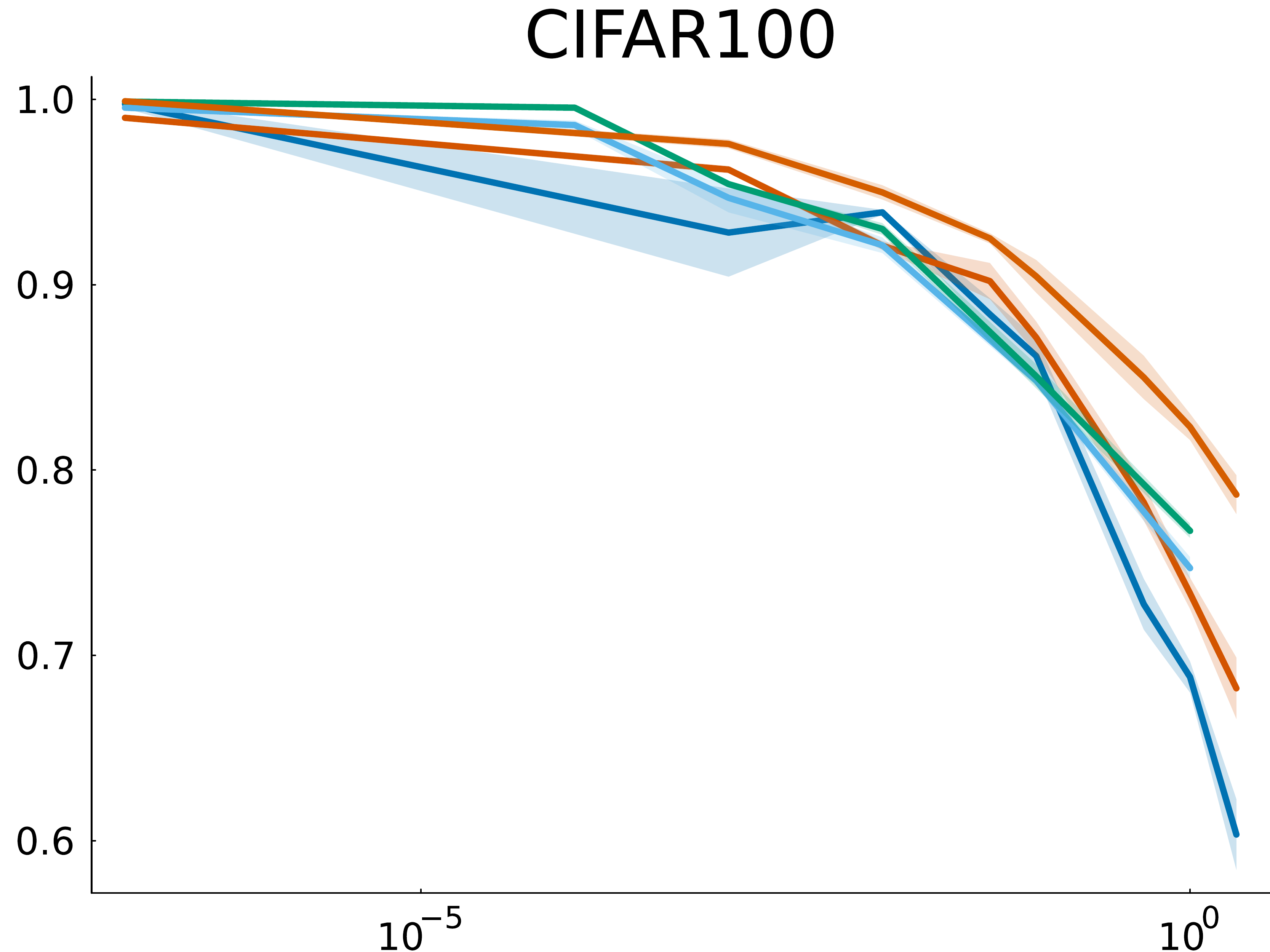}
\includegraphics[width=0.32\linewidth]{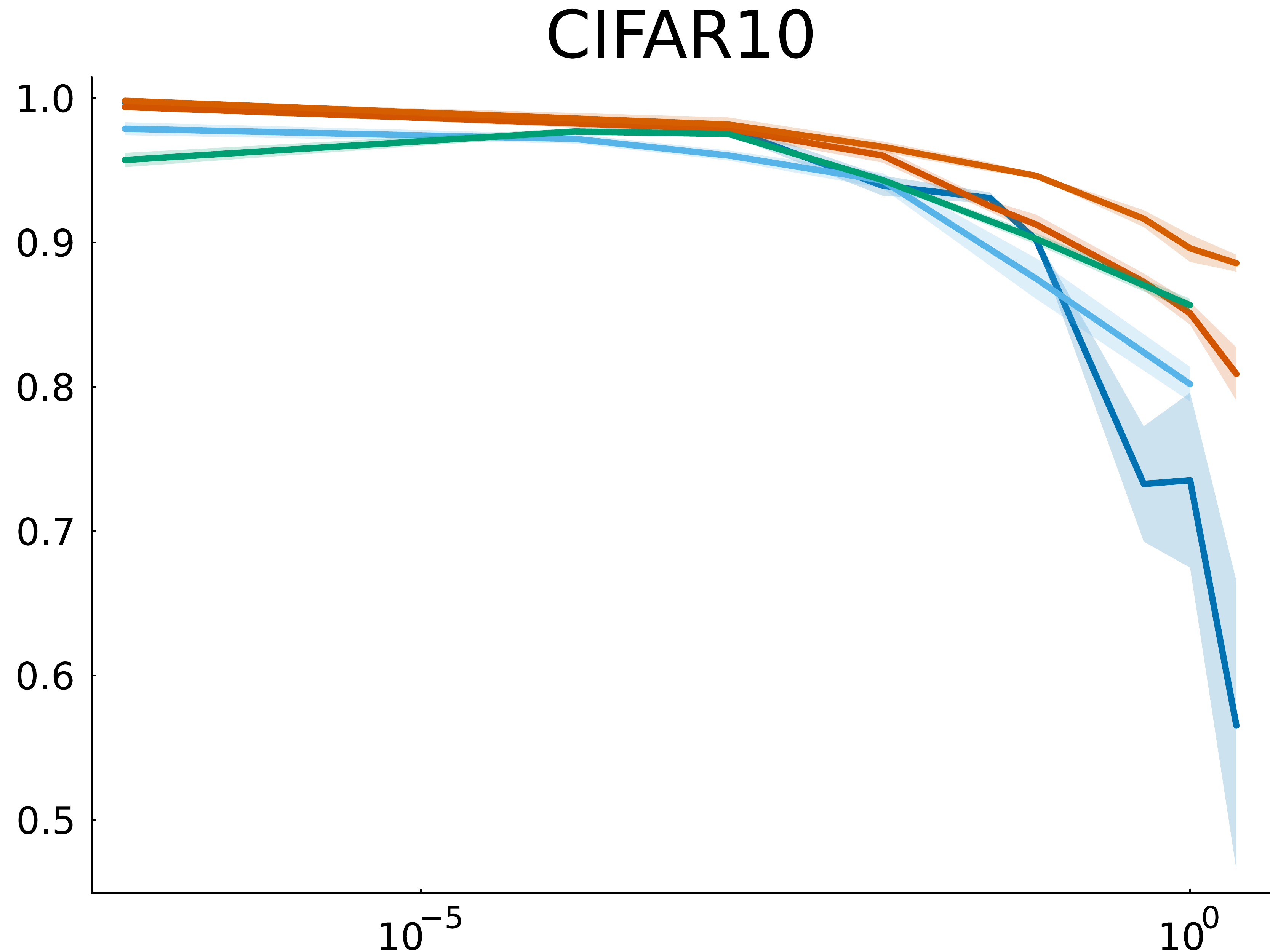}\\
\includegraphics[width=0.32\linewidth]{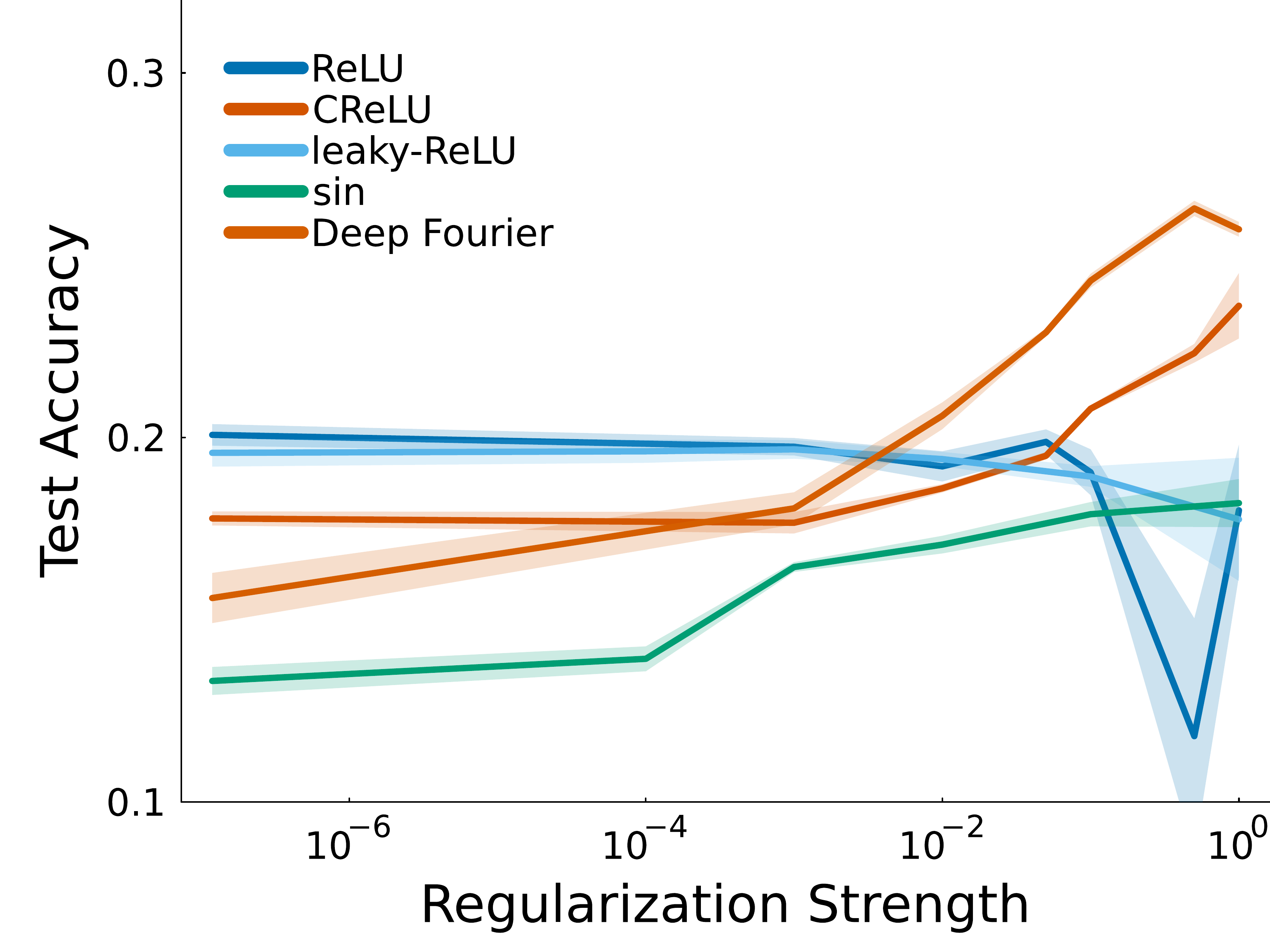}
\includegraphics[width=0.32\linewidth]{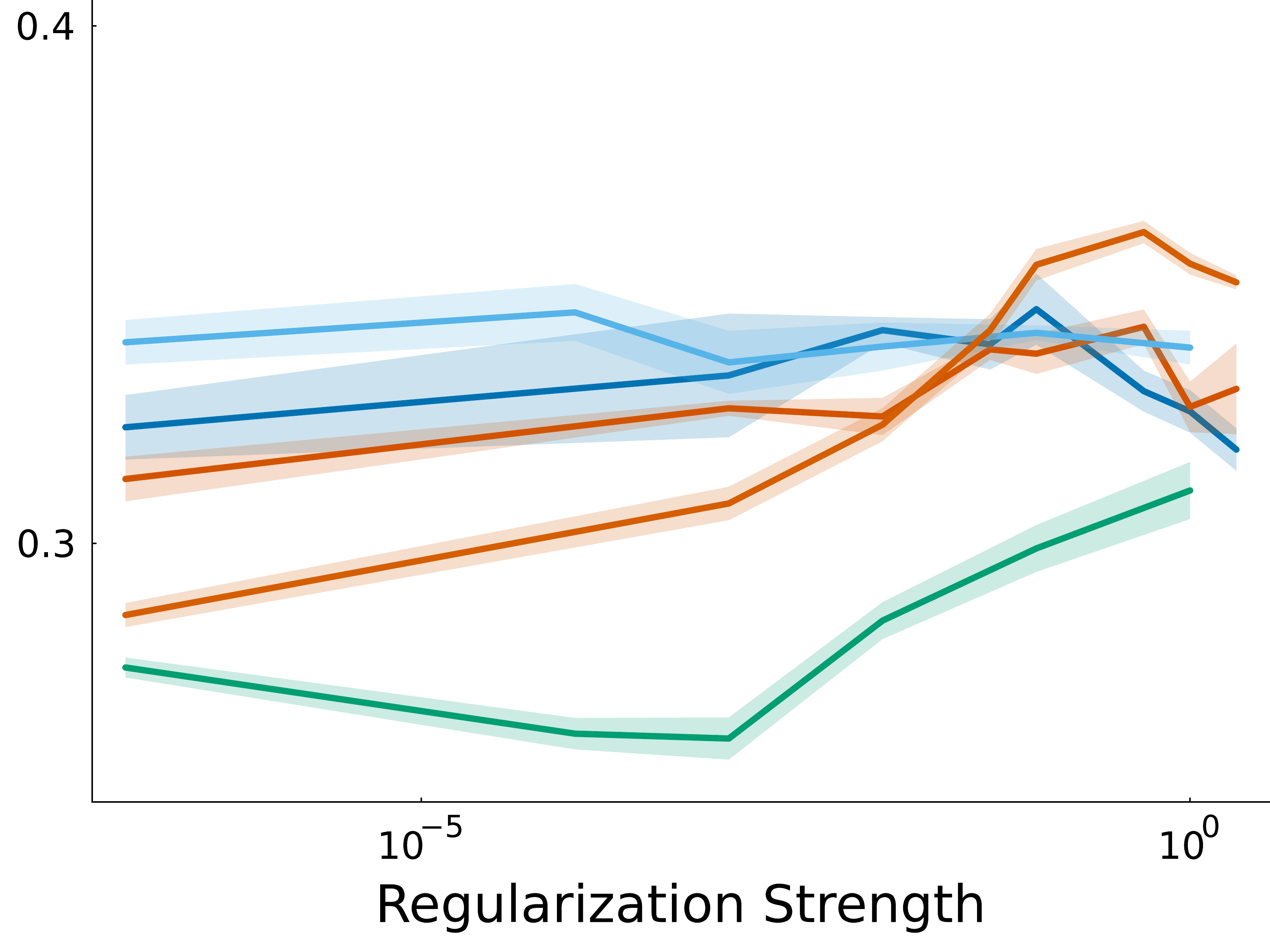}
\includegraphics[width=0.32\linewidth]{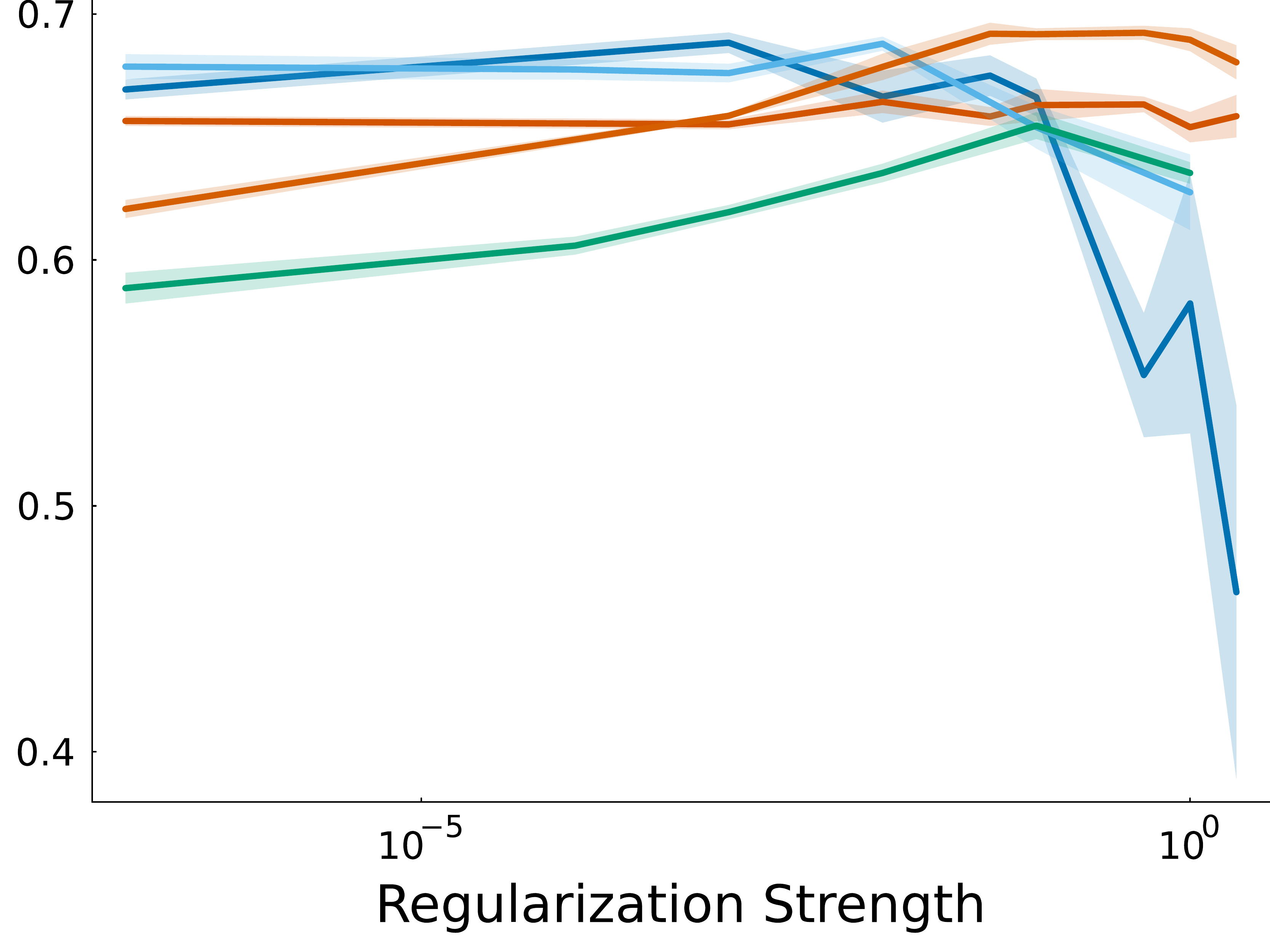}
\caption{\textbf{Sensitivity analysis on tiny-ImageNet, CIFAR10, and CIFAR100.} Networks with deep Fourier features are highly trainable, but have a tendency to overfit without regularization, leading to high training accuracy but low test accuracy. Due to deep Fourier features being highly trainable, they are able to train with much higher regularization strengths leading to ultimately better generalization.
}
  \label{fig:sensitivity}
\end{figure}


\section{Conclusion}

In this paper, we proved that linear function approximation and a special case of deep linearity are effective inductive biases for learning continually without loss of trainability.
We then investigated the issues that arise from using nonlinear activation functions, namely the lack of unit sign entropy.
Motivated by the effectiveness of linearity in sustaining trainability, we proposed deep Fourier features to approximately embed a deep linear network inside a deep nonlinear network.
We found that deep Fourier features dynamically balance the trainability afforded by linearity and the effectiveness of nonlinearity, thus providing an effective inductive bias for
learning continually.
Experimentally, we demonstrate that networks with deep Fourier features provide benefits for continual learning across every dataset we consider.
Importantly, networks with deep Fourier features are effective plastic learners because their trainability allows for higher regularization strengths that leads to improved and sustained generalization over the course of learning.
\looseness=-1


\bibliographystyle{apalike}
\bibliography{references}

\appendix

\section{Additional Details}

\subsection{Assumptions for Trainabiilty of Linear Function Approximation}
\label{app:assumption}

Neither squared nor cross-entropy loss are $\mu$-strongly convex in general.
However, the assumption is satisfied under reasonable conditions in practice and in the problem settings considered in this paper. 

For regression, denote the features for task $\tau$ as $X_{\tau} \in \mathbb{R}^{d \times N}$ where $d$ is the feature dimension and $N$ is the sample size. For linear function approximation, the Hessian is the outer products of the data matrix, $\nabla^{2}_{\theta} J^{reg}_{\tau}(\theta) = X_{\tau}X_{\tau}^{\top} \in \mathbb{R}^{d \times d}$. Thus, the squared loss is strongly-convex if the data is full rank. This is satisfied in high dimensional image classification problems, which is what we consider.

For binary classification, the Hessian involves an additional diagonal matrix of the predictions for each datapoint,
$$\nabla^{2}_{\theta} J^{class}_{\tau}(\theta) = X_{\tau}D_\tau X_{\tau}^{\top} \in \mathbb{R}^{d \times d},$$
where  $D_\tau = \text{Diag}(p_{1,\tau}, \dotso, p_{N, \tau})$, $p_{i, \tau} = 2\sigma(f_{\theta}(x_{i, \tau}))(1 - \sigma(f_{\theta}(x_{i, \tau})))$, and $\sigma$ is the sigmoid function. If the prediction becomes sufficiently confident, $\sigma(f_{\theta}(x_{i})) = 1$, then there can be rank deficiency in the Hessian. However, because each task is only budgeted a finite number of iterations this bounds the predictions away from $1$.

\subsection{Related Work Regarding Trainability of Deep Linear Networks}

Some authors have suggested deep linear networks suffer from a related issue, namely
that critical learning periods also occur for deep linear
networks \citep{kleinman24_critic_learn_period_emerg_even}. Unlike the
focus on loss of trainability in this work where the entire network is trained, these critical learning
periods are due to winner-take-all dynamics due to manufactured defects in one
half of the linear network, for which the other half compensates.

Finally, we note that some previous work have found that gradient dynamics have a low rank bias for deep linear networks \citep{chou24_gradien}. One important assumption that these works make is that the neural network weights are initialized identically across layers, $\theta_{j} = \alpha \theta_{1}$. Our analysis assumes that the initialization uses small random values, such as those used in practice with common neural network initialization schemes \citep{glorot10_under,he15_delvin}.

\subsection{Details For Deep Linear Setup}
\label{app:deeplin_details}

To simplify notation, without loss of generality, we consider a deep linear network without the bias terms, $\theta = \{\theta_{1},\dotso,\theta_{L}\}$, and $f_{\theta}(x) = \theta_{L}\theta_{L-1}\cdots \theta_{1}x$.  We denote the product of weight matrices, or simply product matrix, as
$\bar \theta = \theta_{L}\theta_{L-1}\cdots \theta_{1}$, which allows us to write the deep linear network in terms of the product matrix:
$f_{\theta}(x) = \bar \theta x$.
The problem setup we use for the deep linear analysis follows previous work \citep{huh20_curvat}, and we provide additional details in Appendix \ref{app:deeplin_details}.
We consider the squared error,
$J_{\tau}(\theta) = \mathbb{E}_{(x,y) \sim p_{\tau}}\left[\|y - \bar \theta x\|\right]_{2}^{2}$. and we assume that the observations are whitened to simplify the analysis,

$\Sigma_{x} = \mathbb{E}\left[ x x^{\top} \right] = \mathbf{I}$, focusing on the case where the targets $y$ are changing during continual learning. Then we can write the squared error as \looseness=-1
$$J(\theta) = \text{Tr}\left[ \Delta_{\tau} \Delta_{\tau}^{\top}\right],$$
where $\Delta_{\tau} = \theta^{\star}_{\tau} - \bar \theta$ is the distance to the optimal linear predictor,
$\theta^{\star}_{\tau} = \Sigma_{yx,\tau} = \mathbb{E}_{x,y \sim p_{\tau}}[yx^{\top}]\Sigma_{x}$.

 The convergence of gradient descent for general deep linear networks requires an assumption on the deficiency margin, which is used to ensure that the solution found by a deep linear network, in terms of the product matrix, is full rank
\citep{arora19_conver_analy_gradien_descen_deep}.
That is, the deep linear network converges if the minimum singular value of the product matrix stays positive, $\sigma_{min}(\bar \theta) > 0$.

We now show that a diagonal linear network maintains a positive minimum singular value under continual learning.
This is a simplified setting for analysis, where we assume that the weight matrices are diagonal and thus the input, hidden, and output dimension are all equal. Let $f_{\theta}(x)$ be a diagonal linear network, defined by a set of diagonal weight matrices, $\theta_{l} = \text{Diag}(\theta_{l,1},\dotso, \theta_{l,d})$. The output of the diagonal linear network is the product of the diagonal matrices, $f_{\theta}(x) = \theta_{L}\theta_{L-1}\dots\theta_{1}x$.
Then the product matrix is also a diagonal matrix, whose diagonals are the products of the parameters of each layer, $\bar \theta = Diag(\prod_{l=1}^{L} \theta_{l,1},\dotso, \prod_{l=1}^{L} \theta_{l,d}) := Diag(\bar \theta_{1},\dotso, \bar\theta_{d})$.
The minimum singular value of a diagonal matrix is the minimum of its absolute values, $\sigma_{min}(\bar \theta) = \min_{i} |\bar \theta_{i}|$.
Thus, we must show that the minimum absolute value of the product matrix is never zero.

\begin{lemma}
  \label{lem:equality}
  Consider a deep diagonal linear network, $f_{\theta}(x) = \theta_{L}\theta_{L-1}\dots\theta_{1}x$ and $\theta_{l} = \text{Diag}(\theta_{l,1},\dotso, \theta_{l,d})$.
Then, under gradient descent dynamics, $\theta^{(t)}_{l,i} = \theta^{(t)}_{l',i}$ iff
$\theta^{(0)}_{l,i} = \theta^{(0)}_{l',i}$ for $l' \not = l$.
\end{lemma}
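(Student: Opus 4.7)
The plan is to derive a one-line multiplicative recursion for the inter-layer difference $\Delta_{l,l'}^{(t)} := \theta_{l,i}^{(t)} - \theta_{l',i}^{(t)}$ and read off both directions of the iff directly from its closed form. Because the network is diagonal, the loss depends on $\theta$ only through the per-coordinate products $\bar\theta_i = \prod_{k=1}^L \theta_{k,i}$, so the gradient-descent dynamics decouple across the coordinate index $i$ and I can work with a fixed $i$ throughout. A single application of the chain rule gives $\nabla_{\theta_{l,i}} J(\theta) = g_i^{(t)} \prod_{k\neq l}\theta_{k,i}^{(t)}$, where $g_i^{(t)} := \partial J / \partial \bar\theta_i$ evaluated at iterate $t$, so the gradient-descent update reads
\[
\theta_{l,i}^{(t+1)} \;=\; \theta_{l,i}^{(t)} \,-\, \alpha\, g_i^{(t)} \prod_{k\neq l} \theta_{k,i}^{(t)}.
\]

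Next I would subtract the update for layer $l'$ from the update for layer $l$ and factor the resulting difference of products using the elementary identity $\prod_{k\neq l}\theta_{k,i} - \prod_{k\neq l'}\theta_{k,i} = (\theta_{l',i} - \theta_{l,i}) \prod_{k\neq l,l'}\theta_{k,i}$. Letting $c^{(t)} := 1 + \alpha\, g_i^{(t)} \prod_{k\neq l,l'}\theta_{k,i}^{(t)}$, this collapses to the single-line recursion $\Delta_{l,l'}^{(t+1)} = c^{(t)}\,\Delta_{l,l'}^{(t)}$, whose closed form is
\[
\Delta_{l,l'}^{(t)} \;=\; \Delta_{l,l'}^{(0)} \prod_{s=0}^{t-1} c^{(s)}.
\]
The ``if'' direction is then an immediate induction: $\Delta_{l,l'}^{(0)} = 0$ forces $\Delta_{l,l'}^{(t)} = 0$ for every $t$ regardless of the scalar factors $c^{(s)}$. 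For the ``only if'' direction, $\Delta_{l,l'}^{(0)} \neq 0$ propagates to $\Delta_{l,l'}^{(t)} \neq 0$ precisely when each factor $c^{(s)}$ is nonzero along the trajectory.

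The main obstacle is the reverse direction, which requires each $c^{(s)}$ to remain nonzero. This is a generic non-degeneracy condition that fails only on a measure-zero set of initializations and step sizes, and in the continuous-time gradient-flow limit it is automatic because $c^{(s)} \to 1$. I would either invoke this explicitly as a genericity assumption, or more cleanly note that under the standard small-step-size regime used alongside Theorem \ref{thm:deeplin}, the quantity $|\alpha g_i^{(s)} \prod_{k\neq l,l'}\theta_{k,i}^{(s)}|$ is bounded strictly below $1$, which forces $c^{(s)} > 0$ uniformly in $s$ and closes the argument. The rest of the proof is purely algebraic and requires no additional tools.
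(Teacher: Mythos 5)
Your derivation is correct and rests on the same one-step analysis of the gradient-descent update that the paper uses; the difference is organizational, but it buys something real. By subtracting the two updates and applying the identity $\prod_{k\neq l}\theta_{k,i} - \prod_{k\neq l'}\theta_{k,i} = (\theta_{l',i}-\theta_{l,i})\prod_{k\neq l,l'}\theta_{k,i}$, you reach the recursion $\Delta_{l,l'}^{(t+1)} = c^{(t)}\Delta_{l,l'}^{(t)}$ without ever dividing by a parameter, whereas the paper factors each side of the equated updates as $\theta_{l,i}^{(t-1)}\bigl(1 - \alpha\,(\cdots)\,x/(\theta_{l',i}^{(t-1)}\theta_{l,i}^{(t-1)})\bigr)$, which tacitly assumes both entries are nonzero at every step. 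More importantly, your closed form $\Delta^{(t)} = \Delta^{(0)}\prod_{s}c^{(s)}$ isolates exactly where the argument can break: the direction ``equal at time $t$ implies equal at time $0$'' requires every $c^{(s)}\neq 0$. The paper's proof has the identical gap --- it cancels the common parenthesized factor from both sides of the equation without verifying it is nonzero --- but does not acknowledge it. Note that this is not a cosmetic issue for the paper: the proof of Theorem \ref{thm:deeplin} uses precisely that direction (distinct random initial values must remain distinct, so two components cannot vanish simultaneously), so the non-degeneracy condition you flag is load-bearing. Of your two proposed fixes, the step-size bound $|\alpha\, g_i^{(s)}\prod_{k\neq l,l'}\theta_{k,i}^{(s)}| < 1$, hence $c^{(s)} > 0$, is preferable: boundedness of the iterates is already implicit in the convergence conditions of \citet{arora19_conver_analy_gradien_descen_deep} that Theorem \ref{thm:deeplin} invokes, and the same bound would repair the paper's version of the cancellation step. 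So: same approach, cleaner execution, and you have correctly identified a hypothesis that should be made explicit in the lemma statement.
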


The proof of this proposition, and the next, can be found in Appendix \ref{app:proofs}. This first proposition states that two parameters that are initialized to different values, such as by a random initialization, will never have the same value under gradient descent. Conversely, if the parameters are initialized identically, then they will stay the same value under gradient descent.
This means that, in particular, two parameters will never be simultaneously zero.

\begin{lemma}
  \label{lem:unique}
  Denote a deep diagonal linear network as $f_{\theta}(x) = Diag(\bar \theta_{1},\dotso, \bar\theta_{d})x$ where $ \bar \theta_{i} =\prod_{l=1}^{L} \theta_{l,i}$.
  Then, under gradient descent dynamics, $\bar \theta^{(t)}_{i}= \bar \theta^{(t+1)}_{i} = 0$ iff two (or more) components are zero, $\theta^{(t)}_{l,i} = \theta^{(t)}_{l',i} = 0$, for $l' \not = l$.
\end{lemma}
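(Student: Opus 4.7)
The plan is to exploit the fact that in a diagonal linear network the coordinate-wise dynamics decouple, so the analysis reduces to a chain of $L$ scalar parameters $\theta_{1,i}, \dotso, \theta_{L,i}$ whose product is $\bar\theta_i$. I will compute the per-coordinate gradient explicitly and then analyse, case by case, which factors of the update vanish depending on how many layers have a zero entry in coordinate $i$.

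First I would write the whitened squared loss coordinate-wise as $J(\theta) = \sum_i (\theta^\star_{\tau,i} - \bar\theta_i)^2$ and differentiate to obtain $\nabla_{\theta_{l,i}} J = -2(\theta^\star_{\tau,i} - \bar\theta_i)\prod_{k \neq l}\theta_{k,i}$, so that the gradient descent update is $\theta_{l,i}^{(t+1)} = \theta_{l,i}^{(t)} + 2\alpha(\theta^\star_{\tau,i} - \bar\theta_i^{(t)})\prod_{k \neq l}\theta_{k,i}^{(t)}$. This formula is the workhorse of both directions: any zero factor $\theta_{l,i}$ suppresses the gradient with respect to every other layer $k \neq l$, because the product $\prod_{j \neq k}\theta_{j,i}^{(t)}$ still contains $\theta_{l,i}^{(t)}=0$ whenever $k \neq l$.

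For the ($\Leftarrow$) direction I would assume two indices $l \neq l'$ satisfy $\theta_{l,i}^{(t)} = \theta_{l',i}^{(t)} = 0$. For any $m$, the product $\prod_{k \neq m}\theta_{k,i}^{(t)}$ contains at least one of these zero factors (namely $\theta_{l',i}^{(t)}$ if $m = l$, $\theta_{l,i}^{(t)}$ otherwise), so the gradient with respect to $\theta_{m,i}$ vanishes and $\theta_{m,i}^{(t+1)} = \theta_{m,i}^{(t)}$. In particular $\theta_{l,i}^{(t+1)} = \theta_{l',i}^{(t+1)} = 0$, yielding $\bar\theta_i^{(t+1)} = 0$. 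For the ($\Rightarrow$) direction I would argue by contrapositive: suppose $\bar\theta_i^{(t)} = 0$ but only a single layer index $l$ has $\theta_{l,i}^{(t)} = 0$, while $\theta_{k,i}^{(t)} \neq 0$ for all $k \neq l$. For each $k \neq l$ the update contains the zero factor $\theta_{l,i}^{(t)}$, so $\theta_{k,i}^{(t+1)} = \theta_{k,i}^{(t)} \neq 0$. The update for layer $l$ simplifies to $\theta_{l,i}^{(t+1)} = 2\alpha \theta^\star_{\tau,i}\prod_{k \neq l}\theta_{k,i}^{(t)}$, in which every factor on the right is nonzero, so $\theta_{l,i}^{(t+1)} \neq 0$. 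Hence every layer has a nonzero entry at time $t+1$, forcing $\bar\theta_i^{(t+1)} \neq 0$, a contradiction.

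The main obstacle is the edge case $\theta^\star_{\tau,i} = 0$, where the update for layer $l$ in the forward direction also vanishes and a single zero entry could persist across one step. I would handle this either by explicitly assuming the task target has no exactly-zero coordinate (which is generic since $\theta^\star_{\tau}$ varies with $\tau$ in the continual setting), or by restricting the statement to such generic targets. Combined with Lemma~\ref{lem:equality}, which guarantees that at a random initialization no two of the layer entries $\theta_{l,i}^{(0)}$ coincide and hence cannot simultaneously be zero, Lemma~\ref{lem:unique} then rules out $\bar\theta_i$ ever getting stuck at zero across consecutive steps, which is what subsequently delivers $\sigma_{\min}(\bar\theta) > 0$ throughout the sequence of tasks.
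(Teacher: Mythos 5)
Your proposal is correct and follows essentially the same route as the paper's proof: a case analysis on how many layer components $\theta_{l,i}$ are zero, using the fact that a single zero factor annihilates the gradient of every \emph{other} layer while its own gradient survives, so one zero cannot persist but two can. If anything you are more careful than the paper, which silently assumes the loss gradient $\nabla_{f_\theta}\ell$ is nonzero and thus never confronts the degenerate case $\theta^{\star}_{\tau,i}=0$ that you explicitly flag and resolve by genericity.
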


While the analysis considers a special case of deep linear networks, namely deep diagonal networks, we note that this is a common setting for the analysis of deep linear networks more generally \citep{nacson22_implic_bias_step_size_linear,even23_s_gd_diagon_linear_networ}.
In particular, the analysis is motivated by the fact that, under certain conditions, the evolution of the deep linear network parameters can be analyzed through the independent singular mode dynamics \citep{saxe14_exact}, which simplify the analysis of deep linear networks to deep diagonal linear networks.
The target function being learned, $y^{\star}(x) = \theta^{\star}x$, is represented in terms of the singular-value decomposition,
$\theta^{\star} = U^{\star} S^{\star} V^{\star} = \sum_{j=1}^{r} s_{i}u_{i}v_{i}^{\top}$.
We also assume that the neural network has a fixed hidden dimension, so that $\theta_{1} \in \mathbb{R}^{d \times d_{in}}, \theta_{L} \in \mathbb{R}^{d_{out} \times d}, \theta_{1 < l < L} \in \mathbb{R}^{d\times d}$; and we apply the singular value decomposition to the function approximator's parameters, $\theta_{l} = U_{l}S_{l}V_{l} \in \mathbb{R}^{d_{out} \times d_{h}}$. To simplify the product of weight matrices, we assume $V_{i+1} = U_{i}$, $V_{1} = V^{\star}$, and $U_{L} = U^{\star}$. The simplifying result is that the squared error loss can be expressed entirely in terms of the singular values, $\|y^{\star}x - \prod_{i=L}^{1}\theta_{i}x\|^{2} \propto \|S^{\star} - \prod_{i=L}^{1}S_{l}\|^{2}$, which is equivalent to our analysis of the deep diagonal network, as the matrix of singular values is a diagonal matrix. These decoupled learning dynamics are closely approximated by networks with small random weights and they persist
under gradient flows \citep{huh20_curvat}.  \looseness=-1

\newpage
\clearpage

\section{Proofs}
\label{app:proofs}

\begin{proof}[Proof of Theorem \ref{thm:linear}]
We first present the result for two tasks and we then generalize it to an arbitary number of tasks.
Let the linear weights learned on the first
task be $\theta^{(T)}$,  with the corresponding unique global minimum denoted by $\theta^{\star}_{1}$. The solution found on the first task is used as an initialization on the second task, which will end at $\theta^{(2T)}$, with the corresponding unique global minimum denoted by $\theta^{\star}_{2}$. We start from the known suboptimality gap for gradient descent on the second task \citep{garrigos23_handb_conver_theor_stoch_gradien_method}:
\begin{align}
  \label{eq:1}
 J_{2}(\theta^{(2T)}) - J_{2}(\theta_{2}^{\star}) < \frac{\|\theta_{2}^{\star} - \theta^{(T)}\|^{2}}{\alpha T}.
\end{align}
We upper bound the distance from the initialization on the second task, $\theta^{(T)}$, to the optimum, $\theta^{\star}_{2}$, by
\begin{align}
  \label{eq:6}
\|\theta_{2}^{\star} - \theta^{(T)}\|^{2} <  \|\theta_{2}^{\star} - \theta_{1}^{\star}\|^{2} + \|\theta_{1}^{\star} - \theta^{(T)}\|^{2} < \|\theta_{2}^{\star} - \theta_{1}^{\star}\|^{2} + (1-\alpha \mu)^{T}\|\theta_{1}^{\star} - \theta_{0}\|^{2}.
\end{align}
Where the last inequality uses
the assumption that the objective function is $\mu$-strongly convex.
We upper bound the suboptimality gap on the second task by a quantity independent of $\theta^{(T)}$:
\begin{align}
  \label{eq:7}
 J_{2}(\theta^{(2T)}) - J_{2}(\theta_{2}^{\star}) < \frac{\|\theta_{2}^{\star} - \theta^{(T)}\|^{2}}{\alpha T}
< \frac{\|\theta_{2}^{\star} - \theta_{1}^{\star}\|^{2} +
(1-\alpha \mu)^{T}\|\theta_{1}^{\star} - \theta_{0}\|^{2}
}{\alpha T},
\end{align}
which implies that the parameter value learned on the
previous task does not influence training on the new task beyond a dependence on the initial distance. This is true for an arbitrary number of tasks:
\begin{align}
  \label{eq:8}
 J_{\tau}(\theta^{(\tau T)}) - J_{\tau}(\theta_{\tau}^{\star}) < \frac{ \sum_{k =1}^{\tau}
(1-\alpha \mu)^{T(k - \tau)}\|\theta_{k}^{\star} - \theta_{k-1}^{\star}\|^{2}
}{\alpha T} < \frac{ 2D
(1-\alpha \mu)^T}{\alpha T ( 1 - (1- \alpha \mu)^{T})},
\end{align}
where we denote $\theta_{0}^{\star} = \theta_{0}$. The last inequality follows from our assumption that the distance between the task solutions, $\|\theta_{k}^{\star} - \theta_{k-1}^{\star}\|^{2} < 2D$, is bounded and using a geometric sum in $(1- \alpha \mu)^{T}$.
\end{proof}

\begin{proof}[Proof of Lemma \ref{lem:equality}]
  $\Rightarrow$ We first prove the lemma in the forward direction:

  Assuming that $\theta^{(t)}_{l,i} = \theta^{(t)}_{l',i}$ for $l' \not = l$, we will show that $\theta^{(t-1)}_{l,i} = \theta^{(t-1)}_{l',i}$.

  Writing the gradient update for $\theta_{l,i}^{(t)}$ with a fixed step-size $\alpha$, we have that

  \begin{align}
    \label{eq:1}
  \theta^{(t)}_{l,i} &= \theta^{(t-1)}_{l,i} - \alpha \nabla_{\theta_{l,i}} J(\theta)\\
                     &= \theta^{(t-1)}_{l,i} - \alpha \nabla_{f_\theta}\ell(f_{\theta}(x), y) \nabla_{\theta_{l,i}}f_{\theta}(x) \\
                     &= \theta^{(t-1)} - \alpha \nabla_{f_\theta}\ell(f_{\theta}(x), y)
                       \nabla_{\theta_{l,i}} \prod_{j=1}^{L}\theta_{j,i}^{(t-1)}x \\
                     &= \theta^{(t-1)}_{l,i} - \alpha \nabla_{f_\theta}\ell(f_{\theta}(x), y)
                        \prod_{j \not = l}\theta_{j,i}^{(t-1)}x. \\
  \end{align}

  Similarly, the gradient update for $\theta_{l', i}$ is

  \begin{align}
    \label{eq:2}
  \theta^{(t)}_{l',i} &= \theta^{(t-1)}_{l',i} - \alpha \nabla_{f_\theta}\ell(f_{\theta}(x), y)
                       \prod_{j \not = l'}\theta_{j,i}^{(t-1)}x \\
  \end{align}

  Using our assumption that  $\theta_{l,i}^{(t)} = \theta_{l',i}^{(t)}$, we set the two updates equal to eachother:

  \begin{align}
    \label{eq:3}
 \theta^{(t-1)}_{l,i} - \alpha \nabla_{f_\theta}\ell(f_{\theta}(x), y)
                        \prod_{j \not = l}\theta_{j,i}^{(t-1)}x =
    \theta^{(t-1)}_{l',i} - \alpha \nabla_{f_\theta}\ell(f_{\theta}(x), y)
    \prod_{j \not = l'}\theta_{j,i}^{(t-1)}x.
  \end{align}

  We can simplify both sides of the equations, where the LHS is
  \begin{align}
    \label{eq:4}
    &\theta^{(t-1)}_{l,i} - \alpha \nabla_{f_\theta}\ell(f_{\theta}(x), y)
      \prod_{j}\theta_{j,i}^{(t-1)}\frac{x}{\theta_{l',i}^{(t-1)}}\\
    &=     \theta^{(t-1)}_{l,i} \left(1 - \alpha \nabla_{f_\theta}\ell(f_{\theta}(x), y)
\prod_{j}\theta_{j,i}^{(t-1)}\frac{x}{\theta_{l',i}^{(t-1)} \theta_{l,i}^{(t-1)}}\right).
  \end{align}

  Similarly, the RHS of the equation is

  \begin{align}
    \label{eq:5}
    \theta^{(t-1)}_{l',i} \left(1 - \alpha \nabla_{f_\theta}\ell(f_{\theta}(x), y)
\prod_{j}\theta_{j,i}^{(t-1)}\frac{x}{\theta_{l',i}^{(t-1)} \theta_{l,i}^{(t-1)}}\right).
  \end{align}

  Notice that both expressions in the parenthesis on the LHS and RHS are equal. Thus,
  $\theta^{(t-1)}_{l',i} = \theta^{(t-1)}_{l,i}$

  $\Leftarrow$ The reverse direction follows directly by following the above argument in reverse.

\end{proof}

\begin{proof}[Proof of Lemma \ref{lem:unique}]
  $\Rightarrow$ We first prove the lemma in the forward direction:

  Assuming that $\bar \theta^{(t+1)}_{i} = \bar \theta^{(t)}_{i} = 0$, we will show that $\theta^{(t)}_{l,i} = \theta^{(t)}_{l',i} = 0$.

  We proceed by contradiction, and assume that only a single component is zero, that is
  $\theta^{(t)}_{l',i}  = 0$ and
  $\theta^{(t)}_{l,i} \not = 0$ for $l \not = l'$. We will show that the gradient update will ensure that $\theta^{(t+1)}_{i} \not = 0$

  First, consider the update to $\theta^{(t)}_{l',i}$,

  \begin{align}
    \label{eq:5}
    \theta^{(t+1)}_{l',i} &= \theta^{(t)}_{l',i} -  \alpha \nabla_{f_\theta}\ell(f_{\theta}(x), y)
                        \prod_{j \not = l'}\theta_{j,i}^{(t-1)}x\\
                          &=  -  \alpha \nabla_{f_\theta}\ell(f_{\theta}(x), y)
                        \prod_{j \not = l'}\theta_{j,i}^{(t-1)}x
  \end{align}

  Because we assumed that  $\theta^{(t)}_{l,i} \not = 0$ for $l \not = l'$, we have that
  $\prod_{j \not = l}\theta_{j,i}^{(t-1)} \not = 0$. Thus  $\theta^{(t+1)}_{l',i} \not = 0$

  Next consider the update to $\theta^{(t)}_{l,i}$,

  \begin{align}
    \label{eq:5}
    \theta^{(t+1)}_{l,i} &= \theta^{(t)}_{l',i} -  \alpha \nabla_{f_\theta}\ell(f_{\theta}(x), y)
                        \prod_{j \not = l}\theta_{j,i}^{(t-1)}x\\
                         &=  \theta^{(t)}_{l',i}
  \end{align}

  Where the last line follows from the fact that $\prod_{j \not = l}\theta_{j,i}^{(t-1)} = 0$ because $\theta^{(t)}_{l',i} = 0$.

  Thus, we have shown that $\theta_{l,i}^{(t+1)} \not = 0$ for all $l$, and hence, $\bar \theta^{(t+1)} \not = 0$ which is a contradiction.

  $\Leftarrow$ The reverse direction follows from the assumption directly.
  If two components are both equal to zero, $\theta^{(t)}_{l,i} = \theta^{(t)}_{l',i} = 0$, then every sub-product is zero, $\prod_{j \not = l}\theta_{j,i}^{(t-1)}$ and so is the entire product, $\prod_{j=1}^L\theta_{j,i}^{(t-1)}$.
\end{proof}

\begin{proof}[Proof of Theorem \ref{thm:deeplin}]

We now show that a diagonal linear network maintains a positive minimum singular value under continual learning.
This is a simplified setting for analysis, where we assume that the weight matrices are diagonal and thus the input, hidden, and output dimension are all equal. Let $f_{\theta}(x)$ be a diagonal linear network, defined by a set of diagonal weight matrices, $\theta_{l} = \text{Diag}(\theta_{l,1},\dotso, \theta_{l,d})$. The output of the diagonal linear network is the product of the diagonal matrices, $f_{\theta}(x) = \theta_{L}\theta_{L-1}\dots\theta_{1}x$.
Then the product matrix is also a diagonal matrix, whose diagonals are the products of the parameters of each layer, $\bar \theta = Diag(\prod_{l=1}^{L} \theta_{l,1},\dotso, \prod_{l=1}^{L} \theta_{l,d}) := Diag(\bar \theta_{1},\dotso, \bar\theta_{d})$.
The minimum singular value of a diagonal matrix is the minimum of its absolute values, $\sigma_{min}(\bar \theta) = \min_{i} |\bar \theta_{i}|$.
Thus, we must show that the minimum absolute value of the product matrix is never zero.

This follows immediately from Lemma \ref{lem:equality} and Lemma \ref{lem:unique}.
Taken together, these two lemmas state that with a random initialization
and under gradient dynamics, a diagonal linear network will not have more than
one parameter equal to zero. This means that the minimum singular value of the
product matrix will never be zero. Thus, we have shown that a diagonal linear
network trained with gradient descent, if initialized appropriately, will be
able to converge on any given task in a sequence.
\end{proof}

\begin{proof}[Proof of Proposition \ref{prop:adalin}]

We prove this by considering the remainder of a Taylor series on the given interval. Due to periodicity of $\sin(z)$ and $\cos(z)$, we can consider $z \in [-\pi, \pi]$ without loss of generality. We can further consider two
cases, either $z \in[-\pi, \nicefrac{-3\pi}{4}] \cup [\nicefrac{-\pi}{4}, \nicefrac{\pi}{4}]\cup[\nicefrac{3\pi}{4}, \pi]$ or
$h \in[\nicefrac{-3 \pi}{4}, \nicefrac{-\pi}{4}] \cup [\nicefrac{\pi}{4}, \nicefrac{3\pi}{4}]$. In the first case, $z$ is near a critical point of $\cos(z)$ and in the second case $z$ is near a critical point of $\sin(z)$.

We focus on a particular subcase, where $z \in [\nicefrac{-\pi}{4}, \nicefrac{\pi}{4}]$, which is close to a critical point of $\cos(z)$, but far from a critical point of $\sin(h)$ (the other cases follow a similar argument).

Because we know that $z \in [\nicefrac{-\pi}{4}, \nicefrac{\pi}{4}]$, by Taylor's theorem it follows that $\sin(z) = z + R_{1,0}(z)$, where $R_{1,0}(z) = \frac{\sin^{(2)}(c)}{2}z^{2}$ is the 1st degree Taylor remainder centered at $a=0$ for some $c \in [\nicefrac{-\pi}{4}, \nicefrac{\pi}{4}]$. In the case of a sinusoid, this can be upperbounded, $|R_{1,0}(z)| = |\frac{-\sin(c)}{2}z^{2}| <
\frac{1}{8\sqrt{2}} (\nicefrac{\pi}{4})^{2}$, using the fact that $|z| < \nicefrac{\pi}{4}$ and $\sin(c) < \nicefrac{1}{\sqrt{2}}$.

Thus, when $\cos(z)$ is close to a critical point, $\sin(z)$ is approximately linear. A similar argument holds for the other case, when $\sin(z)$ is close to a critical point, $\cos(z)$ is approximately linear. In this other case, the error incurred is the same.

\end{proof}

\begin{proof}[Proof of Corollary \ref{cor:adalin}]

  We prove this claim using induction.

  Base case: We want to show that a single layer that outputs Fourier features embeds a deep linear network. Using Proposition \ref{prop:adalin}, there exists one unit for each pre-activation that is approximately linear. Because each pre-activation is used in an approximately-linear unit, the single layer approximately embeds a deep linear network using all of its parameters.

  Induction step: Assume a deep Fourier network with depth $L-1$ embeds a deep linear network, we prove that adding an additional deep Fourier layer retains the embedded deep linear network. There are two cases to consider, corresponding to the units of the additional deep Fourier layer which are approximately-linear and the other units that are not approximately-linear

  Case 1 (approximately-linear units): For the additional deep Fourier layer, the set of approximately-linear units already embeds a deep linear network. Because linearity is closed under composition, the composition of the additional deep Fourier layer and the deep Fourier network with depth $L-1$ simply adds an additional linear layer to the embedded deep linear network, increasing its depth to $L$.

  Case 2 (other units): For the units that are not well-approximated by a linear function, we can treat them as if they were separate inputs to the deep Fourier network with depth $L-1$. The network's parameters associated with those inputs are, by the inductive hypothesis, already embedded in the deep linear network.

  Note that case 1 embeds the parameters of the additional deep Fourier layer into the deep Fourier network. Case 2 states that the parameters of the network associated with the nonlinear units of the additional deep Fourier layer are already embedded in the deep Fourier network by construction.

  Thus, a neural network composed of deep Fourier layers embeds a deep linear network.
\end{proof}

\newpage
\section{Empirical Details}
\label{appendix:empirical_details}
All of our experiments use 10 seeds and we report the standard error of the mean in the figures.
The optimiser used for all experiments was Adam, and after a sweep on each of the datasets over $[0.005, 0.001, 0.0005]$, we found that $\alpha = 0.0005$ was most performant.

We used the Adam optimizer \citep{kingma14_adam} for all experiments, settling on the default learning rate of $0.001$ after evaluating $[0.005, 0.001, 0.0005]$. Results are presented with standard error of the mean, indicated by shaded regions, based on 10 random seeds.

Dataset specifications and non-stationarity conditions:
\begin{itemize}
  \item For MNIST, Fashion MNIST and EMNIST: we use a random sample of $25600$ of the observations and a batch size of $256$ (unless otherwise indicated, such as the linearly separable experiment).

  \item For CIFAR10 and CIFAR1100: Full 50000 images for training, 1000 test images for validation, rest for testing. The batch size used was 250. Random label non-stationarity: 20 epochs per task, 30 tasks total. Labelnoise non-stationarity: 80 epochs, 10 tasks. Class incremental learning: 6000 iterations per task, 80 tasks. Note that the datasets on different tasks in the class incremental setting can have different sizes, and so epochs are not comparable.

  \item tiny-ImageNet: All 100000 images for training, 10000 for validation, 10000 for testing as per predetermined split. The batch size used was 250. Random label non-stationarity: 20 epochs per task, 30 tasks total. Pixel permutation non-stationarity: 60 epochs, 100 tasks. Class incremental learning: 10000 iterations per task, 80 tasks. Note that the datasets on different tasks in the class incremental setting can have different sizes, and so epochs are not comparable.

\end{itemize}

\paragraph{Neural Network Architectures}
For tiny-ImageNet, CIFAR10, CIFAR100, and SVHN2: We utilized standard ResNet-18 with batch normalization and a standard tiny Vision Transformer.
The smaller datasets use an MLP with different widths and depths, as specified in the scaling section.

\newpage
\section{Additional Experiments}

These additional experiments validate the benefits of adaptive-linearity as a means of improving trainability.
The experiments use the following datasets for continual supervised learning: MNIST \citep{lecun98_mnist}, Fashion MNIST \citep{xiao17_fashion_mnist}, and EMNIST \citep{cohen17_emnis}.
We focus primarily on the problem of trainability, and thus consider random label non-stationarity, in which the labels are randomly assigned to each observation and must be memorized on each task. This type of non-stationarity is particular difficulty in sustaining trainability in continual learning \citep{lyle23_under_plast_neural_networ,kumar23_maint_plast_regen_regul}.
We compare our adaptively-linear network against a corresponding nonlinear feed-forward neural network with \texttt{ReLU} activations with the same depth. Because the adaptively-linear network uses a concatenation of two different activation functions, the adaptively-linear network has half the width of the nonlinear network and less parameters, which provides an advantage to the nonlinear baseline.

\begin{figure}
  \centering
\includegraphics[width=0.32\linewidth]{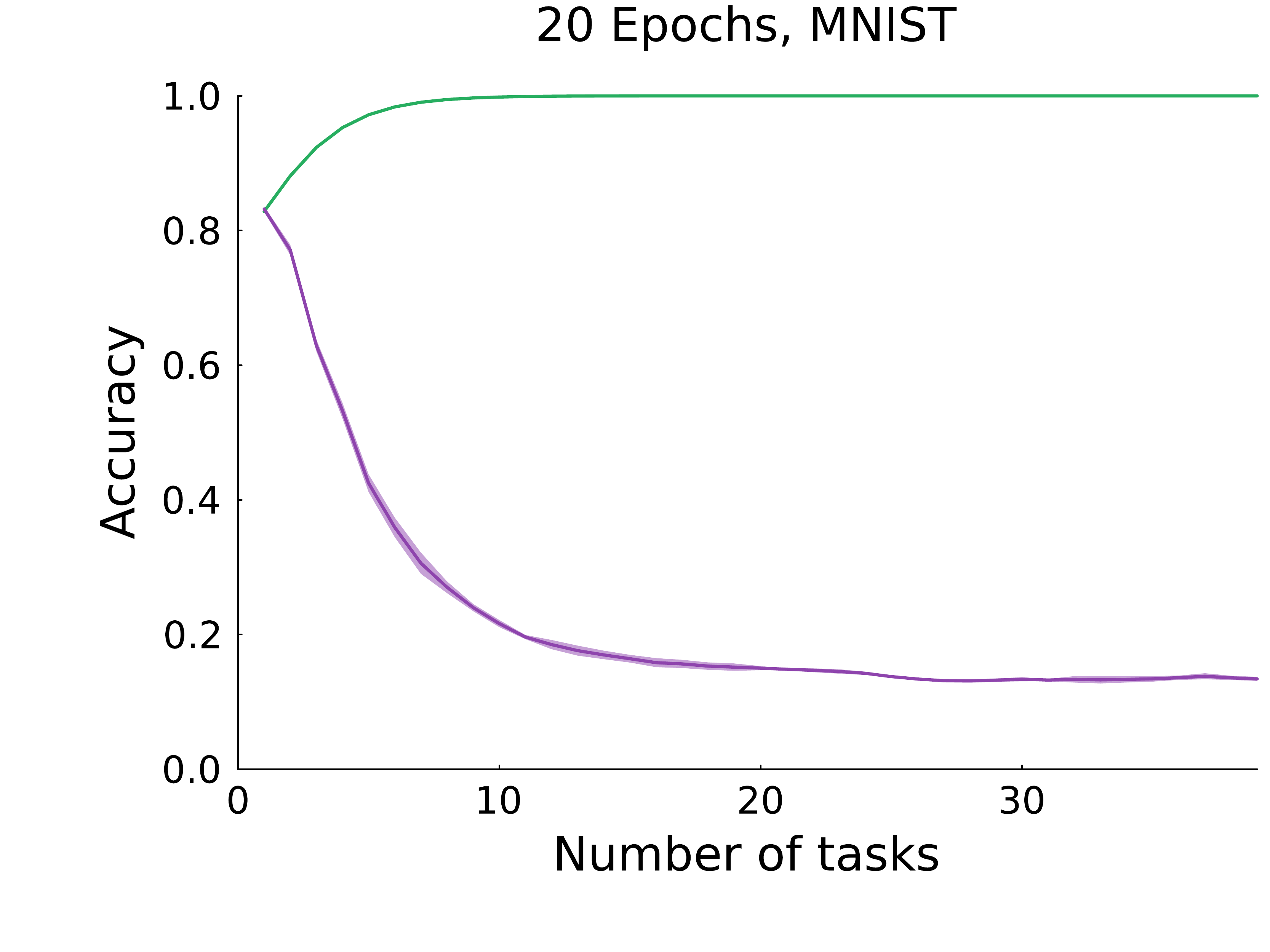}
\includegraphics[width=0.32\linewidth]{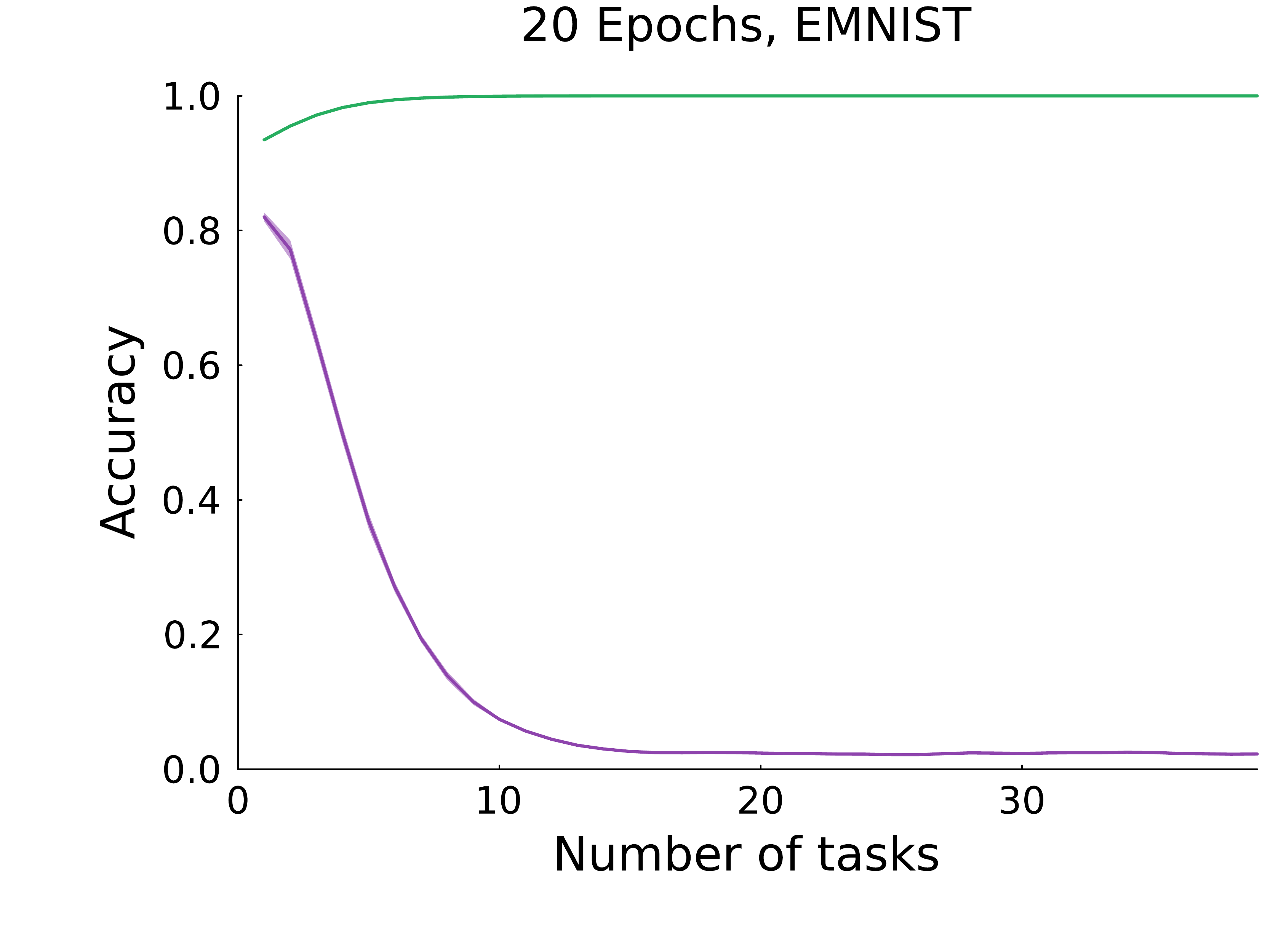}
\includegraphics[width=0.32\linewidth]{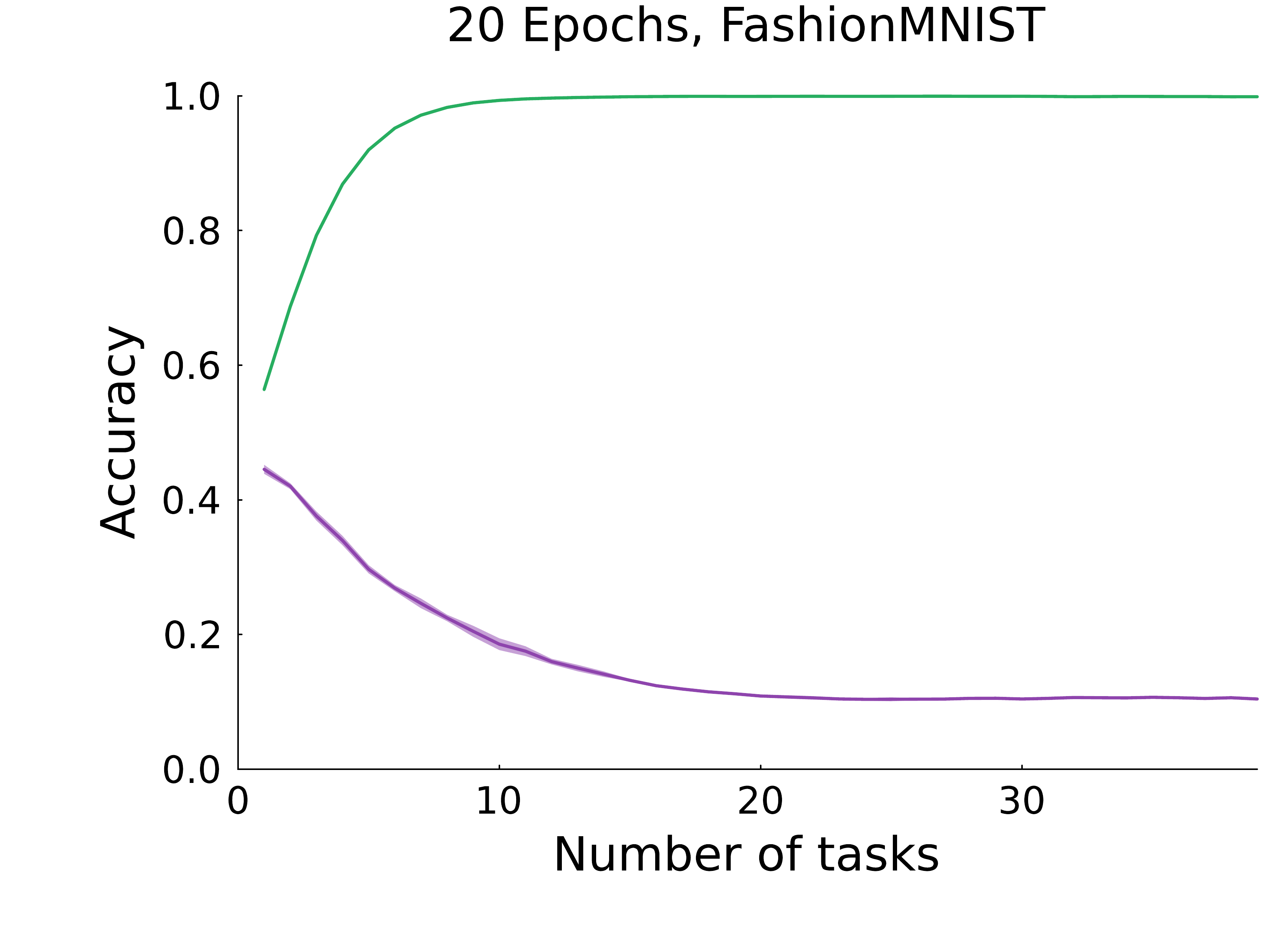}\\
\includegraphics[width=0.32\linewidth]{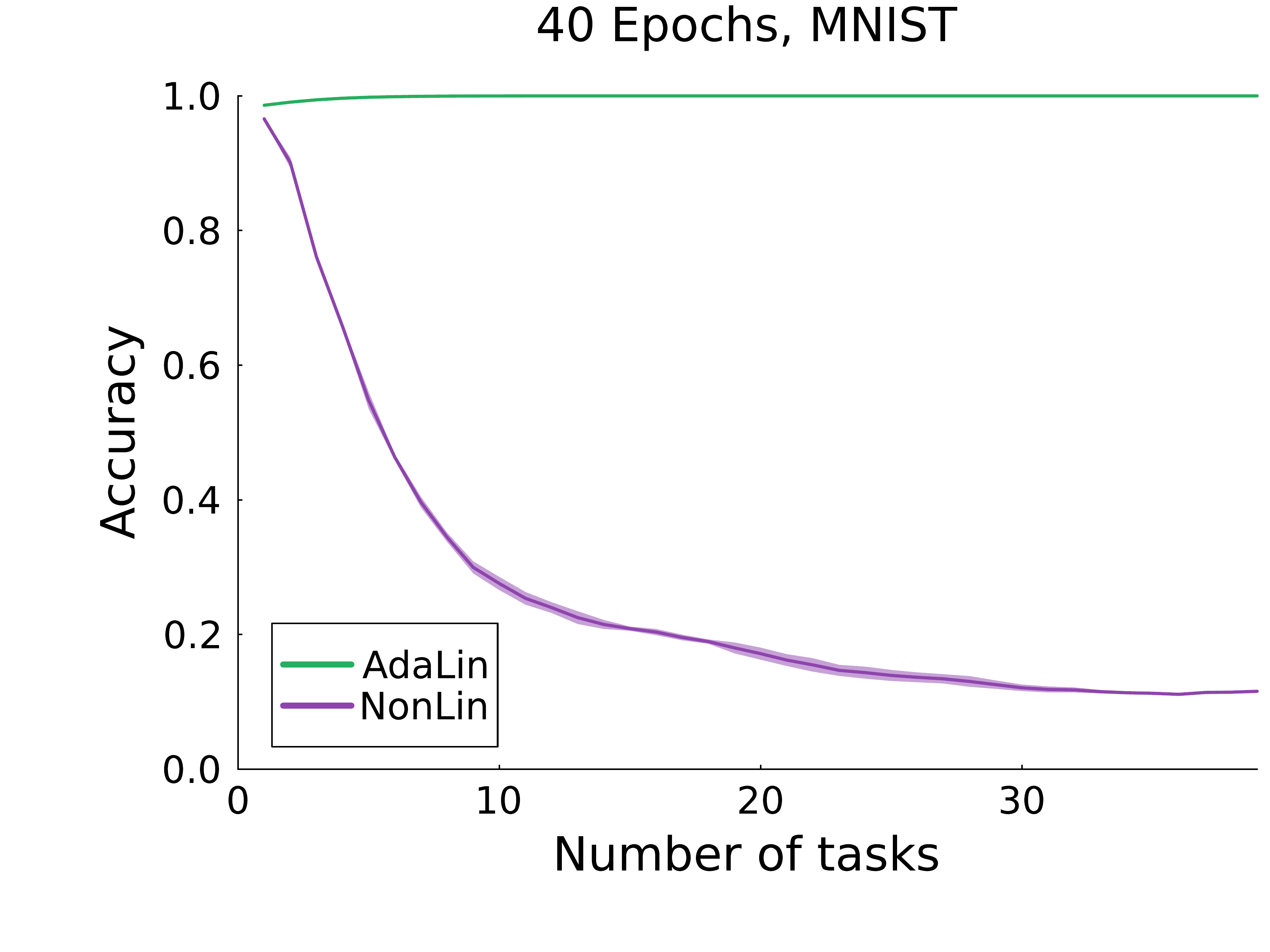}
\includegraphics[width=0.32\linewidth]{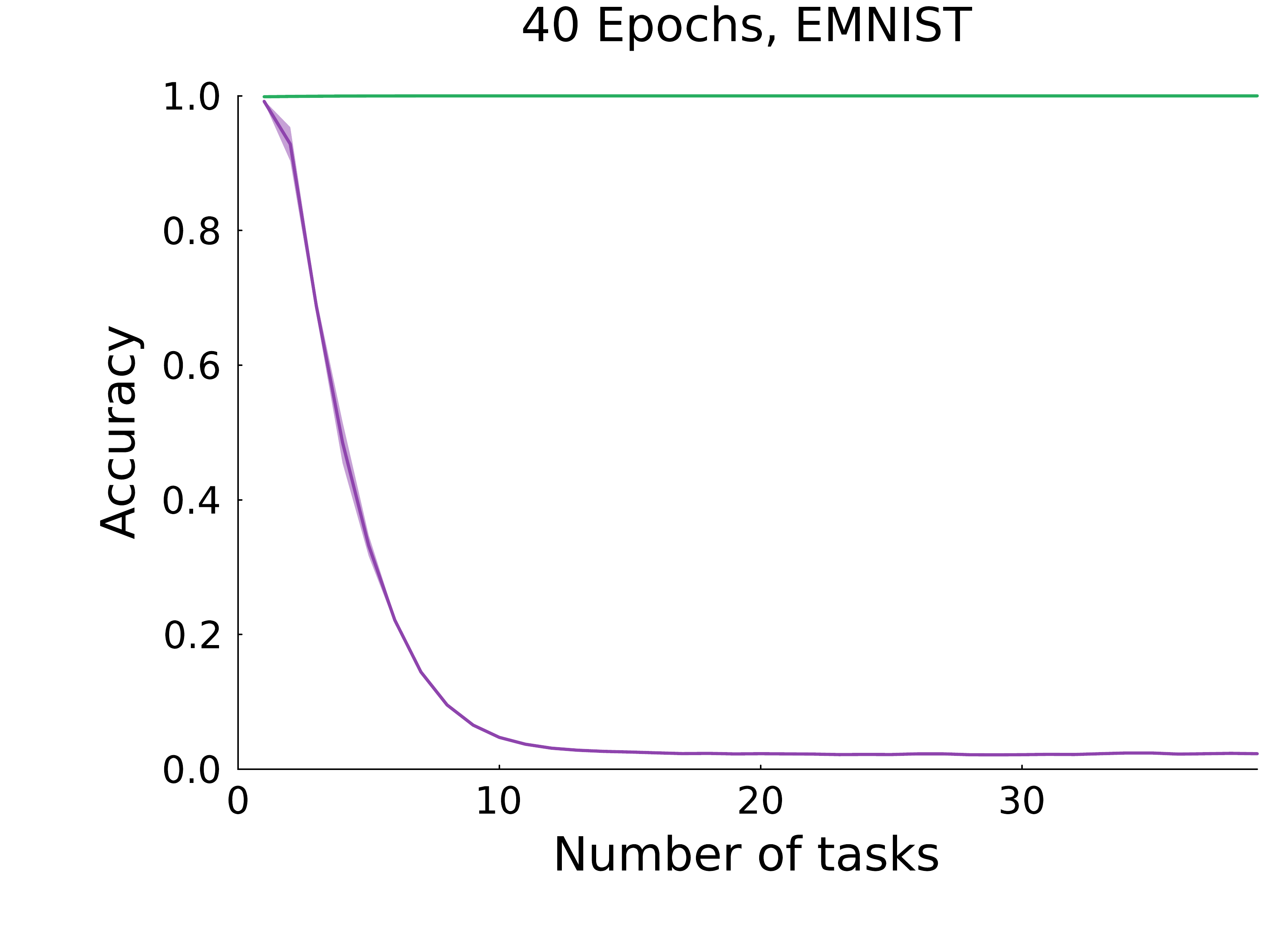}
\includegraphics[width=0.32\linewidth]{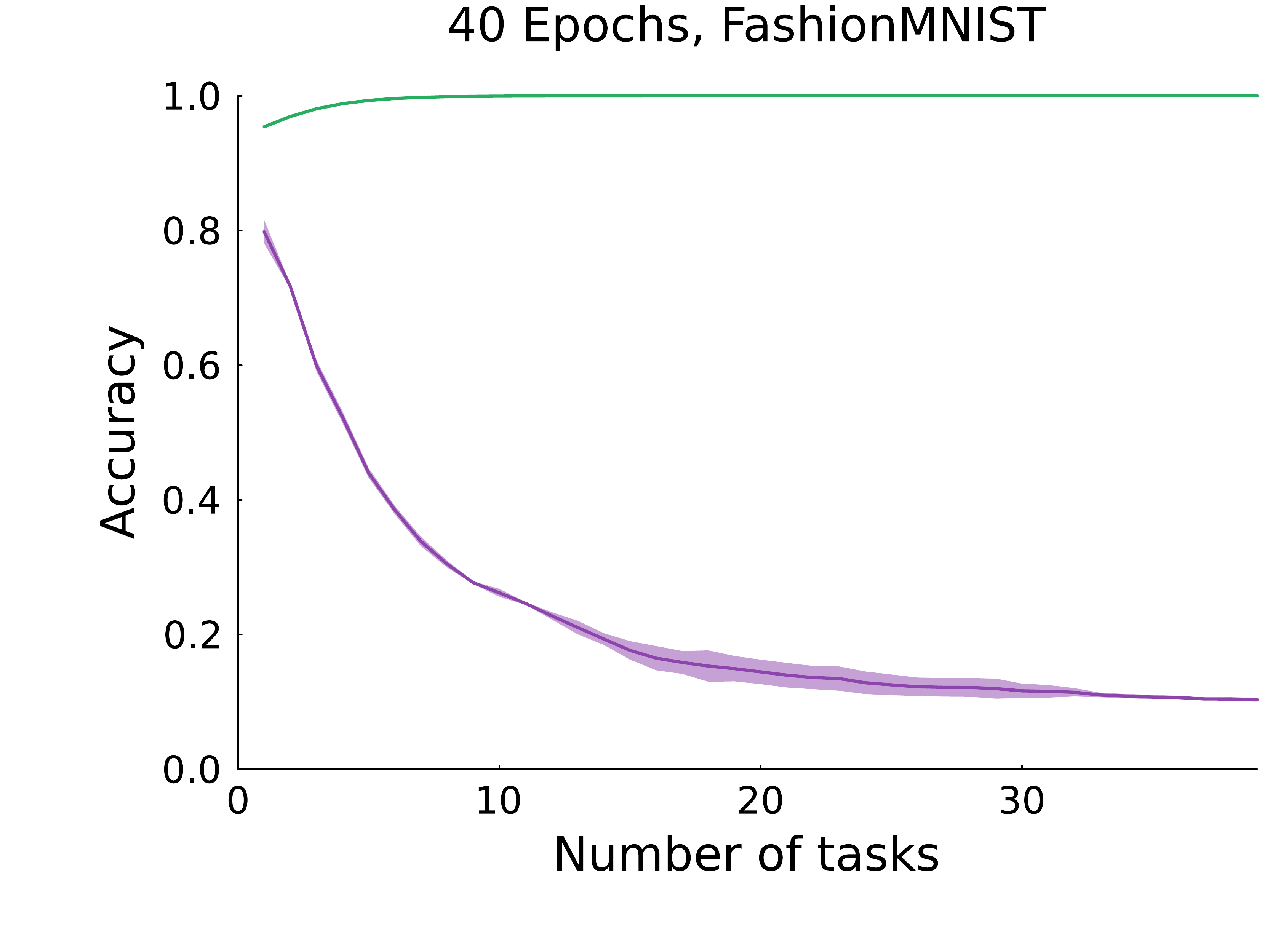}\\
\caption{\textbf{Trainability across different datasets and epochs per tasks.} Nonlinear networks lose their trainability, whereas adaptively-linear networks improve and sustain their trainability}
  \label{fig:noln}
\end{figure}

\subsection{Adaptively-Linear Networks are Highly Trainable}
The main result of this appendix is presented in Figure~\ref{fig:noln}.
Across different datasets, almost-linear networks are highly trainable, either
achieving high accuracy and maintaining it on easier tasks, such as MNIST, or
improving their trainability on new tasks, such as on Fashion MNIST. In
contrast, the nonlinear network suffered from loss of trainability in each of
the problems that we studied. This is not surprising, as loss of trainability is
a well-documented issue for nonlinear networks without some additional method
designed to mitigate it
\citep{dohare21_contin_backp,lyle22_under_preven_capac_loss_reinf_learn,kumar23_maint_plast_regen_regul,elsayed24_addres_loss_plast_catas_forget_contin_learn}.

\newpage
\subsection{Methods for Improving Trainability}
Given that a nonlinear network is unable to maintain its trainability in
isolation, we investigate whether recently proposed methods for mitigating loss
of trainability are able to make up for the difference in performance between an
adaptively linear network and a nonlinear network. We investigate two
categories of mitigators for loss of plasticity: (i) regularization and (ii)
normalization layers. \looseness=-1

\paragraph{Regularization}

\begin{figure}
  \centering
    \includegraphics[width=0.49\linewidth]{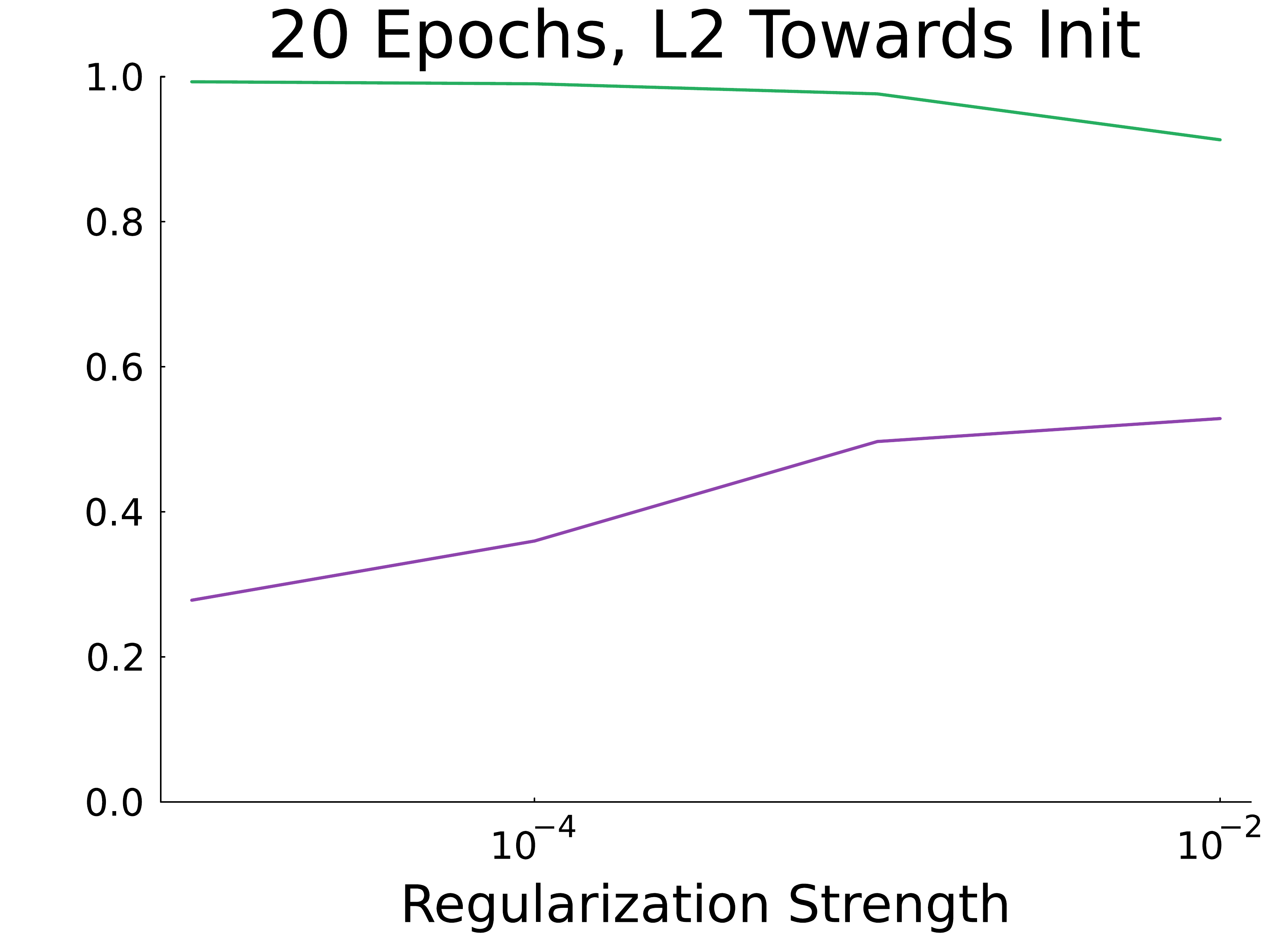}
    \includegraphics[width=0.49\linewidth]{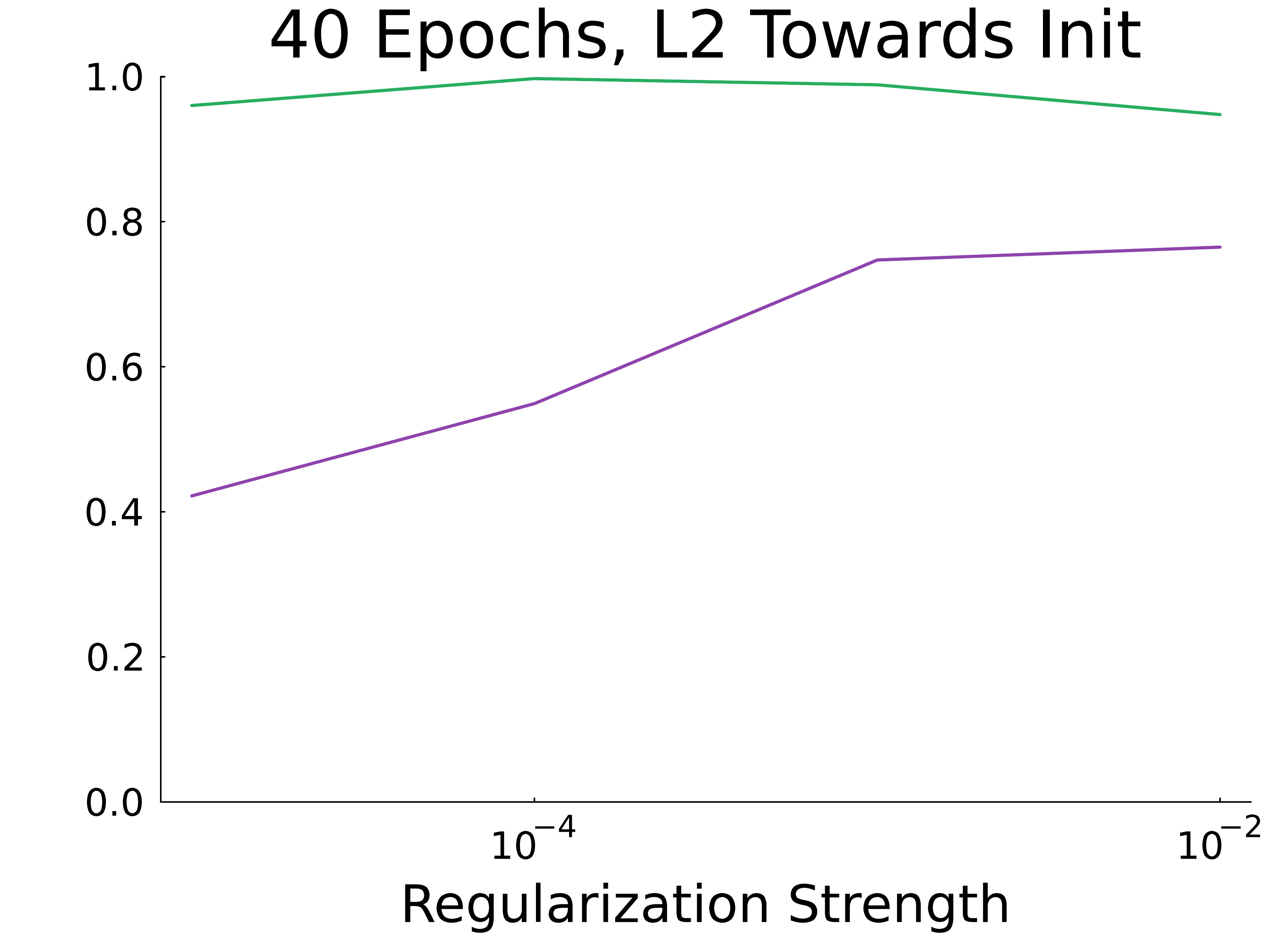}
    \caption{\textbf{Hyperparameter Sensitivity Analysis.} Adaptively-linear networks seem to not benefit from regularization. While nonlinear networks are more trainable with regularization, their performance is still worse than the adaptively-linear network.}
  \label{fig:mnist_reg_sensitivity}
\end{figure}

Loss of plasticity occurs in nonlinear networks when they are not regularized.
Thus, we compare the performance of the nonlinear network and the
adaptively-linear network with varying regularization strengths. In particular,
we use the recently proposed L2 regularization towards the initialization
\citep{kumar23_maint_plast_regen_regul}, because it addresses the issue of
sensitivity towards zero common to L2 regularization towards zero. In Figure
\ref{fig:mnist_reg_sensitivity}, we find that regularization does improve the
trainability of nonlinear networks, validating previous empirical findings.
However, we found that almost-linear networks do not benefit substantially from
regularization. That is, almost-linear network with a smaller regularization
strength always outperformed the nonlinear network.

\paragraph{Layer Normalization}
Training deep neural networks typically involve normalization layers, either Batch Normalization \citep{ioffe15_batch} or Layer Normalization \citep{ba16_layer_normal}. Recently, it was demonstrated that layer normalization is an effective mitigator for loss of trainability \citep{lyle24_disen_causes_plast_loss_neural_networ}.
We investigate whether trainability can be improved with the addition of normalization layers, for both the nonlinear and adaptively-linear network. In Figure \ref{fig:ln},
we found that layer normalization increases performance but that loss of trainability can still occur with a nonlinear network. In addition to Layer Normalization, we also tried a linear version of LayerNorm which uses a stop-gradient on the standard deviation to maintain linearity, which improved training speed in some instances. \looseness=-1

\begin{figure}
  \centering
\includegraphics[width=0.32\linewidth]{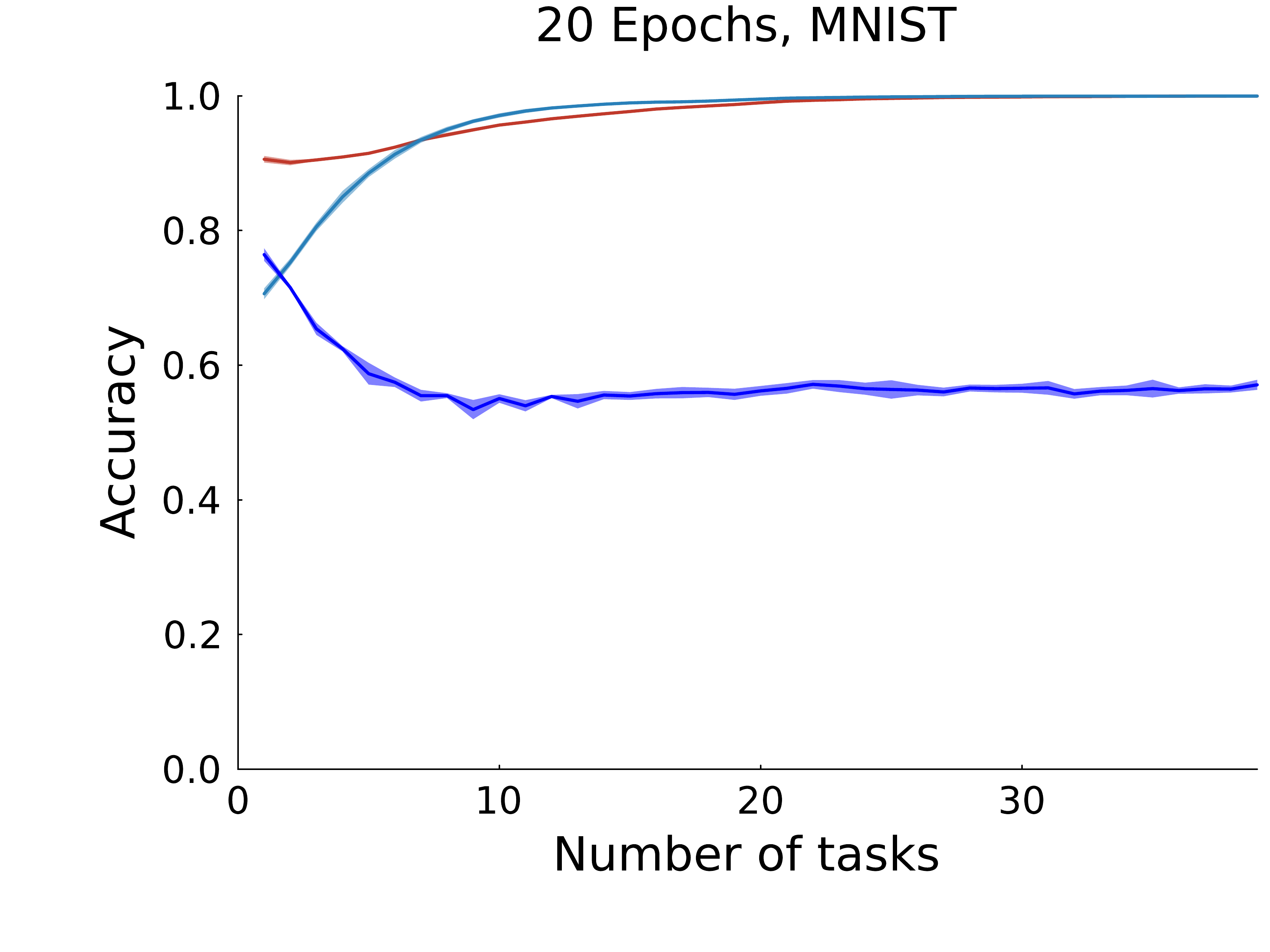}
\includegraphics[width=0.32\linewidth]{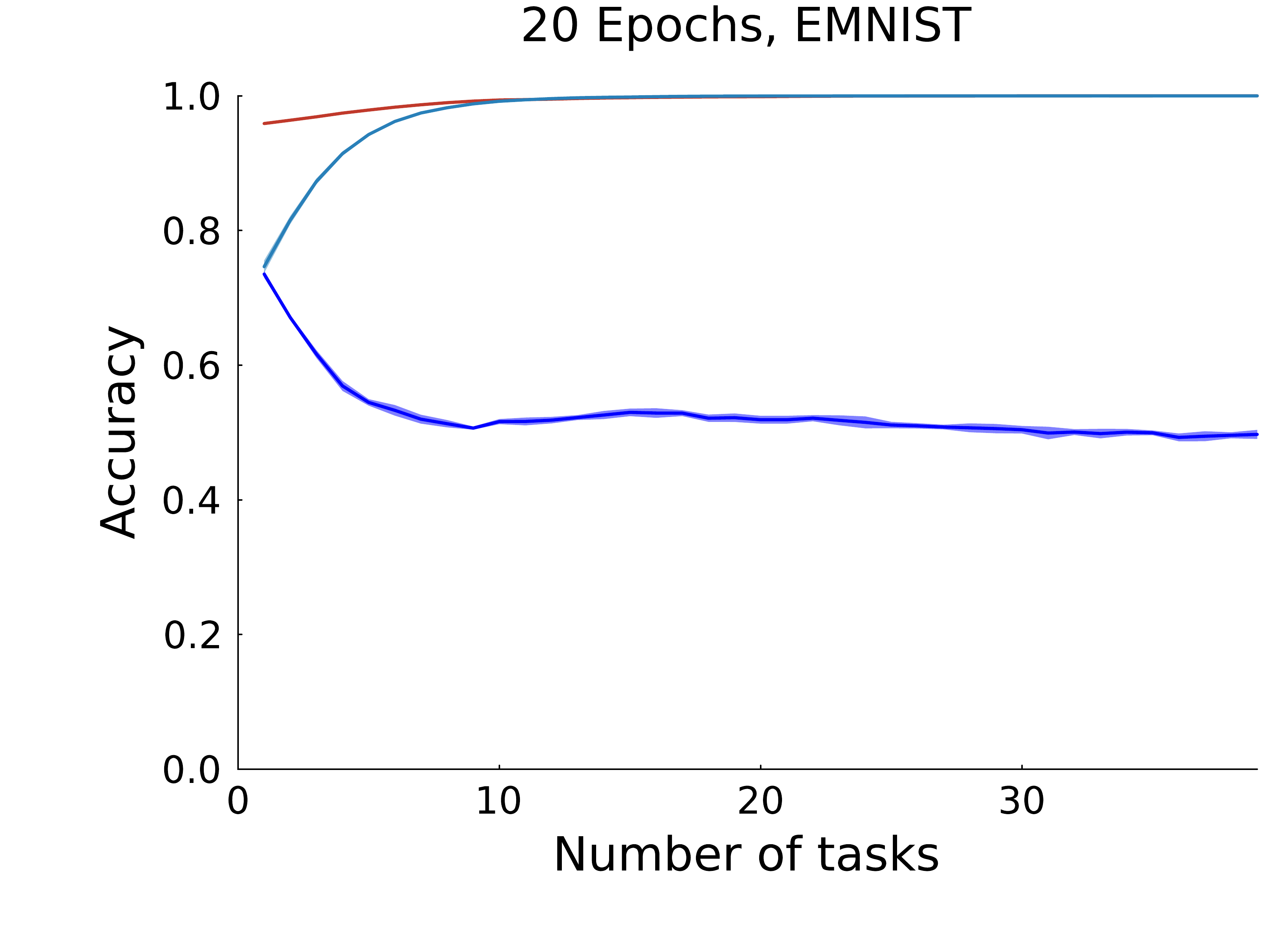}
\includegraphics[width=0.32\linewidth]{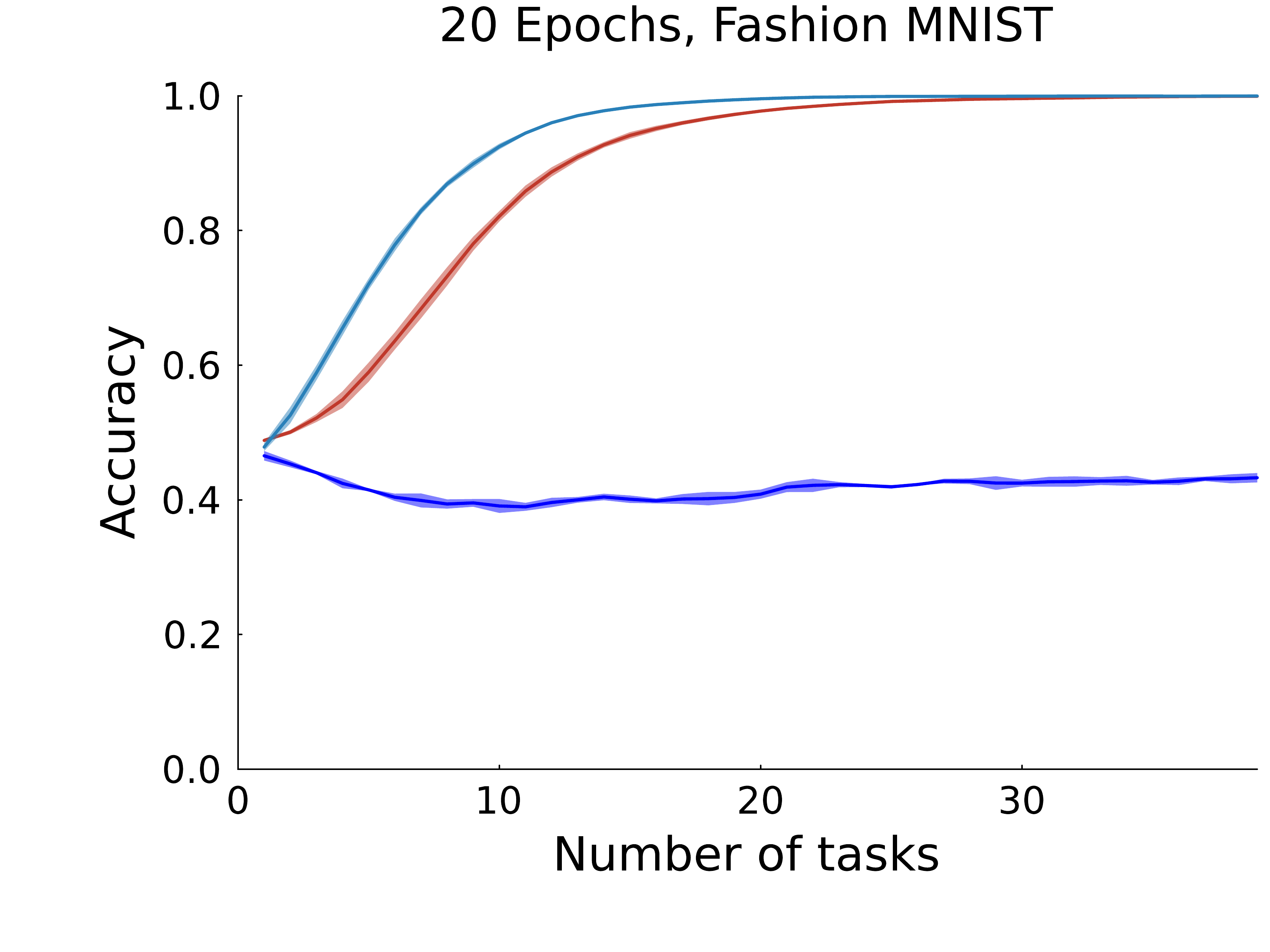}\\
\caption{\textbf{Comparison of trainability with Layer Normalization.} Nonlinear networks are more trainable with Layer Normalization, but adaptively-linear networks learn faster and achieve better accuracy, particularly with linearized Layer Norm.}
  \label{fig:ln}
\end{figure}

\newpage
\subsection{Scaling Properties of Almost-Linear Networks}

\begin{figure}
  \centering
    \includegraphics[width=0.33\linewidth]{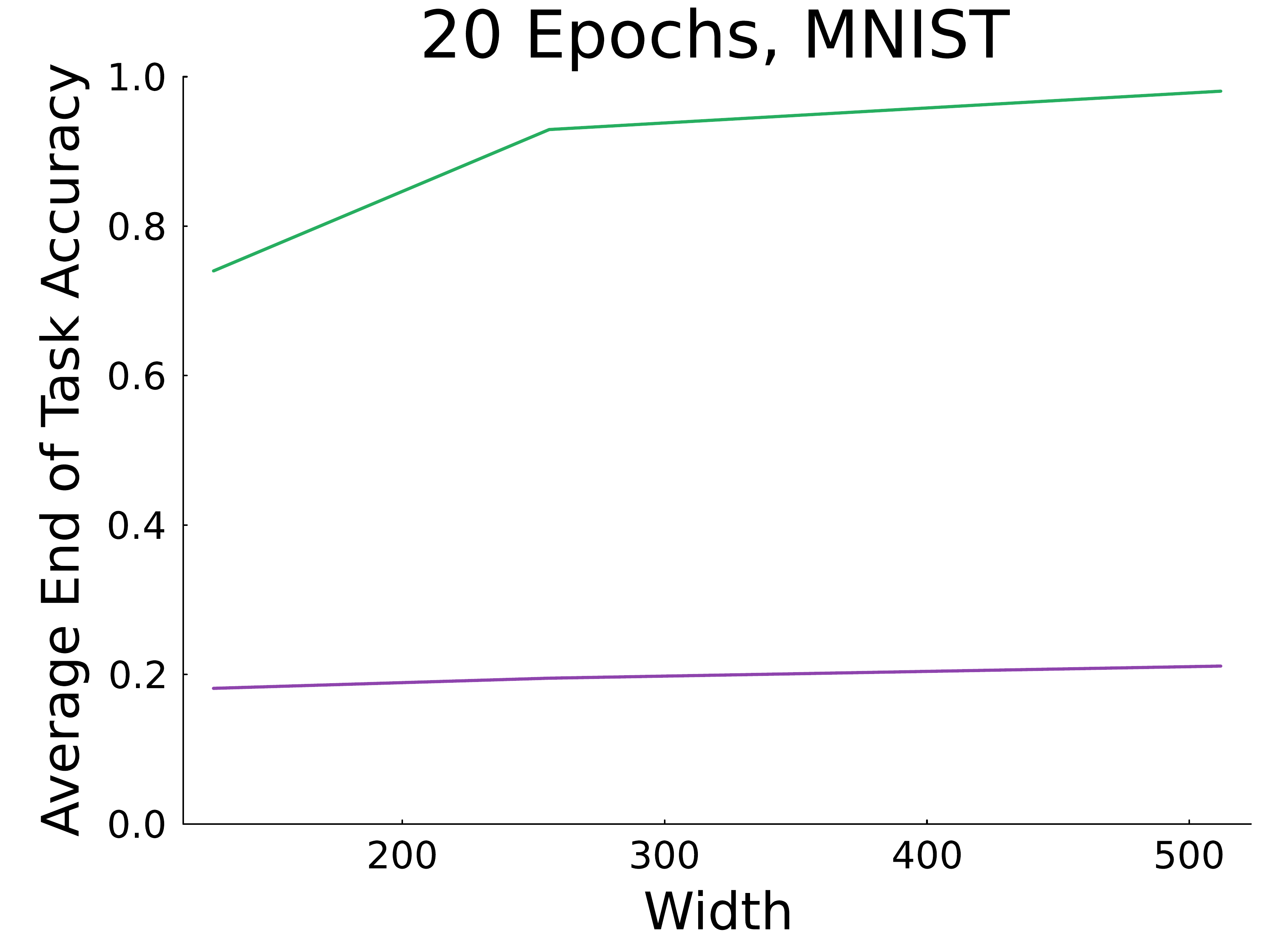}
    \includegraphics[width=0.32\linewidth]{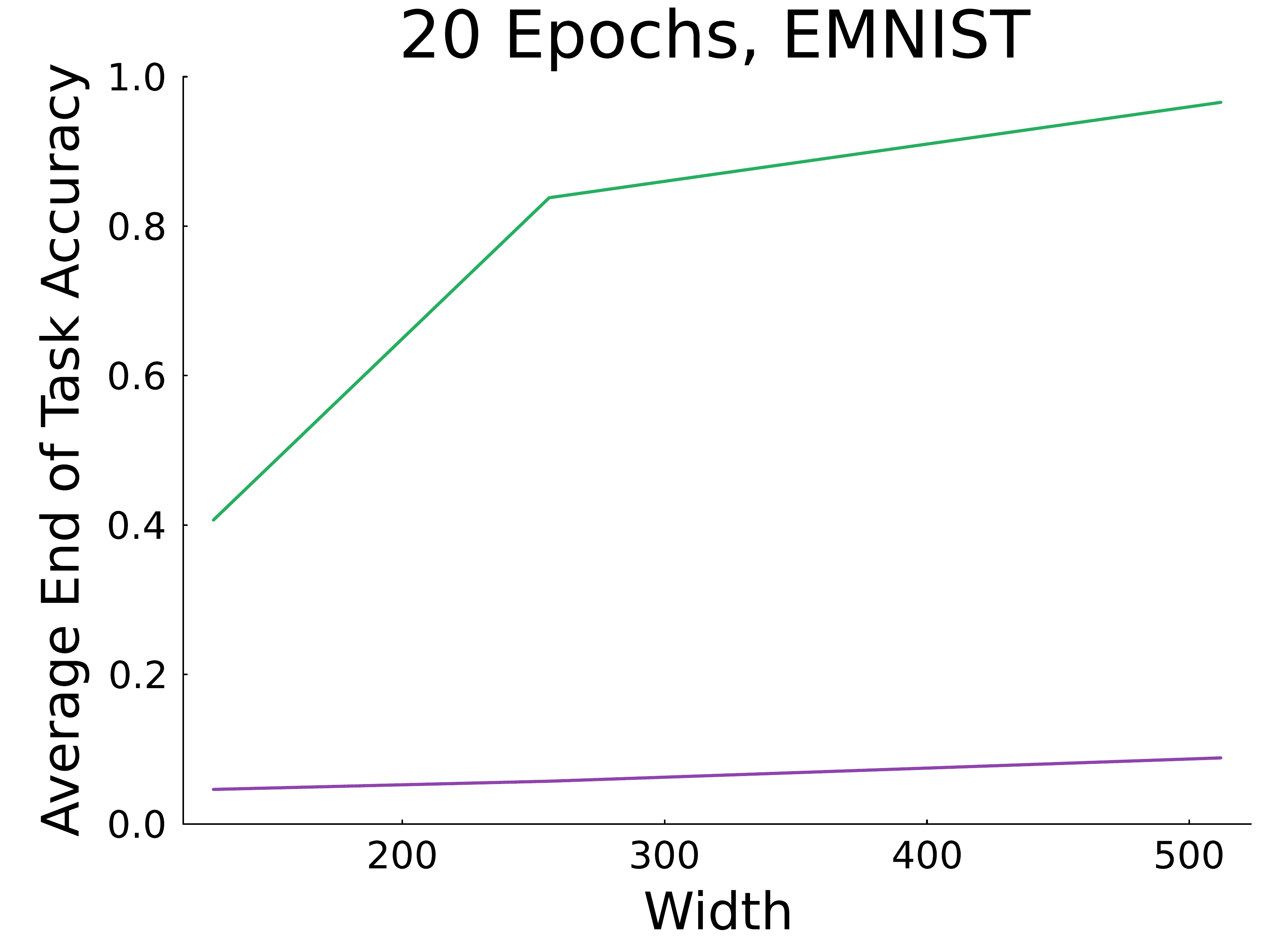}
    \includegraphics[width=0.33\linewidth]{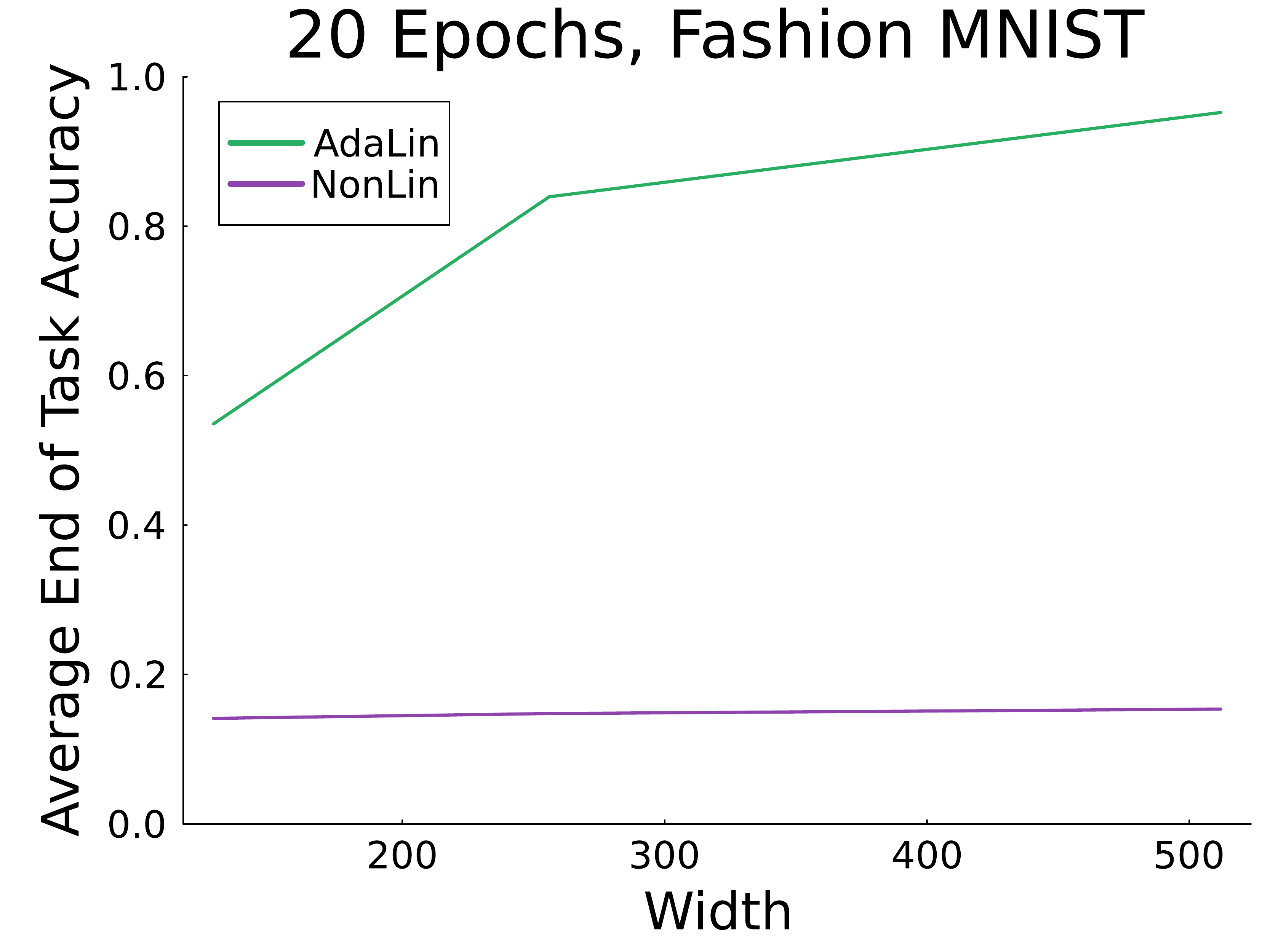}
    \includegraphics[width=0.33\linewidth]{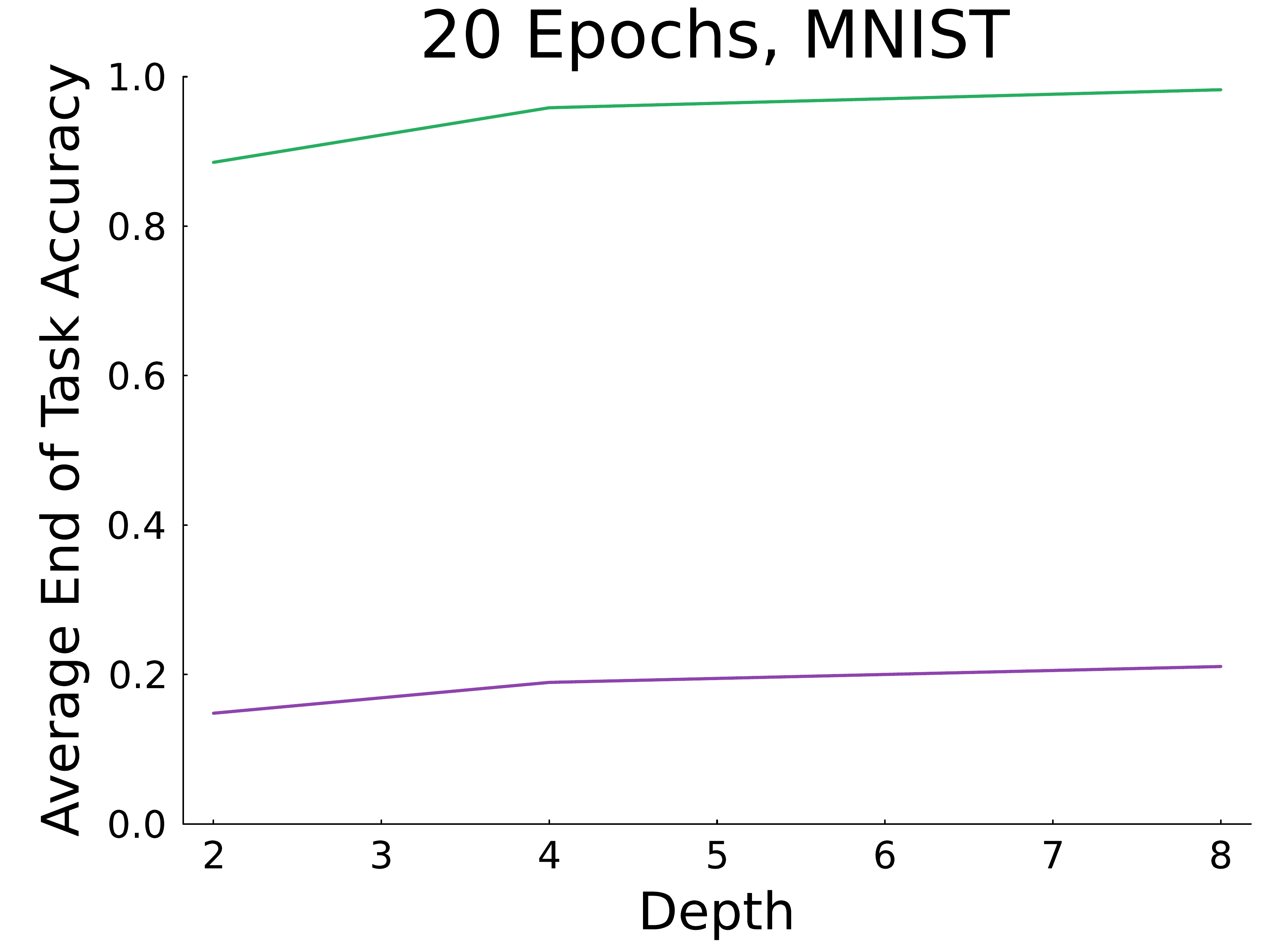}
    \includegraphics[width=0.32\linewidth]{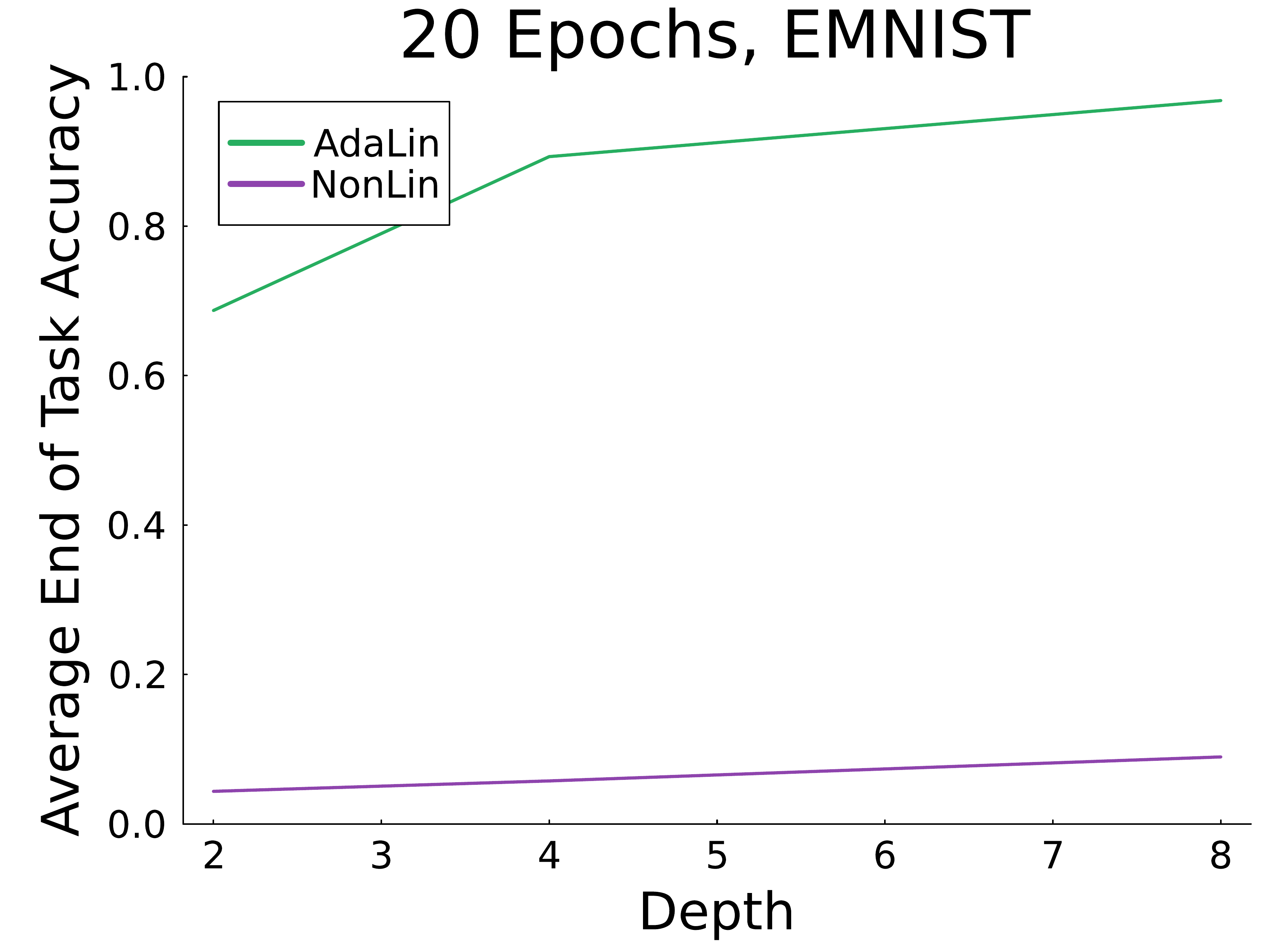}
    \includegraphics[width=0.33\linewidth]{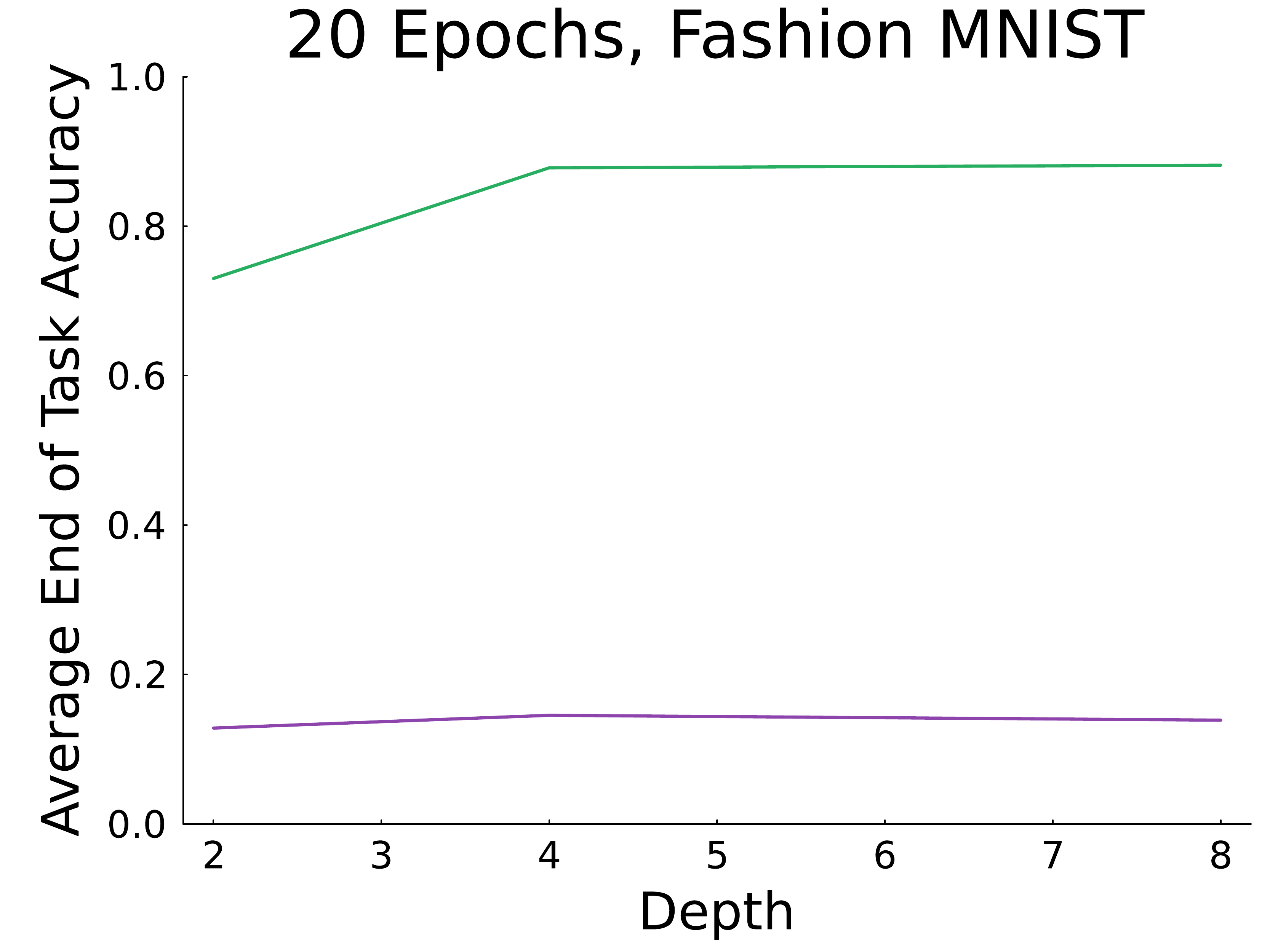}
    \caption{\textbf{Scaling Neural Network Width and Depth.}  (Top) Due to the concatenation used by the activation function in adaptively-linear networks, they scale particularly well with width.
      (Bottom) Deeper adaptively-linear networks also lead to improved average end of task performance.}
  \label{fig:widthdepth}
\end{figure}

\paragraph{Width Scaling}
Another source of linearity recently proposed is an increasing width of the neural network, causing their parameter dynamics evolves as linear models in the limit \citep{lee19_wide}.
We investigate whether an increase in width can close the gap between the trainability of the nonlinear network and the almost-linear network.
In Figure~\ref{fig:widthdepth} (Top), we found that adaptively-linear networks scale particularly well with width, whereas width seems to have little effect on the trainability of nonlinear networks.
Thus, our results suggest that increasing the width of a neural network does not necessarily impact its trainability, at least not to the width values we considered.

\paragraph{Depth Scaling}
Neural networks in supervised learning tend to scale with depth, allowing them
to learn more complex predictions. We investigate whether the depth scaling of
almost-linear networks also leads to similar improvements in continual learning.
In Figure~\ref{fig:widthdepth} (Bottom), we found that adaptively-linear
networks do improve with additional depth, but the degree of improvement was not
as pronounced as scaling the width.


\end{document}